\documentclass[11pt, a4paper, google]{google}

\usepackage[authoryear, sort&compress, round]{natbib}
\usepackage[T1]{fontenc}
\usepackage[utf8]{inputenc}
\usepackage{titletoc}

\usepackage{amsmath,amssymb,amsthm}
\DeclareFontFamily{U}{msb}{}
\DeclareFontShape{U}{msb}{m}{n}{<-> msbm10}{}
\usepackage{booktabs,tabularx,array,multirow}
\usepackage{graphicx,xcolor}
\usepackage{wrapfig}
\usepackage{tikz}
\usepackage{placeins}
\usepackage{xspace}
\usepackage{algorithm,algpseudocode}
\usepackage{listings}
\usepackage{hyperref,xurl}
\usepackage{colortbl}
\usepackage{comment}
\usepackage{adjustbox}

\definecolor{GoogleBlue}{HTML}{4285F4}
\definecolor{GoogleRed}{HTML}{EA4335}
\definecolor{GoogleYellow}{HTML}{FBBC05}
\definecolor{GoogleGreen}{HTML}{34A853}
\definecolor{Ink}{HTML}{243247}
\providecommand{\resultcell}[2]{#1{\scriptstyle\pm #2}}
\providecommand{\bestscore}[1]{\mathbf{#1}}
\newsavebox{\ARCselectiontablebox}

\newcommand{\KL}{\operatorname{KL}}

\newcommand{\sg}{\operatorname{sg}}

\newcommand{\noadv}{\texttt{<NO\_ADVICE>}}

\newcommand{\codepath}[1]{\nolinkurl{#1}}

\newcolumntype{Y}{>{\raggedright\arraybackslash}X}

\newtheorem{arcassumption}{Assumption}
\newtheorem{arctheorem}{Theorem}

\newtheorem{arclemma}{Lemma}
\newtheorem{arccorollary}{Corollary}
\newtheorem{arcrestatedlemma}{Lemma}

\theoremstyle{definition}

\newcommand{\arcE}{\mathbb E}

\newsavebox{\ARCpanelbody}
\newenvironment{ARCPanel}[2]{%
  \par\addvspace{9pt}\begingroup
  \def\ARCpanelcolor{#1}\def\ARCpaneltitle{#2}%
  \def\ARCpaneltext{white}\def\ARCgold{GoogleYellow}%
  \ifx\ARCpanelcolor\ARCgold\def\ARCpaneltext{Ink}\fi
  \begin{lrbox}{\ARCpanelbody}%
  \begin{minipage}{\dimexpr\linewidth-17pt\relax}%
  \small\raggedright\setlength{\parindent}{0pt}\setlength{\parskip}{3pt}%
}{%
  \end{minipage}\end{lrbox}%
  \noindent\begin{tikzpicture}
    \node[draw=\ARCpanelcolor,fill=\ARCpanelcolor!5,
      line width=.7pt,rounded corners=3pt,inner sep=8pt,
      outer sep=0pt,anchor=north west] (body) at (0,0)
      {\usebox{\ARCpanelbody}};
    \path[fill=\ARCpanelcolor,rounded corners=3pt]
      ([yshift=18pt]body.north west) rectangle
      ([yshift=-2pt]body.north east);
    \node[anchor=west,inner sep=0pt,text=\ARCpaneltext,
      font=\rmfamily\bfseries\small] at
      ([xshift=8pt,yshift=8pt]body.north west) {\ARCpaneltitle};
  \end{tikzpicture}\par\endgroup
}
\lstdefinestyle{ARCprompt}{%
  basicstyle=\ttfamily\small,frame=none,backgroundcolor={},
  aboveskip=0pt,belowskip=0pt,xleftmargin=0pt,xrightmargin=0pt,
  breaklines=true,breakautoindent=false,columns=fullflexible,
  keepspaces=true,showstringspaces=false}
\lstdefinestyle{ARCtrace}{%
  basicstyle=\ttfamily\small,frame=none,backgroundcolor={},
  aboveskip=1pt,belowskip=1pt,xleftmargin=0pt,xrightmargin=0pt,
  breaklines=true,breakautoindent=false,columns=fullflexible,
  keepspaces=true,showstringspaces=false}
\newcommand{\ARCevent}[2]{%
  \par\addvspace{4pt}\noindent
  {\color{#1!75!black}\rmfamily\bfseries\small[#2]}\par\nobreak}

\newcolumntype{C}{>{\centering\arraybackslash}X}
\makeatletter
\newcommand{\ARCcaptionof}[1]{\def\@captype{#1}\caption}
\makeatother

\uselogo{} 

\newcommand{\method}{\textsc{AdviSD}\xspace}

\title{\method: Learning to Advise Frontier LLMs via Targeted Multi-Turn Self-Distillation}

\correspondingauthor{rishabha@usc.edu, \{hejiec, shashali, shanchanwu, soarik\}@google.com \\ \textup{* This work was done while Rishabh was a Student Researcher at Google.}}

\renewcommand{\today}{}

\author[1, 2*]{Rishabh Agrawal}
\author[1]{Hejie Cui}
\author[1]{Shasha Li}
\author[1]{Shanchan Wu}
\author[1]{Sercan \"{O}. Ar{\i}k}

\affil[1]{\thepa{}{}}
\affil[2]{University of Southern California}

\begin{abstract}
A small trainable advisor can steer a frozen language-model executor using natural-language advice. In addition to learning from task rewards, the advisor can use feedback from completed interactions to improve its advice. However, a plausible correction need not change execution, yet learning from such corrections can still affect the advisor's future decisions in other contexts. In a shared-parameter model, we prove that such corrections can limit learning if their targets favor useful advice less strongly than those of other corrections. Keeping them less often than the rest improves the model's eventual performance compared to learning from every correction. Motivated by this, our method, Advisor Self-Distillation (\method), pairs outcome-based reinforcement learning with self-distillation from a feedback-conditioned copy of the advisor selectively. Reflection proposes corrections, and the advisor scores the same recorded executor response with and without its issued advice, using the magnitude of the difference to select decisions for supervision. This approach does not require executor likelihoods or additional executor rollouts. Experiments with Qwen3-8B advisors for Gemini and Claude show that \method outperforms advisor-GRPO by 4.2–6.4 percentage points on BFCL-v3 and by 3.9–5.1 score points on EnvScaler. The trained advisors generalize to out-of-domain tasks and transfer across different executor versions and model families. \method also beats matched-count random selection, supporting the value of its selection rule.
\end{abstract}

\begin{document}

\maketitle

\section{Introduction}
\label{sec:introduction}

Frontier language models are usually served through APIs that accept
queries but do not let users change the model weights. When such a
model acts as an agent, a smaller trainable advisor can adapt it from
the outside: the advisor observes the interaction and recommends what
the frozen agent, which we call the executor, should do next
\citep{li2023guiding,NEURIPS2025_3f055edc}. The advisor can be trained
with reinforcement learning on the rewards of completed interactions
\citep{asawa2026how}. However, an episode reward summarizes task
performance without specifying which of the advisor's decisions should
have been different, or how.

Completed interactions contain more specific evidence: executor
responses, tool results, and task checks. These observations and the
episode reward form the feedback that reflection uses to propose
revisions \citep{liu2026hero,yeo2026hint}. In feedback-conditioned
self-distillation, a teacher that sees this feedback supervises a
student that sees only the original context
\citep{hubotter2026reinforcement,agrawal2026reinforcement}. For an advisor, however,
the learned advice acts through another model: revising it need not
change what the executor does.

Consider an executor that searches reliably for suitable flights
without guidance but sometimes guesses the passenger identifier when
booking. After a reservation fails because that identifier is wrong,
reflection may propose looking up the passenger and using the returned
identifier in the booking. It may also propose more detailed advice for
the earlier flight search. Both revisions are valid, but their
usefulness differs for this executor: the booking revision addresses
the error, whereas the search revision elaborates a procedure the
executor already follows unaided. The teacher's supervision can
reflect both revisions, and learning from the redundant search
revision can still change the advisor's shared parameters and affect
advice elsewhere, for better or worse. Hence, we ask which
revisions provide useful supervision for a given advisor--executor
pair. Our contributions are as follows:

\textbf{A theoretical account of correction selection.}
Lemma~\ref{arc:lem:channel} separates agreeing with a feedback-conditioned
teacher from improving execution. We then study repeated learning in a
shared-parameter model where the teacher's preferences and the
executor's behavior under any given advice stay fixed during training.
As advice improves, preventable failures become less frequent, while
failures unaffected by advice persist. If corrections from these
persistent failures teach a weaker preference for useful advice, their
growing share of supervision limits learning. Retaining them less often
than other corrections raises the performance the advisor eventually
reaches, whether or not reward learning is added
(Theorems~\ref{arc:thm:limit}--\ref{arc:thm:combined}). By contrast,
randomly discarding corrections at the same rate across both types
leaves eventual performance unchanged when learning only from
corrections, but can improve it alongside reward learning
(Corollary~\ref{arc:cor:exposure}). Therefore, our matched-count random
control tests whether targeted selection helps beyond reducing
supervision (Section~\ref{sec:ablations}).

\begin{figure}[t]
\centering
\includegraphics[width=\linewidth]{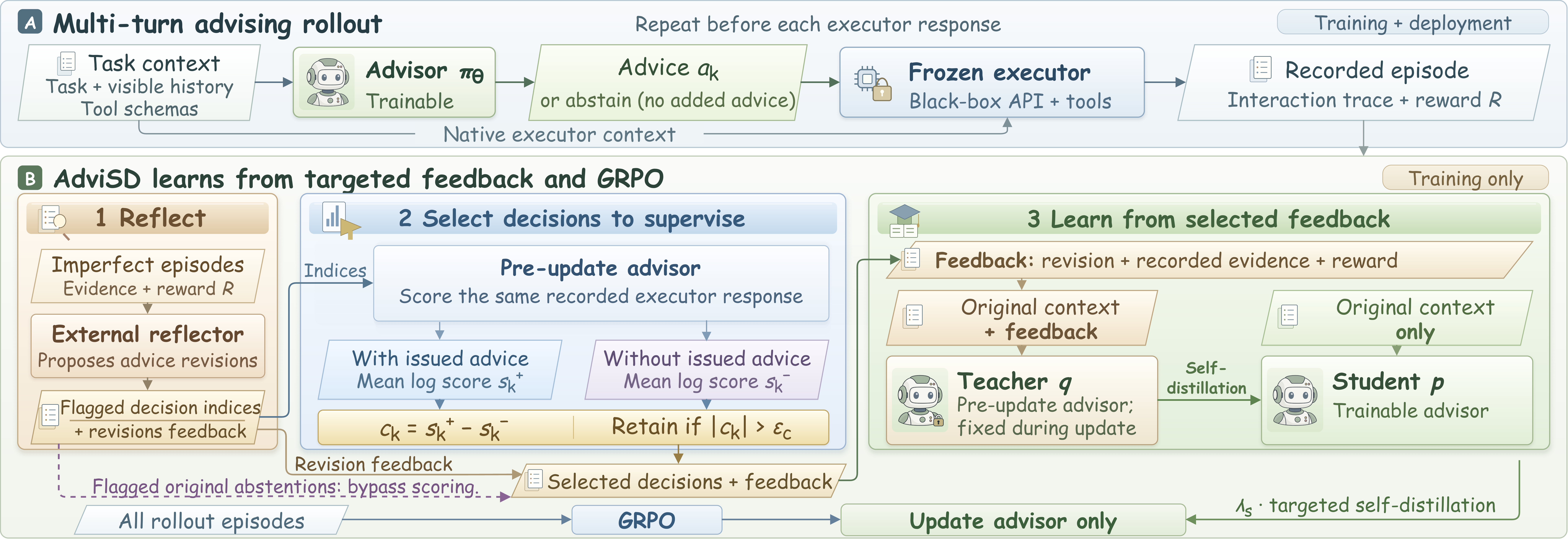}
\caption{\small
\textbf{Overview of \method.}
\textbf{(A) Multi-turn advising.}
Before each executor response, the advisor provides advice or abstains;
the frozen executor uses its native context and any issued advice.
\textbf{(B) Training from targeted feedback.}
Reflection proposes corrections at advice decisions in imperfect episodes.
Among flagged decisions, \method retains original abstentions and those
whose scores with and without issued advice differ by more than a
calibrated threshold. The pre-update advisor computes both on the same
recorded executor response. At retained decisions, a feedback-conditioned
copy of the pre-update advisor teaches the trainable advisor, which sees
only the original context. This self-distillation supplements GRPO
on all episodes.
}
\label{fig:overview}
\end{figure}

\textbf{\method: multi-turn advising with targeted feedback.}
\method separates proposing revisions from choosing where to learn
(Figure~\ref{fig:overview}). The advisor scores the same recorded
executor response under two contexts, one containing its issued advice
and one without it, so selection requires neither executor likelihoods
nor additional executor rollouts. We use the magnitude of the score difference as a predictive signal for selecting
decisions to supervise. At selected decisions, a feedback-conditioned
copy of the pre-update advisor supervises the trainable advisor.
This targeted self-distillation complements outcome-based GRPO
\citep{shao2024deepseekmath}. Reflection and scoring are used only
during training. At deployment, the advisor decides before each
executor turn whether to advise or abstain.

\textbf{Empirical results.}
With Qwen3-8B advisors for Gemini and Claude, \method has the highest
in-domain aggregates on BFCL-v3 and EnvScaler among the compared
methods. It exceeds advisor-GRPO by 4.2--6.4 percentage points on
BFCL-v3 and 3.9--5.1 score points on EnvScaler. Across these
settings, its 2.5--4.9-point advantage over matched-count random
selection supports choosing which decisions to supervise beyond
reducing supervision. Without retraining, \method improves
out-of-domain macro-averages by 2.7--3.6 points over standalone
execution. It also transfers across executor versions and model
families and exceeds transferred GRPO by 3.1 percentage points in
both cross-family directions (Section~\ref{sec:experiments}).

\section{Related Work}
\label{sec:related}

\textbf{Advising and prompt optimization.}
Frozen models can be adapted through reusable instructions or
context-dependent guidance. GEPA uses reflection to optimize reusable
instructions \citep{agrawal2026gepa}, while Directional Stimulus
Prompting, Matryoshka Pilot, and Advisor Models train smaller models
to guide frozen ones
\citep{li2023guiding,NEURIPS2025_3f055edc,asawa2026how}.
Self-Refine and Reflexion use feedback to revise outputs or guide
later attempts without updating model weights
\citep{madaan2023selfrefine,shinn2023reflexion}.
Our advisor-GRPO baseline follows Advisor Models' outcome-based
training through a response-level tool-use interface; \method adds
targeted feedback-conditioned self-distillation.

\textbf{Feedback-conditioned distillation.}
Feedback-conditioned on-policy self-distillation uses additional information
to supervise a policy on its own trajectories. SDPO obtains this
information from environment feedback or successful rollouts
\citep{hubotter2026reinforcement}, and DistIL optimizes forward cross-entropy
with sequence-level credit assignment \citep{agrawal2026reinforcement}.
Other approaches focus on constructing and allocating supervision.
HERO constructs turn-level feedback, and HinT-SD selects
failure-relevant action spans for self-distillation
\citep{liu2026hero,yeo2026hint}. LOPD builds teacher context from
retrieved experience, and DART-SD retrieves references for recovery
\citep{zhang2026latent,xu2026dart}. SAGE-OPD selects and weights
teacher supervision at individual turns \citep{zhou2026sage}.
In contrast, \method addresses which corrections to learn when the trained policy
advises a separate executor rather than directly performing the task.

\textbf{Predictive contrasts and selection.}
Comparing predictions made with different information can yield a
learning signal. RLCSD contrasts correct and incorrect hints, and OCSD
compares full and observation-ablated contexts
\citep{pan2026rlcsd,yang2026agentic}. PBSD and RLSD use paired predictions
to refine turn-level credit or token updates
\citep{tian2026pbsd,yang2026self}. \method applies paired scoring to a
separate executor's recorded response, with and without the issued
advice. It uses the contrast magnitude to select auxiliary supervision
for an advisor whose advice acts through a frozen executor, while
leaving the rollout batch's GRPO advantages unchanged.
Appendix~\ref{app:related} provides the full discussion and further
comparisons.
\section{Background and Problem Formulation}
\label{sec:setup}

\textbf{Advising a frozen executor.} Before each executor response $k$, the advisor reads a context $g_k$
(the visible interaction, tool schemas, and its earlier advice) and
samples an action $a_k\sim\pi_\theta(\cdot\mid g_k)$, which is either
advice text or the abstention sequence \noadv{}. The frozen executor
responds with $m_k\sim\rho(\cdot\mid h_k,e(a_k))$, where $h_k$ is its
native history and $e(a_k)$ is the advice text, or an empty string if
the advisor abstains. Advice goes into a temporary request rather than
the executor's persistent history. The advisor makes a new decision
before every response, including text-only responses and those that
follow tool results; parallel tool calls within one response share a
single decision. An episode ends with reward $R\in[0,1]$.

\textbf{Two learning signals.}
For $n$ rollouts of a task, GRPO assigns episode $i$ the advantage
$\widehat A_i=(R_i-\bar R)/(s_R+\epsilon_A)$, where $\bar R$ and $s_R$
are the group's reward mean and standard deviation, and
$\epsilon_A>0$ stabilizes the denominator. This advantage applies to
every advisor token generated in the episode, including abstentions.
We denote the clipped GRPO loss with reference-policy regularization
by $\mathcal L_{\rm base}$ \citep{shao2024deepseekmath}.

Feedback-conditioned self-distillation supervises individual advice
decisions. The \emph{teacher}, a copy of the pre-update advisor with
parameters $\bar\theta$, sees the original context augmented with
feedback from the completed interaction. The trainable \emph{student}
sees only the original context and learns to match the teacher's
next-token distributions, which are held fixed during optimization.
\method chooses which decisions to supervise, including originally
abstaining decisions flagged by reflection, and leaves the GRPO
advantages unchanged.
\section{Why the Choice of Corrections Matters}
\label{arc:sec:theory}\label{sec:theory}

We examine why fitting a teacher need not improve execution
(Section~\ref{arc:sec:channel}), then show how the retained corrections
determine the learning limit in a shared-parameter model
(Section~\ref{arc:sec:limit}).

\subsection{A single update through the executor}
\label{arc:sec:channel}

Fix an interaction state and advice prefix $h$, and let $p_\theta$ and
$q_h$ be positive student and teacher distributions on a fixed finite
token set $S_h$, possibly the full vocabulary. The student is
differentiable near the pre-update parameters $\bar\theta$. Choosing
token $v$, completing the advice with the pre-update advisor, and
running the executor induces an execution law $K_{h,v}$ over responses
and outcomes, excluding advice text. For a bounded task score $W$,
define
\[
V_h(v)=\arcE_{Z\sim K_{h,v}}[W(Z)],\qquad
J_h(\theta)=\sum_{v\in S_h}p_\theta(v)V_h(v).
\]
These are the expected score after choosing $v$ and its average under
the student. Only next-token probabilities vary during differentiation;
the teacher, support, completion policy, and execution laws remain
fixed.
Feedback does not directly provide each token's expected execution
value. We therefore compare $q_h$ with a normalized reference
$q_h^\alpha(v)\propto p_{\bar\theta}(v)e^{\alpha V_h(v)}$, $\alpha>0$,
which reweights the student toward higher-value tokens. This
\emph{value-tilted teacher} \citep{peters2010relative} is an analytical
benchmark; \method does not construct it.

\begin{arclemma}[Teacher fitting versus execution improvement]
\label{arc:lem:channel}
Under this setup, let
$\phi_h(v)=\nabla_\theta\log p_\theta(v)|_{\bar\theta}$ and
$g_h=\nabla_\theta\KL(p_\theta\|q_h)|_{\bar\theta}$.
For every fixed $\alpha>0$,
\begin{equation}
g_h=-\alpha\nabla J_h(\bar\theta)+e_h,\qquad
e_h=\arcE_{v\sim p_{\bar\theta}}
\left[\phi_h(v)\log\frac{q_h^\alpha(v)}{q_h(v)}\right].
\label{arc:eq:decomposition}
\end{equation}
\end{arclemma}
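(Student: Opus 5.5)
The plan is a direct computation: differentiate the KL at $\bar\theta$, then split the resulting log-ratio $\log(p/q_h)$ through the value-tilted reference $q_h^\alpha$. Two facts do all the work. The score function has zero mean under $p_{\bar\theta}$, namely $\arcE_{v\sim p_{\bar\theta}}[\phi_h(v)]=\sum_v\nabla p_\theta(v)|_{\bar\theta}=\nabla\sum_v p_\theta(v)=0$. And $\log q_h^\alpha$ is an explicit affine function of $\log p_{\bar\theta}$ and $V_h$. Throughout, $S_h$ is finite, and $p_\theta$, $q_h$ and $q_h^\alpha$ are strictly positive. Hence every logarithm is finite, and we may interchange the gradient with the finite sums.

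\textbf{Step 1: gradient of the KL.} Write $\KL(p_\theta\|q_h)=\sum_{v}p_\theta(v)\log p_\theta(v)-\sum_v p_\theta(v)\log q_h(v)$, with $q_h$ held fixed. Differentiating at $\bar\theta$ and using $\nabla p_\theta=p_\theta\nabla\log p_\theta$ gives
\[
g_h=\arcE_{v\sim p_{\bar\theta}}\!\left[\phi_h(v)\log\frac{p_{\bar\theta}(v)}{q_h(v)}\right]+\arcE_{v\sim p_{\bar\theta}}[\phi_h(v)].
\]
The second term vanishes by the zero-mean property.

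\textbf{Step 2: insert the reference.} Write $\log\frac{p_{\bar\theta}}{q_h}=\log\frac{p_{\bar\theta}}{q_h^\alpha}+\log\frac{q_h^\alpha}{q_h}$. The second piece contributes exactly $e_h$ to $g_h$. For the first piece, set $Z_\alpha=\sum_u p_{\bar\theta}(u)e^{\alpha V_h(u)}$, which is finite and positive. Then
\[
\log\frac{p_{\bar\theta}(v)}{q_h^\alpha(v)}=-\alpha V_h(v)+\log Z_\alpha .
\]
The constant $\log Z_\alpha$ does not depend on $v$, so it is annihilated by the zero-mean property. What remains is
\[
-\alpha\,\arcE_{p_{\bar\theta}}[\phi_h V_h]=-\alpha\sum_v\nabla p_\theta(v)|_{\bar\theta}V_h(v)=-\alpha\nabla J_h(\bar\theta).
\]
This last identity holds because $V_h$ does not depend on $\theta$: the completion policy, the support and the execution laws are all frozen by the setup.

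\textbf{Main obstacle.} There is no analytic difficulty; the only real care is bookkeeping about what is held fixed. The teacher $q_h$ and the reference $q_h^\alpha$ are both built from $\bar\theta$ and must be treated as constants when differentiating. Likewise $V_h$ must be $\theta$-independent, which the setup guarantees by freezing the pre-update completion and the execution laws $K_{h,v}$. With that made explicit, the identity holds for every fixed $\alpha>0$, and $e_h$ is exactly the score-weighted log-mismatch between the actual teacher and the value-tilted teacher.
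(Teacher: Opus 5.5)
Your proposal is correct and follows essentially the same route as the paper's proof. Both differentiate the reverse KL with the teacher detached and drop the $+1$ term via the zero-mean score identity. Both then rewrite $\log(\bar p/q_h)$ through $q_h^\alpha$, so that $\log Z_\alpha$ vanishes and $\mathbb E_{\bar p}[\phi_h V_h]=\nabla J_h(\bar\theta)$ yields the value term.
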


Reverse-KL descent at $\bar\theta$ toward the reference follows
$\alpha\nabla J_h$, whereas descent toward the actual teacher follows
$\alpha\nabla J_h-e_h$, whose residual can reinforce or oppose value
ascent. At an \emph{insensitive} prefix, every supported token induces
the same execution law, so $\nabla J_h=0$ and the reference equals
the student. Changing token probabilities cannot improve this local
objective, but teacher fitting can still change advice elsewhere
through shared parameters, for better or worse. Teacher agreement
alone does not tell us which. Appendix~\ref{arc:app:channel} gives
the full proof and extensions.

\subsection{Repeated updates and the learning limit}
\label{arc:sec:limit}
Lemma~\ref{arc:lem:channel} concerns one update. We now introduce a
simplified shared-parameter model to study how repeated learning
changes which failures occur and which corrections supply supervision.

\textbf{Setup.}
The advisor chooses between advice 1 and advice 2, selecting advice 1
with probability $p(\theta)=1/(1+e^{-\theta})$. A single log-odds
parameter $\theta$ is shared across a fixed mixture of \emph{sensitive}
($S$) and \emph{insensitive} ($I$) situations, each with positive
probability. In sensitive situations, advice 1 succeeds more often
than advice 2; in insensitive situations, both induce the same
execution law. These laws remain fixed, with success probabilities
in $(0,1)$, so expected success $J(\theta)$ increases with $\theta$.

A failure of type $j\in\{I,S\}$ supplies a fixed teacher $Q_j$,
positive on both advice choices, with target log-odds
$\ell_j=\log[Q_j(1)/Q_j(2)]$. Even when both advice choices lead to
identical executor behavior, the teacher may prefer one over the
other. We separately assume $\ell_I<\ell_S$, meaning that the
insensitive teacher assigns less probability to advice 1 than the
sensitive teacher. For example, $Q_I=(0.5,0.5)$ is neutral, while
$Q_S=(0.8,0.2)$ prefers advice 1. If the advisor already selects
advice 1 with probability $p>1/2$, learning from $Q_I$ pushes that
probability down toward $1/2$. Because $\theta$ is shared, this also
makes advice 1 less likely in sensitive situations, where it succeeds
more often. Thus, a neutral teacher can weaken useful advice elsewhere
without favoring advice 2.

Each episode contains a fixed positive number of independent,
identically distributed (situation, advice, outcome) samples from
this model. Failures supply correction proposals up to a fixed
positive cap, with uniform subsampling if the cap is exceeded.
Proposals of type $j$ are retained independently with fixed
probability $r_j\in(0,1]$; no gating means $r_I=r_S=1$.
The episode loss averages student-to-teacher reverse KL over retained
corrections and is zero if none remain. Samples and selections are
held fixed during differentiation.

\textbf{Retained supervision.}
Let $B(\theta)$ be the probability that an episode retains any
correction, and let $\omega_I(\theta)$ be the expected fraction of
insensitive corrections conditional on retaining at least one. The
mean target log-odds is
$\mu(\theta)=\omega_I(\theta)\ell_I+[1-\omega_I(\theta)]\ell_S$.
The quantity $B$ measures \emph{exposure}, how often episodes receive
supervision, while $\mu$ summarizes the \emph{composition} of that
supervision, the mixture of teacher targets. As $\theta$ increases,
sensitive failures become less frequent while the insensitive failure
rate stays fixed. The insensitive teacher therefore receives a growing
share of supervision, so $\mu$ decreases.

\begin{arctheorem}[The retained mixture sets the learning limit]
\label{arc:thm:limit}
Under this setup, with distillation alone, the expected gradient of
the sampled episode loss is
\begin{equation}
g(\theta)=B(\theta)\,p(1-p)\,[\theta-\mu(\theta)].
\label{arc:eq:gradient}
\end{equation}
The continuous-time update $\dot\theta=-g(\theta)$ converges from every
finite initialization to a unique equilibrium
$\theta^*=\mu(\theta^*)\in(\ell_I,\ell_S)$. Decreasing $r_I/r_S$
strictly increases both $\theta^*$ and $J(\theta^*)$. Changing the
episode size or proposal cap, or scaling both retention probabilities
by the same admissible positive factor, leaves the limit unchanged.
\end{arctheorem}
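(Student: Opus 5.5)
The plan is to first compute the per-correction gradient, then average over the episode's sampling and retention randomness, and finally study the resulting one-dimensional ODE.

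\emph{Per-correction gradient.} For a single correction with teacher $Q_j$, the two-point reverse KL has derivative
\[
\frac{d}{d\theta}\KL\bigl(\mathrm{Bern}(p)\,\|\,\mathrm{Bern}(Q_j(1))\bigr)
= p(1-p)\Bigl[\log\tfrac{p}{1-p}-\ell_j\Bigr]
= p(1-p)(\theta-\ell_j).
\]
This follows by differentiating $p\log(p/Q_j(1))+(1-p)\log((1-p)/Q_j(2))$ and using $p'=p(1-p)$; the derivative is affine in $\ell_j$.

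\emph{Averaging over an episode.} Since samples and selections are held fixed during differentiation, the episode loss gradient is $p(1-p)\bigl(\theta-\tfrac1N\sum_k\ell_{j_k}\bigr)$ on the event that $N\ge1$ corrections are retained, and zero otherwise. Taking expectations gives
\[
g=B\,p(1-p)\bigl(\theta-\arcE[\tfrac{N_I}{N}\ell_I+\tfrac{N_S}{N}\ell_S\mid N\ge1]\bigr),
\]
which is exactly \eqref{arc:eq:gradient} with $\omega_I=\arcE[N_I/N\mid N\ge1]$.

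\emph{Equilibrium and convergence.} I would show that $\mu$ is continuous and nonincreasing in $\theta$, with values in $[\ell_I,\ell_S]$. Since $\omega_I\in(0,1)$, because both failure types have positive probability and retention is positive, the values actually lie in $(\ell_I,\ell_S)$. Then $F(\theta)=\theta-\mu(\theta)$ is strictly increasing and continuous, negative at $\ell_I$ and positive at $\ell_S$. Hence it has a unique root $\theta^*$ in $(\ell_I,\ell_S)$. Since $B>0$ and $p(1-p)>0$, the sign of $-g$ is the sign of $\theta^*-\theta$, so $\theta^*$ is globally attracting. Convergence (rather than just monotone approach) follows because $|g|$ is bounded below on any compact set avoiding $\theta^*$.

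\emph{Main obstacle: monotonicity of $\omega_I$ in $\theta$ and in $r_I/r_S$.} The difficulty comes from the cap with uniform subsampling and the conditioning on $N\ge1$. My approach is to condition on the multiset of failure proposals. Each proposal in a retained subsample is insensitive with probability $\pi=r_If_I/(r_If_I+r_Sf_S)$, where $f_I$ is the insensitive failure rate and $f_S(\theta)$ is the sensitive failure rate, which decreases in $\theta$. The key reduction is that, given the count of failures, the cap-subsampled proposals are exchangeable, and the retention thinning preserves exchangeability. So conditional on $N=n\ge1$, the expected fraction $N_I/N$ should equal the per-proposal probability $\pi$, independent of $n$. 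This would make $\omega_I=\pi$ exactly, depending only on $r_I/r_S$ and on $f_S(\theta)$.

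I expect verifying this identity carefully to be the delicate step. In particular, the cap interacts with the i.i.d.\ sample count, and failure types within an episode are i.i.d.\ given the failure event. If exact equality fails, I would fall back to a coupling or stochastic-ordering argument to obtain monotonicity.

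\emph{Remaining claims.} Granting $\omega_I=\pi$, the rest follows quickly:
\begin{itemize}
\item Decreasing $r_I/r_S$ lowers $\pi$ pointwise. This raises $\mu$ pointwise and lowers $F$ pointwise, so the root $\theta^*$ strictly increases. Then $J(\theta^*)$ strictly increases because $J$ is increasing.
\item Episode size and cap do not enter $\pi$.
\item Scaling $r_I$ and $r_S$ by a common factor leaves $\pi$ unchanged. It changes only $B$, which affects the speed of convergence but not the limit.
\end{itemize}
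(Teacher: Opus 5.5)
Your proposal is correct and follows the paper's proof essentially step for step. It uses the per-correction pull $p(1-p)(\theta-\ell_j)$, averaging with the realized denominator to get $B\,p(1-p)(\theta-\mu)$, the identification $\omega_I=\pi$ (reading your $f_I,f_S$ as per-opportunity joint rates that include the $\beta$ and $1-\beta$ weights), and a strictly increasing residual $\theta-\mu(\theta)$ with a unique root in $(\ell_I,\ell_S)$, with a pointwise comparison of residuals in place of the paper's implicit differentiation. The step you flag holds exactly, so no coupling fallback is needed, but the property doing the work is conditional independence, not exchangeability: given the failure count the types are i.i.d., the uniform subsample ignores types, and retention acts independently per position, so conditioning on which positions are retained leaves the retained types i.i.d.\ insensitive with probability $\pi$ for every positive retained count.
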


Each teacher contributes $p(1-p)(\theta-\ell_j)$ to the gradient;
averaging yields Eq.~\eqref{arc:eq:gradient}. Learning settles where
the advisor's log-odds equal the retained teachers' mean target.
Retaining insensitive corrections less often than sensitive ones
raises this target and the resulting learning limit. Independently
thinning both types at the same rate changes exposure but not the
target.
Appendices~\ref{arc:app:main-specialization}--\ref{arc:app:dynamics-general}
provide the proof, scope, and extensions.

This model retains corrections by situation type. \method instead uses
an observable predictive contrast (Section~\ref{sec:method}), whose
usefulness we evaluate through ablations (Section~\ref{sec:ablations}).
When reward learning is added, exposure can also affect eventual
performance (Section~\ref{sec:properties}), so the ablations include
a matched-count random control.
\section{\method: Advisor Self-Distillation}
\label{sec:method}

\method trains only the advisor, combining outcome-based GRPO with
targeted self-distillation (Figure~\ref{fig:overview}). An external
reflector proposes corrections, a predictive selector chooses decisions
to supervise, and a feedback-conditioned pre-update advisor teaches a
student that sees only the original context. These stages run only
during training (Algorithm~\ref{alg:arc-implementation};
Appendix~\ref{app:configuration}).

\subsection{Reflection proposes corrections}
\label{sec:reflection}

For each eligible imperfect episode $i$, the reflector uses executor
responses, tool outcomes, and checks to flag at most $b_{\rm refl}$
advice decisions $\mathcal J_i$ with correction feedback. Later events
can explain failures, but proposed advice uses information available
at the original decision
(Appendices~\ref{app:prompts} and~\ref{app:reward}).

\subsection{A paired score selects where to learn}
\label{sec:score}

\textbf{Scoring.}
Let $y_k=(y_{k,1},\ldots,y_{k,T_k})$ denote the recorded executor
response, serialized and tokenized with the advisor's tokenizer.
It includes tool calls in their recorded order but excludes subsequent
tool results. We construct two scoring contexts, $C_k^+$ and $C_k^-$,
from the request sent to the executor rather than from the advisor's
context $g_k$. Both contain the same pre-response history and tool
schemas and differ only in the \emph{issued} advice, which $C_k^+$
includes and $C_k^-$ omits. The pre-update advisor scores each token
of $y_k$ under both contexts:
\begin{equation}
c_k=\frac{1}{T_k}\sum_{t=1}^{T_k}
\left[\log\pi_{\bar\theta}(y_{k,t}\mid C_k^+,y_{k,<t})
-\log\pi_{\bar\theta}(y_{k,t}\mid C_k^-,y_{k,<t})\right].
\label{eq:score}
\end{equation}
The magnitude $|c_k|$ indicates how strongly the issued advice changes
the advisor's prediction of the recorded response. We use this
predictive signal to select whole advice decisions for supervision.
Both scores are computed by the pre-update advisor on the same
recorded response, so selection requires neither executor likelihoods
nor additional executor rollouts.

\textbf{Calibration and selection.}
\label{sec:calibration}\label{sec:selection} We calibrate the gate on prediction changes caused by advice from
other tasks. Before training, each run collects pilot rollouts on
training tasks with its initial advisor. At valid decisions where
advice was issued, we replace it in the with-advice scoring context
with advice from another task, which we call \emph{donor advice}.
Equation~\eqref{eq:score}, applied to the same recorded response and
no-advice baseline, then gives the donor contrast $d_j$. Donor advice
is scored but never sent to the executor. We set the threshold to
an empirical quantile of the $n_{\mathrm{cal}}$ donor contrast
magnitudes:

\begin{equation}
\epsilon_c=Q_{u_{\mathrm d}}\!\left(
\{|d_j|\}_{j=1}^{n_{\mathrm{cal}}}\right),
\qquad u_{\mathrm d}\in(0,1),
\label{eq:calibration}
\end{equation}
which stays fixed during training. Pilot scores for the issued advice
are used only for admission checks (Appendix~\ref{app:calibration}).
Of the flagged decisions $\mathcal J_i$, \method retains two kinds in
$\mathcal I_i$: original abstentions and decisions with
$|c_{i,k}|>\epsilon_c$. An original abstention has $C_k^+=C_k^-$ and
hence $c_k=0$, so the contrast cannot detect missed advice; flagged
abstentions therefore bypass scoring. A proposal to abstain after
issued advice must still pass the numeric gate.

\subsection{Self-distillation from targeted feedback}
\label{sec:objective}

At each retained decision, $\zeta_{i,k}$ combines local execution
evidence, relevant checks, episode score, and reflection feedback.
The teacher sees it prepended to the context $g_{i,k}$; the student
sees only $g_{i,k}$.
Both predict along the originally sampled advice $a_{i,k}$, including
\noadv{}:
\[
q_{i,k,t}=\sg\,\pi_{\bar\theta}
(\cdot\mid\zeta_{i,k}\oplus g_{i,k},a_{i,k,<t}),
\quad
p_{i,k,t}=\pi_\theta(\cdot\mid g_{i,k},a_{i,k,<t}).
\]
Here $\sg$ stops gradients, $\oplus$ prepends feedback, and both use
temperature $T_{\rm SD}$. At each prefix, both distributions are renormalized over a fixed
support $S$: the pre-update student's top-$K$ tokens.
\begin{equation}
\ell_{i,k}=\frac{1}{|a_{i,k}|}\sum_{t=1}^{|a_{i,k}|}
\KL(p^S_{i,k,t}\|q^S_{i,k,t}),\quad
\mathcal L=\mathcal L_{\rm base}+\frac{\lambda_s}{N_{\rm ep}}
\sum_{i=1}^{N_{\rm ep}}
\frac{\sum_{k\in\mathcal I_i^*}\ell_{i,k}}
{\max\{1,|\mathcal I_i^*|\}}.
\label{eq:loss}
\end{equation}
Here $\mathcal I_i^*\subseteq\mathcal I_i$ contains decisions with
feasible teacher contexts, $N_{\rm ep}$ counts episodes, and $\lambda_s$
weights distillation. We average token losses within each supervised decision, then average
these decision losses within each episode and the resulting episode
losses across the full batch. Episodes without supervision contribute
zero auxiliary loss. GRPO still uses every episode with its original
advantage. For each rollout batch, we compute teacher distributions
using the pre-update advisor and hold them fixed during optimization.
Self-distillation differentiates only through the student's
probabilities; token supports, recorded advice prefixes, and selection
weights also stay fixed (Appendix~\ref{app:teacher}).
\section{Targeted Supervision: Reward Learning and Calibration}
\label{sec:properties}\label{arc:sec:calibration}

Section~\ref{arc:sec:limit} studies distillation alone. We add reward
learning, where exposure (how often episodes receive supervision)
can also change the learning limit, and examine donor calibration.

\textbf{Selection alongside reward learning.}
We extend the two-teacher model of Section~\ref{arc:sec:limit} to
combine reward learning with distillation, and we keep the assumption
$\ell_I<\ell_S$. Reward learning encourages advice with higher
expected success, while distillation pulls the advisor toward the
retained teachers' mean target. We compare learning from all proposals
($j=0$) with selective retention ($j=G$). Both start from the same
initialization and use the same fixed weights:
\[
\dot\theta_j=F_j(\theta_j),\qquad
F_j(\theta)=c_0J'(\theta)-\lambda g_j(\theta),
\qquad c_0\ge0,\quad\lambda>0.
\]
Here $J'(\theta)$ is the gradient of expected success and $g_j(\theta)$
is the expected distillation gradient under retention rule $j$;
$c_0$ and $\lambda$ weight the two learning signals. The reward term
idealizes finite-step GRPO--AdamW training as exact gradient ascent.
Let $a_0$ denote the distillation-only equilibrium without gating
(Theorem~\ref{arc:thm:limit}).

\begin{arctheorem}[A higher learning limit under a common reward objective]
\label{arc:thm:combined}
Under the preceding two-teacher model, suppose $G$ independently
retains insensitive and sensitive proposals with fixed probabilities
$r_I^G,r_S^G\in(0,1]$, respectively, where $r_I^G<r_S^G$.
Without gating, every proposal is retained. From any common finite
initialization, both learning dynamics converge to finite equilibria
satisfying
\begin{equation}
\theta_G^\infty>\theta_0^\infty,\qquad
J(\theta_G^\infty)>J(\theta_0^\infty).
\label{arc:eq:limits}
\end{equation}
These inequalities hold even when the dynamics have multiple
equilibria. If the common initialization is at or above $a_0$, then
$\theta_G(t)>\theta_0(t)$ for every $t>0$.
\end{arctheorem}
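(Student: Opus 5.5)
We compare the two one-dimensional flows $\dot\theta=F_j(\theta)$ through their vector fields. By Theorem~\ref{arc:thm:limit}, $g_j(\theta)=B_j(\theta)\,p(1-p)\,[\theta-\mu_j(\theta)]$, with $B_j>0$. So
\[
F_j(\theta)=p(1-p)\bigl\{c_0 D(\theta)-\lambda B_j(\theta)[\theta-\mu_j(\theta)]\bigr\},
\]
where we write $J'(\theta)=p(1-p)D(\theta)$. Here $D(\theta)$ is the fixed positive success gap in sensitive situations, weighted by their probability; this form holds because $J$ is affine in $p$.

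\textbf{Global convergence.} For large $\theta$ the distillation term dominates. It grows linearly in $\theta$, while $D$ is bounded and $\mu_j\in[\ell_I,\ell_S]$. Hence $F_j<0$ for $\theta$ large, and similarly $F_j>0$ for $\theta$ very negative. A scalar autonomous ODE with a continuous field and this sign pattern has bounded trajectories. Each trajectory is monotone and converges to a zero of $F_j$, which is finite. This gives convergence from any finite initialization even with multiple equilibria.

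\textbf{Key step: ordering the fields.} We show $F_G(\theta)>F_0(\theta)$ at every $\theta$ where $F_0(\theta)\le 0$, or at least at every zero of $F_0$. Since $F_G-F_0=\lambda(g_0-g_G)$, we need $g_G(\theta)<g_0(\theta)$ in the relevant region.

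At a zero of $F_0$ with $c_0>0$, we have $\theta-\mu_0(\theta)>0$, so $g_0>0$. Selective retention lowers the insensitive share, so $\mu_G(\theta)>\mu_0(\theta)$. It also lowers exposure, $B_G\le B_0$. When $\theta-\mu_G\le 0$ we get $g_G\le 0<g_0$ immediately. Otherwise the inequality $B_G(\theta-\mu_G)<B_0(\theta-\mu_0)$ follows from both factors being smaller. Establishing $B_G\le B_0$ and $\mu_G>\mu_0$ carefully is the main technical obstacle. I would prove it from the explicit episode model: per-proposal independent thinning with $r_I^G<r_S^G\le1$ reduces the retain-any probability. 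The conditional insensitive fraction $\omega_I$ decreases; I would show this by writing $\omega_I$ as an expectation of a ratio and conditioning on the counts of proposals of each type after capping.

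\textbf{Comparison conclusion.} Let $\theta_0^\infty$ be the limit of the ungated flow from $\theta(0)$. Suppose first that the initialization lies at or below $\theta_0^\infty$. Then $\theta_0(t)$ increases to $\theta_0^\infty$, where $F_0=0$ and so $F_G(\theta_0^\infty)>0$. A standard scalar comparison argument gives $\theta_G(t)\ge\theta_0(t)$. The gated limit cannot be at or below $\theta_0^\infty$, because the gated flow is still increasing there. Hence $\theta_G^\infty>\theta_0^\infty$.

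Now suppose the initialization lies above $\theta_0^\infty$. Then $F_0<0$ on $(\theta_0^\infty,\theta(0)]$, and $F_G>F_0$ there by the key step. The gated trajectory therefore stays above the ungated one. At $\theta_0^\infty$ we have $F_G>0$, so the gated trajectory cannot cross below that point, and again $\theta_G^\infty>\theta_0^\infty$.

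$J(\theta_G^\infty)>J(\theta_0^\infty)$ then follows from strict monotonicity of $J$.

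\textbf{Strict ordering for all $t>0$.} For an initialization at or above $a_0$, I would show $F_G>F_0$ along the whole ungated trajectory. Since $\theta\ge a_0>\mu_0$ there, the key step applies pointwise. Both trajectories start at the same point, so $\dot\theta_G(0)>\dot\theta_0(0)$ gives strict separation for small $t$. Strict separation then persists, because at any first touching time the derivatives would have to be ordered the wrong way.

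The case $c_0=0$ reduces to Theorem~\ref{arc:thm:limit}.
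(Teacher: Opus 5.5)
Your field comparison at and above $a_0$ is right. Wherever $\theta\ge\mu_0(\theta)$, you correctly get $B_G(\theta-\mu_G)<B_0(\theta-\mu_0)$, hence $F_G>F_0$. This covers your second case (a start above $\theta_0^\infty$) and the strict finite-time ordering from starts at or above $a_0$.

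The gap is in your first case. For a start $x\le\theta_0^\infty$ you claim that ``a standard scalar comparison argument gives $\theta_G(t)\ge\theta_0(t)$.'' Such an argument needs $F_G\ge F_0$ wherever the trajectories could meet, but your key step supplies this only for $\theta\ge a_0$. When $c_0>0$ and $x<a_0$, the exposure term $(B_0-B_G)(\theta-\mu_0)$ in Eq.~\eqref{arc:eq:joint-update} is negative: gating discards corrections that are currently pulling $\theta$ upward, and this can outweigh the composition term. The paper's own instance shows it. Take $\beta=s_I=1/2$, $s_1=0.9$, $s_2=0.1$, $Q_I=(0.5,0.5)$, $Q_S=(0.8,0.2)$, $M=b=1$, $c_0=1/2$, $\lambda=1$ and gate $(1/4,1)$. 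Then $F_G(-2)\approx0.177<F_0(-2)\approx0.217$, so from $\theta(0)=-2$ the gated trajectory is strictly \emph{below} the ungated one for small $t>0$. The inequality you invoke is therefore false. Your route to $\theta_G^\infty\ge\theta_0^\infty$ fails exactly in the regime that the theorem deliberately excludes from its finite-time claim.

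The missing idea is that the limit comparison does not need ordered trajectories. It only needs $F_G$ to have no zero on $[x,\theta_0^\infty]$.
\begin{itemize}
\item For $\theta<a_0$ you have $\theta<\mu_0(\theta)<\mu_G(\theta)$, so $-\lambda B_G(\theta-\mu_G)>0$, and $F_G(\theta)>0$ because $c_0\ge0$.
\item On $[a_0,\theta_0^\infty)$, your key-step argument, which uses only $\theta\ge\mu_0(\theta)$, gives $F_G>F_0>0$.
\item At $\theta_0^\infty$ it gives $F_G>F_0=0$.
\end{itemize}
The gated trajectory is monotone and converges to a zero of $F_G$, so it must pass $\theta_0^\infty$, and $\theta_G^\infty>\theta_0^\infty$. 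This is the paper's argument; when $c_0>0$, $\theta_0^\infty$ is the first zero $z_0>a_0$ of $F_0$ above $x$.

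Two smaller points. First, your restoring sign at large $\theta$ needs $B$ bounded below uniformly in $\theta$, not just pointwise positive. The paper uses $B_S\ge fu_S=A_S+V_S(p)\ge A_S=\beta f_I r_I^S>0$ for each selector $S$. Second, $\mu_G>\mu_0$ needs no conditioning on capped counts. The computation behind Theorem~\ref{arc:thm:limit} gives $\omega_I=A/(A+V(p))$ exactly, which depends on retention only through $r_I/r_S$ and increases with it.
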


Selection changes both \emph{which corrections are learned} and
\emph{how often learning from corrections occurs}. The mean retained
teacher target $\mu_j$ captures the first, and the probability $B_j$
that an episode receives supervision captures the second
(Section~\ref{arc:sec:limit}). At the same parameter value $\theta$,
the difference between the two updates is
\begin{equation}
\frac{F_G-F_0}{\lambda p(1-p)}
=\underbrace{B_G(\mu_G-\mu_0)}_{\text{composition}}
+\underbrace{(B_0-B_G)(\theta-\mu_0)}_{\text{exposure}}.
\label{arc:eq:joint-update}
\end{equation}
The composition term is positive because selection gives more weight
to the sensitive teacher, which assigns a higher probability to
advice 1 than the insensitive teacher. The exposure term comes from
less frequent supervision, and its sign depends on $\theta$.
Above $a_0$, ungated distillation pulls $\theta$ downward, so reducing
this pull helps. Below $a_0$, distillation pushes $\theta$ upward, so
reducing supervision can slow early progress. A higher learning limit
therefore need not mean faster learning from the start.

Both dynamics move upward below $a_0$, and $F_G>F_0$ at and above it.
Each trajectory remains bounded and cannot cross an equilibrium of its
own dynamics; together, these properties establish the ordered limits.
Appendices~\ref{arc:app:combined-two-teacher}
and~\ref{arc:app:combined} provide the proof, an example of slower
initial learning, and a stationary random-teacher extension.

For $c_0>0$, reward-only ascent approaches the model's best achievable
success. A fixed $\lambda>0$ instead produces a finite balance between
reward ascent and teacher fitting, and at that balance selection gives
higher success than no gating. The theorem compares the two combined
rules with each other; Section~\ref{sec:bfcl-results} tests gains over
outcome-only GRPO under finite training budgets.

Because selection also changes how often supervision occurs,
outperforming no gating does not by itself show that choosing
particular corrections helps. Retaining every proposal independently
with the same fixed probability preserves the teacher mixture and the
distillation-only limit, yet can improve eventual performance when
reward learning is present (Corollary~\ref{arc:cor:exposure}). This
motivates our matched-count random control
(Section~\ref{sec:ablations}). On each of the control's own episodes,
a shadow \method gate determines how many feasible, reflection-flagged
issued-advice decisions to retain. The control then randomly selects the same number of eligible
decisions, giving each decision an equal chance of being chosen. Unlike independent thinning, these
episode-dependent quotas need not preserve the ungated teacher
mixture (Appendix~\ref{app:selection-controls}).

\textbf{What donor calibration controls.}
Donor calibration sets a reference threshold using prediction changes
produced by advice from other tasks. The quantile in
Eq.~\eqref{eq:calibration} bounds how often donor advice passes the
gate on the pilot sample (Appendix~\ref{arc:app:pilot-count}).
We test whether the resulting selection rule improves learning,
alongside the contribution of the abstention bypass, through ablations
(Section~\ref{sec:ablations}). Appendix~\ref{app:supervision_dynamics}
reports which decisions receive supervision during training.

\setcounter{topnumber}{1}
\begin{table}[t]
\centering
\caption{\small \textbf{In-domain test performance.}
EnvScaler: native score $\times100$; BFCL-v3: official-checker accuracy
(\%) on 320 held-out tasks; Avg weights categories equally.
Mean $\pm$ sample SD follows Section~\ref{sec:setup-exp}.
Bold/underline: largest/second-largest distinct means, including ties,
per column/executor.}
\label{tab:bfcl}\label{tab:evaluation_results}
\setlength{\tabcolsep}{3pt}
\renewcommand{\arraystretch}{0.94}
\scriptsize
\resizebox{\linewidth}{!}{%
\begin{tabular}{@{}l|c|ccccc@{}}
\toprule
&\multicolumn{1}{c|}{EnvScaler}
&\multicolumn{5}{c}{BFCL-v3}\\
\cmidrule(lr){2-2}\cmidrule(lr){3-7}
Adaptation method&Score&Base&Miss Func&Miss Param&Long Ctx&Avg\\

\midrule
\rowcolor{GoogleBlue!10}
\multicolumn{7}{c}{\textbf{Frozen executor: Gemini 3.7 Flash}}\\
\multicolumn{7}{l}{\cellcolor{gray!12}\textit{Standalone executor: no advisor}}\\

No advisor
& $\resultcell{75.7}{2.0}$
& $\underline{\resultcell{70.8}{1.9}}$
& $\resultcell{60.4}{1.4}$
& $\resultcell{42.9}{1.9}$
& $\resultcell{57.1}{1.9}$
& $\resultcell{57.8}{1.4}$ \\

\midrule
\multicolumn{7}{l}{\cellcolor{gray!12}\textit{Executor prompt optimization: no advisor}}\\

GEPA \citep{agrawal2026gepa}
& $\resultcell{76.0}{1.8}$
& $\resultcell{69.6}{1.9}$
& $\resultcell{61.7}{1.4}$
& $\resultcell{45.4}{1.4}$
& $\resultcell{60.8}{1.4}$
& $\resultcell{59.4}{0.3}$ \\

\midrule
\multicolumn{7}{l}{\cellcolor{gray!12}\textit{Frozen advisors: no task-specific optimization}}\\

Gemini 3.7 Flash advisor
& $\resultcell{77.7}{0.7}$
& $\resultcell{70.4}{1.4}$
& $\resultcell{62.1}{1.9}$
& $\resultcell{44.6}{1.9}$
& $\resultcell{59.6}{1.4}$
& $\resultcell{59.2}{0.5}$ \\

Untrained Qwen3-8B
& $\resultcell{75.6}{2.2}$
& $\resultcell{67.1}{1.9}$
& $\resultcell{59.6}{1.9}$
& $\resultcell{39.6}{1.4}$
& $\resultcell{58.3}{1.9}$
& $\resultcell{56.1}{1.0}$ \\

\midrule
\multicolumn{7}{l}{\cellcolor{gray!12}\textit{Trained advisors: Qwen3-8B weight updates}}\\

GRPO \citep{shao2024deepseekmath}
& $\underline{\resultcell{79.7}{2.2}}$
& $\underline{\resultcell{70.8}{1.9}}$
& $\underline{\resultcell{64.2}{1.4}}$
& $\underline{\resultcell{47.1}{1.9}}$
& $\underline{\resultcell{63.3}{1.4}}$
& $\underline{\resultcell{61.4}{1.0}}$ \\

SDPO \citep{hubotter2026reinforcement}
& $\resultcell{76.0}{1.7}$
& $\resultcell{69.2}{1.9}$
& $\resultcell{60.8}{1.4}$
& $\resultcell{43.3}{1.4}$
& $\resultcell{57.9}{1.9}$
& $\resultcell{57.8}{0.8}$ \\

DistIL \citep{agrawal2026reinforcement}
& $\resultcell{76.1}{2.4}$
& $\resultcell{69.6}{1.4}$
& $\resultcell{61.7}{1.4}$
& $\resultcell{44.6}{1.4}$
& $\resultcell{58.3}{1.4}$
& $\resultcell{58.5}{0.4}$ \\

\rowcolor{GoogleGreen!9}
\method (ours)
& $\bestscore{\resultcell{84.8}{1.6}}$
& $\bestscore{\resultcell{73.3}{1.9}}$
& $\bestscore{\resultcell{72.1}{1.9}}$
& $\bestscore{\resultcell{54.2}{1.9}}$
& $\bestscore{\resultcell{71.7}{1.9}}$
& $\bestscore{\resultcell{67.8}{1.6}}$ \\

\midrule
\rowcolor{GoogleBlue!10}
\multicolumn{7}{c}{\textbf{Frozen executor: Claude Sonnet 4.6}}\\
\multicolumn{7}{l}{\cellcolor{gray!12}\textit{Standalone executor: no advisor}}\\

No advisor
& $\resultcell{77.4}{2.1}$
& $\resultcell{67.1}{1.4}$
& $\resultcell{53.8}{2.2}$
& $\resultcell{47.9}{1.4}$
& $\resultcell{50.8}{1.4}$
& $\resultcell{54.9}{0.9}$ \\

\midrule
\multicolumn{7}{l}{\cellcolor{gray!12}\textit{Executor prompt optimization: no advisor}}\\

GEPA \citep{agrawal2026gepa}
& $\resultcell{78.4}{1.3}$
& $\resultcell{67.1}{1.9}$
& $\resultcell{58.3}{1.9}$
& $\resultcell{56.3}{2.5}$
& $\resultcell{50.4}{1.4}$
& $\resultcell{58.0}{0.8}$ \\

\midrule
\multicolumn{7}{l}{\cellcolor{gray!12}\textit{Frozen advisors: no task-specific optimization}}\\

Claude Sonnet 4.6 advisor
& $\resultcell{78.7}{0.5}$
& $\underline{\resultcell{68.3}{1.9}}$
& $\resultcell{57.9}{1.9}$
& $\bestscore{\resultcell{62.1}{1.4}}$
& $\resultcell{53.3}{1.9}$
& $\underline{\resultcell{60.4}{0.7}}$ \\

Untrained Qwen3-8B
& $\resultcell{74.1}{1.5}$
& $\resultcell{63.3}{1.9}$
& $\resultcell{55.4}{1.9}$
& $\resultcell{47.1}{1.9}$
& $\resultcell{50.8}{1.9}$
& $\resultcell{54.2}{1.3}$ \\

\midrule
\multicolumn{7}{l}{\cellcolor{gray!12}\textit{Trained advisors: Qwen3-8B weight updates}}\\

GRPO \citep{shao2024deepseekmath}
& $\underline{\resultcell{80.7}{2.3}}$
& $\resultcell{66.7}{1.9}$
& $\underline{\resultcell{59.6}{1.4}}$
& $\resultcell{54.6}{1.4}$
& $\underline{\resultcell{54.6}{0.7}}$
& $\resultcell{58.9}{0.7}$ \\

SDPO \citep{hubotter2026reinforcement}
& $\resultcell{78.9}{1.2}$
& $\resultcell{64.2}{1.9}$
& $\resultcell{59.2}{1.4}$
& $\resultcell{49.2}{1.4}$
& $\resultcell{50.8}{1.4}$
& $\resultcell{55.8}{0.7}$ \\

DistIL \citep{agrawal2026reinforcement}
& $\resultcell{78.9}{1.5}$
& $\resultcell{65.4}{1.4}$
& $\underline{\resultcell{59.6}{1.9}}$
& $\resultcell{51.3}{2.2}$
& $\resultcell{52.9}{1.9}$
& $\resultcell{57.3}{1.3}$ \\

\rowcolor{GoogleGreen!9}
\method (ours)
& $\bestscore{\resultcell{84.6}{1.1}}$
& $\bestscore{\resultcell{69.6}{1.4}}$
& $\bestscore{\resultcell{65.4}{1.4}}$
& $\underline{\resultcell{59.2}{1.4}}$
& $\bestscore{\resultcell{58.3}{1.4}}$
& $\bestscore{\resultcell{63.1}{0.6}}$ \\

\bottomrule
\end{tabular}}
\end{table}
\section{Experiments}
\label{sec:experiments}

We ask four questions: (Q1) How does \method compare with baselines? (Q2) Which corrections should receive supervision? (Q3) Without retraining, does it transfer to out-of-domain tasks and across executor versions and families? (Q4) How does performance evolve during training?

\phantomsection\label{sec:setup-exp}
\textbf{Setup.}
We train Qwen3-8B advisors for two frozen executors, Gemini 3.7 Flash
and Claude Sonnet 4.6, separately on BFCL-v3 \citep{patil2025the}
and EnvScaler \citep{song2026envscaler}. The test sets are fixed
across runs: 320 BFCL tasks (80 in each of its four categories) and
200 EnvScaler tasks. For each seed, we re-split the remaining tasks
into training and validation sets (Appendix~\ref{app:baselines}).
Following LOPD \citep{zhang2026latent}, we report BFCL-v3 accuracy under the official multi-turn checker as an equal-weight average of
the four categories. 
EnvScaler uses native graded scores in $[0,1]$; we report their
mean multiplied by 100.
\method caps reflection at $b_{\rm refl}=5$ decisions per episode and calibrates each run's fixed threshold $\epsilon_c$ on a training-task
pilot using donor quantile $u_{\mathrm d}=0.95$.

\phantomsection\label{sec:reporting-exp}
\textbf{Evaluation.}
Every trained method has three independent training runs. For each
run, we take the checkpoint selected on validation, evaluate it four
times on the test set, and average the four scores. We report the
mean $\pm$ sample standard deviation (SD) of the three run averages.
GEPA is aggregated the same way, with three independent prompt
searches in place of training runs. The no-advisor and
frozen-advisor controls have no training runs, so their three
averages come from three groups of four evaluations of the same
system. Their SD reflects only evaluation randomness
(Appendix~\ref{app:statistics}).

\textbf{Baselines.}
The \emph{standalone executor} receives no advice.
For \emph{executor prompt optimization}, GEPA
\citep{agrawal2026gepa} optimizes reusable executor instructions
without an advisor.
The \emph{frozen advisors} are untrained Qwen3-8B and the
executor's own API model; neither receives task-specific optimization.
Among \emph{trained advisors}, outcome-only advisor-GRPO
\citep{shao2024deepseekmath}, labeled GRPO in the tables, uses the
same reward-learning objective as \method without self-distillation.
Adapted SDPO \citep{hubotter2026reinforcement} and
DistIL \citep{agrawal2026reinforcement} use successful sibling
rollouts of the same task as feedback, using their
distillation objectives without an added GRPO loss.
All advised systems share the response-level interface and can abstain
(Appendices~\ref{app:configuration} and~\ref{app:baselines}).

\phantomsection\label{sec:bfcl-results}
\textbf{(Q1) In-domain performance.}
\method has the highest BFCL-v3 average and EnvScaler score for
both executors (Table~\ref{tab:bfcl}). It exceeds GRPO by
4.2--6.4 percentage points on BFCL-v3 and 3.9--5.1 score points on
EnvScaler. For both executors, gains over GRPO are larger in BFCL's
augmented categories than in Base, suggesting greater benefits when
tools (Missing Function) or required information (Missing Parameter)
are missing, or contexts are longer (Long Context). Untrained Qwen3-8B advice lowers all four aggregate means relative
to standalone execution, while frozen frontier advisors raise them.
After \method training, the smaller Qwen3-8B advisor outperforms
these frontier advisors in every comparison in Table~\ref{tab:bfcl}
except Missing Parameter with the Claude executor. Adapted SDPO and
DistIL reuse the same successful-rollout feedback to supervise the
advisor's generated tokens across turns, yet both trail GRPO in
aggregate. This gap motivates decision-level targeting, since shared
trajectory-level feedback need not be equally useful throughout a
multi-turn interaction. Appendix~\ref{app:cases} shows a case study comparing
execution with and without a trained advisor.

\begin{table}[t]
\centering
\caption{\small
\textbf{Correction selection and abstention supervision (Q2).}
All \method variants share GRPO, reflection, teacher construction,
and the auxiliary loss and weight schedule. Self-distillation
uses feasible reflection-flagged decisions.
\emph{\method} retains issued-advice decisions whose with- and
without-advice scores differ by more than the threshold in either
direction; \emph{inverted gate} retains those with absolute score
differences at or below the threshold, without count matching.
\emph{No gate} retains every proposal.
\emph{Matched-count random} samples issued-advice decisions
uniformly, matching the gate's per-episode count on the random
control's own rollouts. All four retain flagged original abstentions.
\emph{No abstention bypass} removes self-distillation at original
abstentions while keeping the ordinary gate, GRPO, and the advisor's
ability to abstain. GRPO alone uses no self-distillation.
Metrics and averaging follow Table~\ref{tab:bfcl};
bold marks column maxima.}
\label{tab:ablation}
\small
\setlength{\tabcolsep}{4pt}
\renewcommand{\arraystretch}{0.95}
\begin{tabularx}{\linewidth}{@{}l*{4}{>{\centering\arraybackslash}X}@{}}
\toprule
&\multicolumn{2}{c}{BFCL-v3}
&\multicolumn{2}{c}{EnvScaler}\\
\cmidrule(lr){2-3}\cmidrule(lr){4-5}
Advisor training
&{\fontsize{8}{9}\selectfont\mbox{Gemini 3.7 Flash}}
&{\fontsize{8}{9}\selectfont\mbox{Claude Sonnet 4.6}}
&{\fontsize{8}{9}\selectfont\mbox{Gemini 3.7 Flash}}
&{\fontsize{8}{9}\selectfont\mbox{Claude Sonnet 4.6}}\\
\midrule
GRPO
&$61.4\pm1.0$
&$58.9\pm0.7$
&$79.7\pm2.2$
&$80.7\pm2.3$\\
\addlinespace[2pt]
\method, no gate
&$62.7\pm0.8$
&$59.7\pm0.3$
&$81.1\pm1.5$
&$81.4\pm0.9$\\
\method, inverted gate
&$61.1\pm1.4$
&$58.3\pm0.6$
&$78.7\pm1.9$
&$79.4\pm1.6$\\
\method, matched-count random
&$62.9\pm0.9$
&$60.6\pm0.4$
&$80.9\pm1.3$
&$81.1\pm1.2$\\
\rowcolor{GoogleGreen!9}
\method
&$\mathbf{67.8\pm1.6}$
&$\mathbf{63.1\pm0.6}$
&$\mathbf{84.8\pm1.6}$
&$\mathbf{84.6\pm1.1}$\\
\midrule
\method, no abstention bypass
&$66.1\pm1.3$
&$62.5\pm0.8$
&$83.3\pm2.1$
&$83.5\pm1.5$\\
\bottomrule
\end{tabularx}
\end{table}

\begin{table}[t]
\centering
\caption{\small \textbf{Out-of-domain (OOD) performance without retraining (Q3).}
BFCL-selected systems; mean $\pm$ sample SD follows the
\hyperref[sec:reporting-exp]{evaluation protocol}. Metrics (\%): ACEBench success, ToolHop
answer correctness (AC), $\tau^2$-bench pass$^1$, and RoTBench tool
selection (TS), parameter identification (PI), and content filling
(CF). Macro Avg averages components within each benchmark, then the
four benchmarks equally (Appendix~\ref{app:statistics}). Bold/underline mark the
largest/second-largest distinct means per column and executor,
including ties.}
\label{tab:ood}
\setlength{\tabcolsep}{2.1pt}
\scriptsize
\resizebox{\linewidth}{!}{%
\begin{tabular}{@{}l|cc|c|ccc|ccc|c@{}}
\toprule
&\multicolumn{2}{c|}{ACEBench}
&\multicolumn{1}{c|}{ToolHop}
&\multicolumn{3}{c|}{$\tau^2$-bench}
&\multicolumn{3}{c|}{RoTBench}&\\
\cmidrule(lr){2-3}
\cmidrule(lr){4-4}
\cmidrule(lr){5-7}
\cmidrule(lr){8-10}
\multirow{-2}{*}{Adaptation method}
&M-Step&M-Turn&AC&Airline&Retail&Telecom&TS&PI&CF
&\multirow{-2}{*}{\shortstack{Macro\\Avg}}\\

\midrule
\rowcolor{GoogleBlue!10}
\multicolumn{11}{c}{\textbf{Frozen executor: Gemini 3.7 Flash}}\\
\multicolumn{11}{l}{\cellcolor{gray!12}\textit{Standalone executor: no advisor}}\\

No advisor
& $\resultcell{86.7}{2.9}$
& $\resultcell{66.7}{5.8}$
& $\underline{\resultcell{81.3}{0.4}}$
& $\resultcell{84.0}{2.0}$
& $\resultcell{58.2}{1.3}$
& $\underline{\resultcell{91.5}{1.3}}$
& $\underline{\resultcell{71.9}{0.8}}$
& $\underline{\resultcell{66.1}{0.4}}$
& $\underline{\resultcell{47.8}{0.2}}$
& $74.5$ \\

\multicolumn{11}{l}{\cellcolor{gray!12}\textit{Executor prompt optimization: no advisor}}\\

GEPA
& $\resultcell{85.0}{2.2}$
& $\resultcell{63.9}{2.1}$
& $\resultcell{80.6}{0.1}$
& $\resultcell{80.0}{3.5}$
& $\resultcell{55.8}{1.3}$
& $\resultcell{88.6}{0.9}$
& $\resultcell{68.8}{0.4}$
& $\resultcell{63.4}{0.3}$
& $\resultcell{45.9}{0.2}$
& $72.3$ \\

\multicolumn{11}{l}{\cellcolor{gray!12}\textit{Frozen advisors: no task-specific optimization}}\\

Gemini 3.7 Flash advisor
& $\underline{\resultcell{91.7}{2.9}}$
& $\bestscore{\resultcell{70.0}{5.8}}$
& $\resultcell{80.4}{0.2}$
& $\underline{\resultcell{89.3}{2.3}}$
& $\resultcell{58.8}{2.3}$
& $\resultcell{83.6}{1.0}$
& $\resultcell{67.3}{0.2}$
& $\resultcell{61.1}{0.4}$
& $\resultcell{45.7}{0.4}$
& $74.1$ \\

Untrained Qwen3-8B
& $\resultcell{85.0}{5.0}$
& $\resultcell{61.1}{7.7}$
& $\resultcell{81.1}{0.3}$
& $\resultcell{82.0}{2.0}$
& $\resultcell{55.8}{2.0}$
& $\resultcell{83.3}{1.8}$
& $\resultcell{70.2}{0.4}$
& $\resultcell{65.1}{0.3}$
& $\resultcell{46.9}{0.3}$
& $72.1$ \\

\multicolumn{11}{l}{\cellcolor{gray!12}\textit{Trained advisors: Qwen3-8B weight updates}}\\

GRPO
& $\bestscore{\resultcell{93.3}{2.9}}$
& $\underline{\resultcell{68.9}{1.9}}$
& $\resultcell{80.5}{0.3}$
& $\resultcell{86.7}{4.2}$
& $\underline{\resultcell{59.1}{2.7}}$
& $\resultcell{90.4}{0.9}$
& $\resultcell{70.0}{0.2}$
& $\resultcell{63.8}{0.2}$
& $\resultcell{47.2}{0.4}$
& $\underline{75.2}$ \\

SDPO
& $\resultcell{88.3}{2.9}$
& $\resultcell{65.6}{8.4}$
& $\resultcell{79.5}{0.2}$
& $\resultcell{82.7}{4.2}$
& $\resultcell{54.1}{4.1}$
& $\resultcell{86.5}{2.0}$
& $\resultcell{66.8}{0.4}$
& $\resultcell{62.9}{0.4}$
& $\resultcell{45.3}{0.5}$
& $72.3$ \\

DistIL
& $\underline{\resultcell{91.7}{2.9}}$
& $\resultcell{64.4}{5.1}$
& $\resultcell{80.6}{0.1}$
& $\resultcell{80.7}{2.3}$
& $\resultcell{58.5}{2.2}$
& $\resultcell{86.3}{1.8}$
& $\resultcell{68.9}{0.9}$
& $\resultcell{63.1}{0.6}$
& $\resultcell{46.3}{0.2}$
& $73.3$ \\

\rowcolor{GoogleGreen!9}
\method (ours)
& $\bestscore{\resultcell{93.3}{2.9}}$
& $\bestscore{\resultcell{70.0}{5.8}}$
& $\bestscore{\resultcell{82.3}{0.3}}$
& $\bestscore{\resultcell{91.3}{1.2}}$
& $\bestscore{\resultcell{62.0}{2.0}}$
& $\bestscore{\resultcell{92.4}{1.8}}$
& $\bestscore{\resultcell{73.0}{0.7}}$
& $\bestscore{\resultcell{67.3}{0.4}}$
& $\bestscore{\resultcell{49.1}{0.3}}$
& $\bestscore{77.2}$ \\

\midrule
\rowcolor{GoogleBlue!10}
\multicolumn{11}{c}{\textbf{Frozen executor: Claude Sonnet 4.6}}\\
\multicolumn{11}{l}{\cellcolor{gray!12}\textit{Standalone executor: no advisor}}\\

No advisor
& $\bestscore{\resultcell{86.7}{5.8}}$
& $\resultcell{66.7}{3.3}$
& $\underline{\resultcell{67.0}{0.1}}$
& $\underline{\resultcell{87.3}{1.2}}$
& $\resultcell{57.9}{2.3}$
& $\underline{\resultcell{92.7}{0.5}}$
& $\resultcell{74.2}{0.7}$
& $\resultcell{58.9}{0.4}$
& $\resultcell{42.9}{0.5}$
& $70.4$ \\

\multicolumn{11}{l}{\cellcolor{gray!12}\textit{Executor prompt optimization: no advisor}}\\

GEPA
& $\underline{\resultcell{85.0}{8.7}}$
& $\resultcell{64.4}{5.1}$
& $\resultcell{65.8}{0.4}$
& $\resultcell{79.3}{3.1}$
& $\resultcell{55.8}{2.0}$
& $\resultcell{87.1}{2.5}$
& $\resultcell{73.8}{0.4}$
& $\resultcell{58.5}{0.7}$
& $\resultcell{42.5}{0.7}$
& $68.2$ \\

\multicolumn{11}{l}{\cellcolor{gray!12}\textit{Frozen advisors: no task-specific optimization}}\\

Claude Sonnet 4.6 advisor
& $\resultcell{81.7}{2.9}$
& $\underline{\resultcell{68.9}{7.7}}$
& $\resultcell{66.1}{0.4}$
& $\resultcell{84.0}{2.0}$
& $\resultcell{58.5}{1.3}$
& $\resultcell{87.7}{2.3}$
& $\resultcell{77.7}{0.7}$
& $\underline{\resultcell{66.9}{0.5}}$
& $\underline{\resultcell{50.1}{0.8}}$
& $70.8$ \\

Untrained Qwen3-8B
& $\resultcell{81.7}{7.6}$
& $\resultcell{64.4}{6.9}$
& $\resultcell{66.7}{0.5}$
& $\resultcell{77.3}{1.2}$
& $\resultcell{51.8}{1.5}$
& $\resultcell{86.0}{0.9}$
& $\resultcell{73.0}{0.5}$
& $\resultcell{57.3}{0.7}$
& $\resultcell{41.2}{0.4}$
& $67.2$ \\

\multicolumn{11}{l}{\cellcolor{gray!12}\textit{Trained advisors: Qwen3-8B weight updates}}\\

GRPO
& $\bestscore{\resultcell{86.7}{5.8}}$
& $\resultcell{67.8}{1.9}$
& $\resultcell{66.5}{0.3}$
& $\resultcell{82.7}{4.6}$
& $\underline{\resultcell{59.6}{2.3}}$
& $\resultcell{90.6}{2.5}$
& $\underline{\resultcell{77.9}{1.0}}$
& $\resultcell{65.4}{0.4}$
& $\resultcell{48.3}{0.4}$
& $\underline{71.3}$ \\

SDPO
& $\underline{\resultcell{85.0}{8.7}}$
& $\resultcell{66.7}{3.3}$
& $\resultcell{65.4}{0.2}$
& $\resultcell{76.7}{3.1}$
& $\resultcell{57.0}{1.5}$
& $\resultcell{86.8}{2.6}$
& $\resultcell{74.3}{0.8}$
& $\resultcell{62.4}{0.1}$
& $\resultcell{45.9}{0.3}$
& $68.9$ \\

DistIL
& $\resultcell{83.3}{10.4}$
& $\resultcell{61.1}{5.1}$
& $\resultcell{66.1}{0.3}$
& $\resultcell{79.3}{4.6}$
& $\resultcell{56.1}{3.8}$
& $\resultcell{88.6}{2.3}$
& $\resultcell{75.1}{0.2}$
& $\resultcell{62.1}{0.2}$
& $\resultcell{45.0}{0.9}$
& $68.4$ \\

\rowcolor{GoogleGreen!9}
\method (ours)
& $\underline{\resultcell{85.0}{5.0}}$
& $\bestscore{\resultcell{70.0}{5.8}}$
& $\bestscore{\resultcell{68.8}{0.5}}$
& $\bestscore{\resultcell{88.0}{3.5}}$
& $\bestscore{\resultcell{64.9}{2.3}}$
& $\bestscore{\resultcell{93.3}{1.3}}$
& $\bestscore{\resultcell{79.8}{0.4}}$
& $\bestscore{\resultcell{70.0}{0.3}}$
& $\bestscore{\resultcell{52.6}{0.1}}$
& $\bestscore{74.0}$ \\

\bottomrule
\end{tabular}}
\end{table}

\begin{figure}[!b]
\centering
\includegraphics[width=\linewidth]{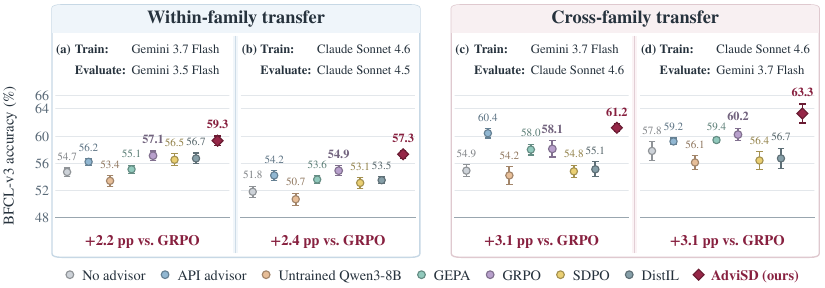}
\caption{\small
\textbf{Cross-version and cross-family BFCL-v3 transfer without retraining (Q3).}
Train/Evaluate labels identify executors; API advisors use the
evaluation executor's model.
Dots show means; intervals indicate $\pm$ sample SD.
Gains show \method minus GRPO in \% points.
All panels share a 48--66\% accuracy scale.}
\label{fig:transfer}
\end{figure}

\phantomsection\label{sec:ablations}
\textbf{(Q2) Which corrections should the advisor learn?}
\method exceeds matched-count random selection by 2.5--4.9 points
and no gating by 3.2--5.1 (Table~\ref{tab:ablation}), whereas
the two controls differ by at most 0.9 points.
Theorem~\ref{arc:thm:combined} and Corollary~\ref{arc:cor:exposure}
motivate the random control, which matches the gate's per-episode
counts on its own rollouts. \method is trained separately, so its
episodes and supervision counts can differ from the control's.
The inverted gate also trails GRPO on both benchmarks with both
executors. These results support the predictive selection rule:
randomly reducing supervision does not recover \method's gains, and
prioritizing low-contrast decisions does worse than reward learning
alone. Next, at an original abstention, no advice is issued, so the with- and
without-advice contexts are identical and $c_k=0$. Reflection may
still flag missed advice, in which case the bypass lets the
abstention receive self-distillation. Removing the bypass lowers
performance by 0.6--1.7 points across both benchmarks and executors.
The no-bypass variant, which selects only among issued-advice
decisions, still exceeds GRPO in every aggregate.
Appendix~\ref{app:supervision_dynamics} reports supervision
allocation in one Claude BFCL run.

\phantomsection\label{sec:ood-results}
\textbf{(Q3) Transfer without retraining.}
We evaluate the BFCL-selected advisors and prompts on ACEBench
\citep{chen2025acebench}, ToolHop \citep{ye2025toolhop},
$\tau^2$-bench \citep{barres2026taubench}, and RoTBench
\citep{ye2024rotbench} without further adaptation, using each
benchmark's native tools, policies, and metrics. \method's gains
carry over to these benchmarks: it leads the four-benchmark
macro-average for both executors (Table~\ref{tab:ood}). It exceeds
standalone execution by 2.7 points with Gemini and 3.6 with Claude,
while GRPO's margin is less than one point. \method has the highest
mean, outright or tied, in 17 of 18 comparisons. By contrast, BFCL-optimized GEPA prompts fall below standalone
execution on every evaluated component, indicating its poor cross-benchmark transfer. The effect of advice still varies by task. On $\tau^2$-bench
telecom, the frozen Gemini advisor scores 7.9\% points
below standalone execution, whereas \method scores slightly above
it. \method itself trails standalone execution and GRPO by
1.7 \% points on Claude's ACEBench multi-step tasks. Useful guidance also transfers across executor versions and
families. \method leads all four transfer comparisons
(Figure~\ref{fig:transfer}) and exceeds transferred GRPO by
3.1 percentage points in both cross-family directions. Its
cross-family scores still fall below those of \method advisors
trained for the evaluation executor (Table~\ref{tab:bfcl}), so
transferred model does not replace executor-specific
training.

\begin{figure}[t]
\centering
\includegraphics[width=0.5\linewidth]{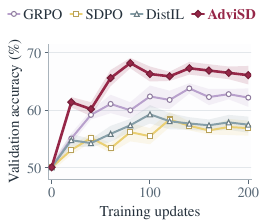}
\caption{\small
\textbf{Training progress on BFCL-v3.}
Validation accuracy across training updates with Claude Sonnet 4.6
as the frozen executor. Mean $\pm$ sample standard deviation over
three independent training runs. \method stays ahead of the baselines
from update 20 onward.}
\label{fig:validation}
\end{figure}

\phantomsection\label{sec:training-results}
\textbf{(Q4) Training progress.}
On BFCL-v3 validation with Claude Sonnet 4.6, \method leads the
compared methods by update 20 and remains ahead at every later
evaluated checkpoint. Late in training, its margin over GRPO is
roughly four percentage points (Figure~\ref{fig:validation}).

\section{Conclusion}
\label{sec:conclusion}

\method combines task rewards with targeted self-distillation to train
small advisors for frozen frontier executors. It separates proposing
corrections from selecting which advice decisions to supervise.
Selection compares how the advisor scores the same recorded executor
response with and without its issued advice, and a
feedback-conditioned teacher supervises the retained decisions.
Our shared-parameter analysis explains how eventual performance
depends on which corrections are retained and how often supervision
occurs, and it motivates the matched-count random control.
\method has the highest in-domain aggregates on BFCL-v3 and EnvScaler
among the compared methods. Without retraining, it also leads the
out-of-domain macro-averages and transfer comparisons across executor
versions and model families. Its gains over ungated and matched-count
random supervision indicate that choosing where to learn matters
beyond reducing supervision.
Appendix~\ref{app:limitations} discusses theoretical scope, the
predictive score, and evaluation limitations.

\bibliography{main}

\appendix
\raggedbottom
\makeatletter
\let\ARCappendixaddcontentsline\addcontentsline
\renewcommand{\addcontentsline}[3]{%
  \def\ARCentryfile{#1}\def\ARCtocfile{toc}%
  \ifx\ARCentryfile\ARCtocfile
    \ARCappendixaddcontentsline{toc}{#2}{#3}
    \ARCappendixaddcontentsline{atoc}{#2}{#3}%
  \else
    \ARCappendixaddcontentsline{#1}{#2}{#3}%
  \fi}
\pdfbookmark[0]{Appendix contents}{arc-appendix-contents}

\newpage
\section*{Appendices}
\begingroup
  \setcounter{tocdepth}{2}
  \@starttoc{atoc}
\endgroup
\hypersetup{linktoc=section}
\makeatother
\clearpage
\section{Notation and scope}
\label{arc:app:guide}\label{app:notation}\label{app:runtime-notation}

Logarithms are natural, and vector norms are Euclidean. We compute an
expected training gradient by differentiating the student loss with the
sampled data, teacher, supports, and selection weights held fixed, and
then averaging over the sampling law; the rollout distribution is not
differentiated. In the local analysis, $\theta$ is the advisor's
parameter vector. In the learning model, it is a single scalar logit
shared across situations. That model assumes independent opportunities,
uniform proposals, and stationary conditional statistics. It explains a
selection mechanism and does not describe the full dynamics of the
reflector, the evolving language model, or the optimizer.

\begin{center}\small
\renewcommand{\arraystretch}{1.08}
\begin{tabularx}{\linewidth}{@{}p{.14\linewidth}Y@{}}
\toprule
Symbol & Meaning\\
\midrule
\multicolumn{2}{@{}l}{\textit{Local update analysis}}\\
\addlinespace[2pt]
$h$ & Advice prefix at a fixed interaction state.\\
$S_h$ & Fixed finite set of candidate tokens at prefix $h$.\\
$p_\theta$ & Student next-token distribution, normalized on $S_h$.\\
$q_h$ & Teacher next-token distribution on $S_h$, held fixed during differentiation.\\
$K_{h,v}$ & Law of executor responses and outcomes after token $v$ and completion by the pre-update advisor; excludes the advice text.\\
$V_h(v)$ & Expected task score under the execution law $K_{h,v}$.\\
$J_h(\theta)$ & Local execution value: $V_h(v)$ averaged under the student's next-token distribution.\\
$g_h$ & Student-to-teacher reverse-KL gradient evaluated at the pre-update parameters.\\
$e_h$ & Teacher-dependent residual relative to the value-tilted reference, in $g_h=-\alpha\nabla J_h+e_h$.\\
\midrule
\multicolumn{2}{@{}l}{\textit{Repeated-learning model}}\\
\addlinespace[2pt]
$M$ & Number of independent opportunities per modeled episode.\\
$b$ & Maximum number of failure proposals per modeled episode.\\
$\beta$ & Probability that an opportunity is execution-insensitive.\\
$r_{ja}$ & Retention probability for a proposed failure of type $j$ with issued action $a$; excludes proposal sampling.\\
$\ell_{ja}$ & Expected teacher log-odds for type $j$ and issued action $a$, conditional on failure and retention.\\
$B(\theta)$ & Exposure: probability that an episode retains at least one correction.\\
$\mu(\theta)$ & Composition: expected within-episode mean of retained teacher log-odds, conditional on retaining at least one correction.\\
\midrule
\multicolumn{2}{@{}l}{\textit{\method contexts and selection}}\\
\addlinespace[2pt]
$b_{\rm refl}$ & Maximum number of advice decisions flagged by reflection per eligible episode; distinct from the modeled proposal cap $b$.\\
$g_k$ & Advisor context before decision $k$: visible interaction, tool schemas, and earlier advice.\\
$h_k$ & Native executor history before response $k$; distinct from the theoretical advice prefix $h$.\\
$C_k^+$ & Paired scoring context containing pre-response history, tool schemas, and issued advice.\\
$C_k^-$ & The same scoring context with the issued advice omitted.\\
$\zeta_{i,k}$ & Feedback block with local execution evidence, relevant checks, episode score, and reflection feedback, prepended to the teacher's context at decision $k$ in episode $i$.\\
$c_k$ & Signed mean token log-score contrast for the same recorded executor response, with issued advice versus no advice.\\
$d_j$ & Corresponding contrast for pilot pair $j$, replacing issued advice with advice borrowed from another task.\\
$\epsilon_c$ & Per-run fixed selection threshold, calibrated from an empirical quantile of absolute pilot-donor contrasts $|d_j|$.\\
\bottomrule
\end{tabularx}
\end{center}

\section{A single update through the executor}
\label{arc:app:channel}
\label{arc:app:graded}

Lemma~\ref{arc:lem:channel} separates fitting a feedback-conditioned
teacher from improving execution. The two differ because the
distillation loss compares advice-token distributions, whereas task
performance depends on how the executor responds to the completed
advice. We first express both at one recorded prefix so that they can
be compared directly. This gives the gradient identity in
Section~\ref{arc:sec:channel} and an extension to situations where
execution depends only weakly on the next token. The extension matters because exact insensitivity is a boundary case:
it quantifies how weak token--execution dependence limits the value
component when the reward range and gradient scale are controlled.

Fix an interaction situation $\sigma$, including the environment state
and the advisor's and executor's histories before advice. For complete
advice $a$, let $K_\sigma(a)$ denote the law of the subsequent executor
response, tool outcomes, transitions, and termination. The advice text
itself is excluded from this law; otherwise changing the advice would
change the recorded outcome even if the executor behaved identically.
At a recorded advice prefix $h$, choose a next token $v$ and complete
the advice using the pre-update advisor. If $C_{h,v}$ is the resulting
distribution over complete advice strings, the induced execution law is
\[
K_{h,v}=\mathbb E_{a\sim C_{h,v}}K_\sigma(a).
\]
Thus $K_{h,v}$ already averages over stochastic advice completion and
execution. For a bounded task score $W\in[w_-,w_+]$, define
\begin{equation}
V_h(v)=\mathbb E_{K_{h,v}}W,\qquad
J_h(\theta)=\sum_{v\in S_h}p_\theta(v)V_h(v).
\label{arc:eq:channel}
\end{equation}
The first quantity is the expected score after choosing $v$; the second
averages these values using the student's next-token probabilities.
Only this next-token distribution varies in $J_h$. Holding the
completion and execution laws fixed allows us to ask what changing the
token preference alone would accomplish, without also changing the
policies used for later decisions.

\begin{arcassumption}[Fixed-prefix conditional update]
\label{arc:ass:update}
The finite support $S_h$, context, prefix, completion and execution
laws, objective, and loss weights are fixed during differentiation.
Student logits are differentiable near the pre-update parameters
$\bar\theta$; the student $p_\theta$ and detached teacher $q_h$ are
positive and normalized on $S_h$. If the teacher is random,
$\mathbb E[-\log q_h(v)]<\infty$ for each supported token.
\end{arcassumption}

The support can be the full vocabulary or a restricted set fixed for
the update, as in Section~\ref{sec:objective}. Positivity ensures that
the log ratios and reverse KL are finite, and finite softmax logits
satisfy it. A token whose student probability is identically zero
in a neighborhood of $\bar\theta$ can be omitted from the support;
in contrast, a zero teacher probability at positive student mass makes
the reverse KL infinite and is not covered here.

Write $\bar p=p_{\bar\theta}$,
$\phi_h(v)=\nabla_\theta\log p_\theta(v)|_{\bar\theta}$, and
$R_W=w_+-w_-$. The vector $\phi_h(v)$ describes how the student's
log-probability of $v$ changes with its parameters. To compare a
feedback-conditioned teacher with execution value, introduce the reference
distribution
\[
q_h^\alpha(v)=\frac{\bar p(v)e^{\alpha V_h(v)}}{Z_\alpha},\qquad
Z_\alpha=\sum_{u\in S_h}\bar p(u)e^{\alpha V_h(u)},\quad \alpha\ge0.
\]
For positive $\alpha$, this distribution shifts probability toward
higher-value tokens, but only as far as a penalty for departing from
the current student allows. Indeed, for any distribution $q$
on $S_h$,
\[
\mathbb E_qV_h-\alpha^{-1}\KL(q\|\bar p)
=\alpha^{-1}\log Z_\alpha-\alpha^{-1}\KL(q\|q_h^\alpha).
\]
The right-hand side is uniquely maximized at $q=q_h^\alpha$.
At $\alpha=0$, the definition reduces to $q_h^0=\bar p$.
\method never constructs this reference teacher. We use it only as a
comparison with a known relation to execution value, which shows what
changes when the actual feedback-conditioned teacher is used instead.

For the weak-dependence statement, define
\[
F_h=\mathbb E_{\bar p}\|\phi_h\|_2^2,\qquad
I_h=\operatorname{I}(X;Z),\quad X\sim\bar p,\quad Z\sim K_{h,X}.
\]
Here $F_h$ controls the scale of the log-probability gradients, and
$I_h$ measures how informative the sampled token is about the ensuing
execution. Both belong to the conditional model just defined; neither
is the predictive contrast used by \method.

\begin{arcrestatedlemma}[Execution-value component and teacher residual]
\label{arc:lem:channel-formal}
Under Assumption~\ref{arc:ass:update}, let
$g_h=\nabla_\theta\KL(p_\theta\|q_h)|_{\bar\theta}$.
For every fixed $\alpha\ge0$,
\begin{align}
g_h&=-\alpha\nabla J_h(\bar\theta)+e_h,\qquad
e_h=\mathbb E_{\bar p}\left[\phi_h\log\frac{q_h^\alpha}{q_h}\right],
\label{arc:eq:decomposition-formal}\\
\|g_h-e_h\|_2&=\alpha\|\nabla J_h(\bar\theta)\|_2
\le\alpha R_W\sqrt{F_hI_h/2}.
\label{arc:eq:graded-formal}
\end{align}
If all supported execution laws $K_{h,v}$ coincide, then
\[
\nabla J_h(\bar\theta)=0,\qquad q_h^\alpha=\bar p,\qquad g_h=e_h.
\]
The identities hold for each realized teacher and may be averaged
over its law.
\end{arcrestatedlemma}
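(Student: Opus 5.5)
The plan is to compute the reverse-KL gradient directly at $\bar\theta$ and then split the log-ratio using the reference $q_h^\alpha$. First, since $p_\theta$ and $q_h$ are positive on the fixed finite support $S_h$ (Assumption~\ref{arc:ass:update}), we can differentiate term by term:
\[
\nabla_\theta\KL(p_\theta\|q_h)=\sum_{v\in S_h}\nabla p_\theta(v)\Bigl[\log\tfrac{p_\theta(v)}{q_h(v)}+1\Bigr].
\]
Normalization gives $\sum_v\nabla p_\theta(v)=0$, so the constant drops. Evaluating at $\bar\theta$ and writing $\nabla p=\bar p\,\phi_h$ yields $g_h=\mathbb E_{\bar p}[\phi_h\log(\bar p/q_h)]$. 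The score identity $\mathbb E_{\bar p}\phi_h=0$ will be used repeatedly: it lets us add or drop any $v$-independent constant inside such expectations.

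Next, decompose $\log(\bar p/q_h)=\log(\bar p/q_h^\alpha)+\log(q_h^\alpha/q_h)$. By the definition of $q_h^\alpha$, the first piece equals $-\alpha V_h(v)+\log Z_\alpha$. The $\log Z_\alpha$ term disappears by the score identity. The remaining term is $-\alpha\,\mathbb E_{\bar p}[\phi_h V_h]$, and this equals $-\alpha\nabla J_h(\bar\theta)$ because $J_h$ is linear in $p_\theta$ with fixed values $V_h$. The second piece is exactly $e_h$, which gives \eqref{arc:eq:decomposition-formal} for every $\alpha\ge0$. For a random teacher, the identity holds pathwise. The integrability assumption $\mathbb E[-\log q_h(v)]<\infty$, together with the finiteness of $S_h$, makes $e_h$ integrable, so the identity can be averaged by linearity.

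For the insensitive case, if all $K_{h,v}$ coincide then $V_h$ is constant on $S_h$. Hence $q_h^\alpha\propto\bar p$, that is, $q_h^\alpha=\bar p$. Also $\nabla J_h=V\,\mathbb E_{\bar p}\phi_h=0$, so $g_h=e_h$.

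For the bound \eqref{arc:eq:graded-formal}, the equality $\|g_h-e_h\|_2=\alpha\|\nabla J_h\|_2$ follows immediately from the decomposition. Let $\bar K=\mathbb E_{X\sim\bar p}K_{h,X}$ be the marginal law of $Z$. Using the score identity, we recenter:
\[
\nabla J_h=\mathbb E_{\bar p}\bigl[\phi_h(V_h-\bar V)\bigr],\qquad \bar V=\mathbb E_{\bar K}W=\mathbb E_{\bar p}V_h.
\]
Cauchy--Schwarz then gives
\[
\|\nabla J_h\|_2\le\sqrt{F_h}\,\sqrt{\mathbb E_{\bar p}(V_h-\bar V)^2}.
\]
To control the second factor, use the fact that $W$ takes values in an interval of length $R_W$. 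This gives $|V_h(v)-\bar V|\le R_W\operatorname{TV}(K_{h,v},\bar K)$. Pinsker's inequality bounds the total variation by $\sqrt{\KL(K_{h,v}\|\bar K)/2}$. Squaring and averaging over $X\sim\bar p$ gives $R_W^2\,\mathbb E_{\bar p}\KL(K_{h,X}\|\bar K)/2=R_W^2I_h/2$, since $I(X;Z)=\mathbb E_X\KL(K_{h,X}\|\bar K)$.

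The main technical care is in this last step. One must justify the oscillation--TV inequality for general (possibly non-discrete) outcome spaces. One must also identify the mutual information with the averaged KL to the mixture; this identification is valid because $X$ ranges over a finite set, so $K_{h,v}\ll\bar K$ automatically. Everything else is algebra on a finite support.
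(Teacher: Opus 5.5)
Your proposal is correct and follows the paper's proof essentially step for step. Both arguments use the same ingredients: the score identity to drop constants, the split of $\log(\bar p/q_h)$ through $q_h^\alpha$, then recentering, Cauchy--Schwarz, the bounded-function TV bound, Pinsker, and the mixture form of $I_h$. The paper additionally states the finiteness bound $\KL(K_{h,v}\|\bar K)\le-\log\bar p(v)$, which follows from $\bar K\ge\bar p(v)K_{h,v}$, the same observation behind your absolute-continuity remark.
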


\begin{proof}
We begin with the gradient of the conditional distillation loss.
Differentiating $\sum_v p_\theta(v)=1$ gives
\[
\mathbb E_{\bar p}\phi_h
=\sum_{v\in S_h}\nabla_\theta p_\theta(v)|_{\bar\theta}=0.
\]
Because the teacher is held fixed, differentiating the reverse KL
gives
\[
\begin{aligned}
g_h
&=\sum_{v\in S_h}\bar p(v)\phi_h(v)
  \left[\log\frac{\bar p(v)}{q_h(v)}+1\right]\\
&=\mathbb E_{\bar p}\left[\phi_h\log\frac{\bar p}{q_h}\right].
\end{aligned}
\]
The reference teacher lets us rewrite the log ratio as
\[
\log\frac{\bar p(v)}{q_h(v)}
=\log\frac{q_h^\alpha(v)}{q_h(v)}-\alpha V_h(v)+\log Z_\alpha.
\]
The last term is constant across tokens, so its contribution vanishes
by the zero-mean score identity. Moreover, the fixed execution values
in Eq.~\eqref{arc:eq:channel} satisfy
\[
\nabla J_h(\bar\theta)
=\sum_v\bar p(v)\phi_h(v)V_h(v)
=\mathbb E_{\bar p}[\phi_hV_h].
\]
Substituting these two facts gives
$g_h=e_h-\alpha\nabla J_h(\bar\theta)$, which is the identity in
Eq.~\eqref{arc:eq:decomposition-formal}. If all $K_{h,v}$ coincide,
all tokens have the same value. The value gradient is then zero,
and the common exponential factor cancels in $q_h^\alpha$.
Consequently $q_h^\alpha=\bar p$ and $g_h=e_h$.

To obtain the quantitative bound, consider the execution mixture
$\nu_h=\sum_v\bar p(v)K_{h,v}$. Its expected score is
$\mathbb E_{\nu_h}W=J_h(\bar\theta)$. Centering the value in the
gradient is permissible because $\mathbb E_{\bar p}\phi_h=0$, giving
\[
\nabla J_h(\bar\theta)
=\mathbb E_{\bar p}\!\left[
\phi_h(v)\bigl(V_h(v)-J_h(\bar\theta)\bigr)\right].
\]
The vector Cauchy--Schwarz inequality now gives
\begin{equation}
\|\nabla J_h(\bar\theta)\|_2^2
\le F_h\operatorname{Var}_{\bar p}(V_h).
\label{arc:eq:graded}
\end{equation}
It remains to relate the variation in token values to the variation in
their execution laws. Because $W$ has range $R_W$, the
bounded-function characterization of total variation gives
\[
|V_h(v)-J_h(\bar\theta)|
\le R_W\operatorname{TV}(K_{h,v},\nu_h).
\]
Pinsker's inequality then implies
\[
\bigl(V_h(v)-J_h(\bar\theta)\bigr)^2
\le \frac{R_W^2}{2}\KL(K_{h,v}\|\nu_h).
\]
Averaging over the token choice and using the conditional-law
expression for mutual information yields
\[
\operatorname{Var}_{\bar p}(V_h)
\le\frac{R_W^2}{2}
\sum_v\bar p(v)\KL(K_{h,v}\|\nu_h)
=\frac{R_W^2 I_h}{2}.
\]
Combining this with Eq.~\eqref{arc:eq:graded} proves
Eq.~\eqref{arc:eq:graded-formal}. All terms are finite:
$\nu_h\ge\bar p(v)K_{h,v}$ implies
$\KL(K_{h,v}\|\nu_h)\le-\log\bar p(v)$, and the support is finite
and positive. Finally, the teacher log-moment assumption makes each
coordinate of the teacher-dependent gradient integrable, which permits
the stated averaging over a random teacher.
\end{proof}

The identity explains what fitting the reference teacher would do:
its gradient-descent direction is $\alpha\nabla J_h$. Fitting another
teacher adds the residual direction $-e_h$, which can reinforce or
oppose value ascent. When execution is insensitive to the next token,
there is no local value component at all, but teacher fitting can still
move the advisor. The bound extends this observation to execution that
is only nearly insensitive: for fixed $\alpha$, with the reward range
and gradient scale controlled, weak token--execution dependence makes
the value component small. The bound says nothing about the size of
the residual.

The comparison holds at the pre-update parameters; it makes no claim
about the outcome of a finite optimization step. The reference parameter
$\alpha$ changes the decomposition but not the actual gradient $g_h$;
the residual need not be orthogonal to value ascent and is not a
uniquely identified causal component. Likewise, the fixed-prefix
derivative does not differentiate either the continuation policy or
the probability of visiting the recorded prefix. It is therefore not,
in general, a full-policy execution gradient or a full-sequence KL
gradient. Complete-advice insensitivity implies the token-level
condition only when the advice family covers every supported
completion, rather than a few selected strings.

For advisor learning, this means that an update with no local
execution-value component can still affect advice elsewhere through
shared parameters, and the lemma alone does not say whether that effect
helps or hurts. Appendix~\ref{arc:app:main-specialization} examines
repeated learning under explicit teacher assumptions and shows how
persistent, weaker corrections can limit performance, which motivates
targeted supervision. Neither result, however, claims that every
insensitive correction is harmful, and neither identifies \method's
predictive score with $I_h$.

\section{Retained corrections and the learning limit}
\label{arc:app:dynamics}

The local analysis in Appendix~\ref{arc:app:channel} leaves open what
happens when the advisor repeatedly learns from feedback. An update
changes its advice, which changes the failures encountered on later
rollouts and hence the corrections available for training. This
appendix makes that feedback loop explicit. We first prove
Theorem~\ref{arc:thm:limit} for the two-teacher model, then add reward
learning to establish Theorem~\ref{arc:thm:combined} and
Corollary~\ref{arc:cor:exposure}. Finally, we allow stationary random
teachers and action-dependent selection to identify which parts of the
argument depend on having only one teacher per type.

\subsection{Proof of Theorem~\ref{arc:thm:limit} in the two-teacher model}
\label{arc:app:main-specialization}

Theorem~\ref{arc:thm:limit} concerns a shared preference used in two
kinds of situation. Advice can improve execution in one kind, but not
the other. The proof will show how the mixture of retained corrections
sets the target of this shared preference. In particular, we must
account for the actual episode loss: averaging over a random number
of retained corrections is not the same as replacing that number by
its expectation.

Let $\beta\in(0,1)$ be the probability of an insensitive situation.
Both advice choices induce the same execution law there, with success
probability $s_I\in(0,1)$. In sensitive situations, choices 1 and 2
succeed with probabilities $s_1$ and $s_2$, where
$0<s_2<s_1<1$. The advisor chooses advice 1 with probability
$p(\theta)=1/(1+e^{-\theta})$, using the same log-odds
$\theta\in\mathbb R$ in every situation. Its expected success is
therefore
\[
J(\theta)=\beta s_I+(1-\beta)
\bigl[p(\theta)s_1+(1-p(\theta))s_2\bigr],
\qquad
J'(\theta)=(1-\beta)(s_1-s_2)p(1-p)>0.
\]
Increasing the shared preference for advice 1 thus improves success,
even though it has no effect within insensitive situations.

An insensitive failure supplies teacher $Q_I$, and a sensitive failure
supplies $Q_S$. These distributions are fixed and positive on both
choices. Write $\ell_j=\log[Q_j(1)/Q_j(2)]$ for their target log-odds
and assume $\ell_I<\ell_S$. This ordering means that the teacher from
an insensitive failure gives weaker support to advice 1. This is an
assumption about the feedback; equal execution laws do not imply it.
For example, the neutral teacher $Q_I=(0.5,0.5)$ and the more decisive
$Q_S=(0.8,0.2)$ satisfy it, and neither favors advice 2.

Each episode contains $M$ independent draws of situation, advice, and
outcome from this model, where $M$ is a fixed positive integer. If $N$
failures occur, a uniformly sampled subset of size $\min(N,b)$ is
proposed, where $b\ge1$ is a fixed integer cap. Each proposal is then retained
independently with probability $r_I$ or $r_S$, according to its type,
with $r_I,r_S\in(0,1]$. The episode loss averages reverse KL over the
retained corrections and is zero when none remain. As in the main
text, the sampled data and selections are held fixed when computing
the gradient; expectation is taken only afterwards.

For the calculation, write
\[
f_I=1-s_I,\qquad f_1=1-s_1,\qquad f_2=1-s_2,\qquad
F(p)=pf_1+(1-p)f_2.
\]
Here $F(p)$ is the sensitive failure probability, and
$0<f_1<f_2<1$. We also use
$A=\beta f_Ir_I$ and $V(p)=(1-\beta)F(p)r_S$ for the probabilities
that an individual opportunity is, respectively, an insensitive or
sensitive failure that would pass retention. These quantities do not
yet include the proposal cap.

\begin{proof}[Proof of Theorem~\ref{arc:thm:limit}]
For a fixed teacher $Q=(Q(1),Q(2))$ with log-odds $\ell$, the
student's reverse KL is
\[
\KL((p,1-p)\|Q)
=p\log\frac{p}{Q(1)}+(1-p)\log\frac{1-p}{Q(2)}.
\]
Using $dp/d\theta=p(1-p)$ and $\log[p/(1-p)]=\theta$, its derivative
is
\begin{equation}
\frac{d}{d\theta}\KL((p,1-p)\|Q)
=p(1-p)\left[\log\frac{p}{1-p}-\log\frac{Q(1)}{Q(2)}\right]
=p(1-p)(\theta-\ell).
\label{arc:eq:spec-onepull}
\end{equation}
Thus each correction pulls the shared log-odds toward its teacher's
target. The remaining question is which targets appear in the
episode's retained average.

Each opportunity fails independently with probability
$f=\beta f_I+(1-\beta)F(p)$, so $N\sim\operatorname{Binomial}(M,f)$.
Conditional on failure, its type is insensitive with probability
$\beta f_I/f$. Conditional on $N=n$, the types of the $n$ failures
are independent draws from this conditional type law. Sampling
$\min(n,b)$ of their indices uniformly, without using their types,
preserves that law for the proposed corrections. This statement
averages over the failure types; it does not assert independence
after an entire finite pool of typed failures has been fixed.

A proposed correction is retained and insensitive with probability
$q_I=\beta f_Ir_I/f=A/f$, retained and sensitive with probability
$q_S=(1-\beta)F(p)r_S/f=V(p)/f$, and otherwise dropped.
These outcomes are independent across the proposed positions,
conditional on $N=n$. Conditioning on the positions retained leaves
their types independent with the renormalized probabilities
$q_I/(q_I+q_S)$ and $q_S/(q_I+q_S)$. In particular, for every positive
retained count $K=k$, the probability that a retained correction is
insensitive and the expected average teacher log-odds are
\begin{equation}
\omega_I(\theta)=\frac{A}{A+V(p)},\qquad
\mu(\theta)=\omega_I(\theta)\ell_I+
\bigl[1-\omega_I(\theta)\bigr]\ell_S.
\label{arc:eq:spec-retained-mixture}
\end{equation}
Neither expression depends on the failure count or retained count,
so the same conditional mean holds after averaging over those counts.

For a realized episode with $K>0$, differentiating its loss while
holding the sample fixed gives
\[
\nabla_\theta L_{\rm ep}
=p(1-p)\left[\theta-\frac1K
\sum_{k\ \mathrm{retained}}\ell_{j(k)}\right],
\]
where $j(k)$ is the correction's type. The derivative is zero for
$K=0$. Conditional averaging using
Eq.~\eqref{arc:eq:spec-retained-mixture} therefore yields
\[
g(\theta)=\mathbb E[\nabla_\theta L_{\rm ep}]
=\Pr(K>0)\,p(1-p)[\theta-\mu(\theta)]
=B(\theta)p(1-p)[\theta-\mu(\theta)].
\]
This proves Eq.~\eqref{arc:eq:gradient} with the actual random
denominator of the episode loss. It does not differentiate the
sampling distribution: the dependence of $B$ and $\mu$ on $\theta$
describes how the expected sampled gradient changes between updates,
not additional derivative terms within an update.

For completeness, let $u=q_I+q_S=(A+V(p))/f$ be a proposal's
retention probability. Conditional on $N=n$, the probability of
retaining at least one of the $\min(n,b)$ proposals is
$1-(1-u)^{\min(n,b)}$. Hence
\begin{equation}
B(\theta)=\sum_{n=1}^M\binom Mn f^n(1-f)^{M-n}
\left[1-(1-u)^{\min(n,b)}\right]>0.
\label{arc:eq:spec-exposure}
\end{equation}
The cap and episode size change this exposure, but do not change the
conditional mean target in Eq.~\eqref{arc:eq:spec-retained-mixture}.

We next determine how that target changes as advice improves.
Since $F'(p)=f_1-f_2<0$, the sensitive retained-failure weight $V(p)$
decreases with $p$, whereas the insensitive weight $A$ stays fixed.
Consequently,
\[
\frac{d\omega_I}{dp}
=\frac{A(1-\beta)r_S(f_2-f_1)}{[A+V(p)]^2}>0,
\qquad
\mu'(\theta)=-(\ell_S-\ell_I)\omega_I'(\theta)<0.
\]
The learning target therefore falls as advice improves: sensitive
failures become rarer, and the weaker insensitive teacher supplies a
larger share of the retained supervision.

Let $H(\theta)=\theta-\mu(\theta)$. Its derivative satisfies
$H'(\theta)>1$, while $\mu(\theta)$ remains strictly between
$\ell_I$ and $\ell_S$. It follows that $H$ tends to opposite infinities
at the two ends of the real line and has a unique zero
$\theta^*=\mu(\theta^*)\in(\ell_I,\ell_S)$.
The distillation flow is
$\dot\theta=-B(\theta)p(1-p)H(\theta)$.
Its field is smooth and bounded: $B\le1$, the target is bounded,
and $p(1-p)|\theta|$ is bounded. Solutions therefore exist uniquely
for all time. Since $B(\theta)p(1-p)>0$ at every finite $\theta$,
the field points upward below $\theta^*$ and downward above it.
By uniqueness, a solution cannot cross this equilibrium. It is thus
monotone and bounded, and its limit must be the sole zero of the field,
$\theta^*$. This proves convergence from every finite initialization.

Finally, divide the numerator and denominator of $\omega_I$ by
$r_S$ and set $\rho=r_I/r_S$. Then
\[
\omega_I(\theta;\rho)
=\frac{\rho\beta f_I}{\rho\beta f_I+(1-\beta)F(p)},
\qquad
\partial_\rho\omega_I
=\frac{\beta f_I(1-\beta)F(p)}
{[\rho\beta f_I+(1-\beta)F(p)]^2}>0.
\]
Thus $\partial_\rho\mu<0$: reducing the relative retention of
insensitive corrections raises the mean target at every fixed
preference. Implicitly differentiating the equilibrium equation gives
\begin{equation}
\frac{d\theta^*}{d\rho}
=\frac{\partial_\rho\mu(\theta^*;\rho)}
{1-\partial_\theta\mu(\theta^*;\rho)}<0.
\label{arc:eq:spec-retention-shift}
\end{equation}
The denominator is positive by the monotonicity just established,
and $J'(\theta)>0$, so the limiting success also increases as
$\rho$ decreases. The equilibrium equation contains neither $M$ nor
$b$, and depends on retention only through $\rho$. Changing episode
size or proposal cap, or multiplying both retention probabilities by
the same admissible positive factor, therefore leaves the limit
unchanged. This completes the proof.
\end{proof}

The result distinguishes the composition of supervision from its
frequency. Selective retention changes the target $\mu$ and hence the
equilibrium. Uniform thinning changes $B$ but leaves that target
unchanged. It can change how quickly the flow moves, though not
necessarily by a constant rescaling of time, because $B$ depends
nonlinearly on the retention probabilities. This distinction is why we
compare \method with a count-matched control as well as with learning
from every proposal. With reward learning, exposure also affects the
limit, as the next subsection shows.

\paragraph{An illustration with neutral and informative teachers.}
Take $\beta=s_I=1/2$, $s_1=0.9$, $s_2=0.1$,
$Q_I=(0.5,0.5)$, and $Q_S=(0.8,0.2)$. The insensitive teacher is
neutral, whereas the sensitive teacher favors the more successful
advice. Solving $\theta=\mu(\theta)$ gives the following values,
rounded to three decimals:
\begin{center}
\begin{tabular}{@{}lrrrr@{}}
\toprule
Retention ratio $\rho$ & $\theta^*$ & $p(\theta^*)$
& $\omega_I(\theta^*)$ & $J(\theta^*)$\\
\midrule
$1$ & $0.601$ & $0.646$ & $0.566$ & $0.558$\\
$1/4$ & $0.994$ & $0.730$ & $0.283$ & $0.592$\\
Limit as $\rho\downarrow0$ & $\log 4$ & $0.800$ & $0$ & $0.620$\\
\bottomrule
\end{tabular}
\end{center}
With equal retention, insensitive corrections make up more than half
of supervision at the equilibrium, even though they come from half of
the situations. Retaining them at one quarter of the sensitive rate
reduces their share and raises the learned preference. As their
relative retention tends to zero, the advisor approaches the sensitive
teacher's own preference of $0.8$. The last row is a limiting case; the
theorem itself still assumes strictly positive retention.
For comparison, $J=0.5$ at $p=1/2$ and $J=0.7$ at $p=1$.
Selection improves the distillation equilibrium in this example, but a
teacher of finite strength still does not lead the advisor to the
success maximizer.

\paragraph{What changes when the teacher changes between updates.}
\label{arc:app:self-teachers}
The preceding theorem fixes teacher targets over repeated learning.
This is different from detaching a teacher for one update.
Lemma~\ref{arc:lem:channel} only requires the latter: its identity can
be applied anew at each snapshot, even when the feedback-conditioned
teacher changes between snapshots. A statement about the limiting
preference, however, must also describe how those targets evolve.
Two simple choices show why the distinction matters.

Keep the other assumptions of Appendix~\ref{arc:app:main-specialization}
and let the detached target for type $j$ at snapshot $\bar\theta$ have
log-odds
\[
\ell_j(\bar\theta)=\kappa\bar\theta+(1-\kappa)\ell_j,
\qquad 0\le\kappa<1,
\]
with the same fixed $\kappa$ for both types and fixed
$\ell_I<\ell_S$. Each teacher now moves partway with the student
while retaining a pull toward its original target. Because the target
is detached during differentiation,
Eq.~\eqref{arc:eq:spec-onepull} uses
$\bar\theta-\ell_j(\bar\theta)=(1-\kappa)(\bar\theta-\ell_j)$.
Averaging over the unchanged proposal and retention process gives
\[
g_\kappa(\bar\theta)
=(1-\kappa)B(\bar\theta)p(1-p)
[\bar\theta-\mu(\bar\theta)]
=(1-\kappa)g(\bar\theta).
\]
The distillation-only flow consequently has the same equilibrium and
the same trajectories after a constant rescaling of time. When reward
learning is also present, this rescaling applies only to the
distillation term: its effective weight becomes
$\lambda(1-\kappa)$. The combined equilibrium can therefore change,
although Theorem~\ref{arc:thm:combined} still compares selectors at
this common positive effective weight.

Now consider detached targets that instead stay a fixed, nonnegative
distance ahead of the student's log-odds,
\[
\ell_j(\bar\theta)=\bar\theta+\delta_j,
\qquad \delta_I=0<\delta_S.
\]
The insensitive teacher then gives zero distillation gradient, while
the sensitive teacher contributes $-p(1-p)\delta_S$. The resulting
distillation flow is
\[
\dot\theta=B(\theta)p(1-p)
\bigl[1-\omega_I(\theta)\bigr]\delta_S>0
\]
at every finite $\theta$. Its field is smooth and bounded, so the
solution exists for all time and increases. It cannot have a finite
limit, because the field is strictly positive at any such limit.
Hence $\theta\to\infty$ and $p\to1$; the flow does not settle at a
finite equilibrium as in the fixed-teacher model.

We do not assume that either example describes how \method's teacher
actually evolves. Together, they show what detachment provides and what
it does not: it justifies the conditional gradient calculation, but it
does not by itself preserve a learning-limit theorem. The stationary
random-teacher extension in Appendix~\ref{arc:app:dynamics-general}
allows variation in feedback while making the needed stability of its
conditional law explicit.

\subsection{Proof of Theorem~\ref{arc:thm:combined} and Corollary~\ref{arc:cor:exposure} in the two-teacher model}
\label{arc:app:combined-two-teacher}

Adding reward learning changes the comparison between selectors because
the amount of distillation now matters as well as its mean target. We
prove that preferential retention still gives a higher limiting success
rate than retaining every proposal, and then consider uniform thinning,
which changes exposure without changing the retained mixture. Throughout,
we use the two-teacher model of Appendix~\ref{arc:app:main-specialization},
with common fixed weights $c_0\ge0$ and $\lambda>0$. Both selectors start
from the same finite logit. The reward term is exact ascent on expected
success, as in Section~\ref{sec:properties}.

Write $p=p(\theta)$, $f_I=1-s_I$, $f_1=1-s_1$, and $f_2=1-s_2$.
The sensitive failure probability is $F(p)=pf_1+(1-p)f_2$, and the
overall failure probability is $f=\beta f_I+(1-\beta)F(p)$. For a
selector $S=(r_I^S,r_S^S)$, define
\[
\begin{aligned}
A_S&=\beta f_Ir_I^S,
&V_S(p)&=(1-\beta)F(p)r_S^S,\\
\omega_S(\theta)&=\frac{A_S}{A_S+V_S(p)},
&\mu_S(\theta)&=\ell_S-(\ell_S-\ell_I)\omega_S(\theta).
\end{aligned}
\]
Thus $\mu_S$ is the mean retained teacher log-odds. A proposed failure
is retained with probability $u_S=(A_S+V_S(p))/f$. Since an episode
contains $M$ independent opportunities and proposes $\min(N,b)$ of its
$N$ failures, its probability of receiving any distillation is
\[
B_S(\theta)=\sum_{n=1}^M\binom Mn f^n(1-f)^{M-n}
\bigl[1-(1-u_S)^{\min(n,b)}\bigr].
\]
The expected sampled-loss gradient from Theorem~\ref{arc:thm:limit}
is $g_S=B_Sp(1-p)(\theta-\mu_S)$. Using
$J'(\theta)=(1-\beta)(s_1-s_2)p(1-p)$, we can therefore write the
combined update field as
\begin{equation}
F_S(\theta)=p(1-p)\bigl\{C-\lambda B_S(\theta)
   [\theta-\mu_S(\theta)]\bigr\},
\qquad C=c_0(1-\beta)(s_1-s_2)\ge0.
\label{arc:eq:tt-field}
\end{equation}
Here $S=0$ means $r_I^0=r_S^0=1$, whereas the preferential selector
$G$ has fixed independent retention rates in $(0,1]$ with
$r_I^G/r_S^G<1$.

\begin{proof}[Proof of Theorem~\ref{arc:thm:combined}]
We first compare the two selectors at a common parameter $\theta$.
Dividing the numerator and denominator of $\omega_S$ by $r_S^S$
shows that it depends on the retention rates only through their
ratio $\rho_S=r_I^S/r_S^S$:
\[
\omega_S(\theta)=
\frac{\rho_S\beta f_I}
     {\rho_S\beta f_I+(1-\beta)F(p)}.
\]
This expression is strictly increasing in $\rho_S$. Since
$\rho_G<\rho_0=1$ and $\ell_I<\ell_S$, preferential retention gives
$\mu_G(\theta)>\mu_0(\theta)$. It also gives $u_G\le u_0$, because
neither type is retained more often than under no gating. Each term
$1-(1-u_S)^{\min(n,b)}$ is increasing in $u_S$, so $B_G\le B_0$.
In particular, for every finite $\theta$,
\begin{equation}
\mu_G(\theta)>\mu_0(\theta),\qquad
0<B_G(\theta)\le B_0(\theta).
\label{arc:eq:tt-order}
\end{equation}

For later use, the exposure probability is bounded away from zero
uniformly in $\theta$. Indeed, whenever $N\ge1$, the positive cap
ensures that at least one failure is proposed. Retaining at least one
of these proposals has probability at least $u_S$. As $M\ge1$,
$\Pr(N\ge1)=1-(1-f)^M\ge f$, and consequently
\begin{equation}
B_S(\theta)\ge u_S\Pr(N\ge1)
\ge fu_S=A_S+V_S(p)\ge A_S>0.
\label{arc:eq:tt-exposure-bound}
\end{equation}
This lower bound follows from the positive probability of an
insensitive failure and its fixed positive retention rate.

Subtracting the two instances of Eq.~\eqref{arc:eq:tt-field} gives
\[
\begin{aligned}
F_G-F_0
&=\lambda p(1-p)
  \bigl[B_0(\theta-\mu_0)-B_G(\theta-\mu_G)\bigr]\\
&=\lambda p(1-p)
  \bigl[B_G(\mu_G-\mu_0)+(B_0-B_G)(\theta-\mu_0)\bigr].
\end{aligned}
\]
This is the composition--exposure identity in
Eq.~\eqref{arc:eq:joint-update}. Its composition term is strictly
positive, while the sign of its exposure term depends on whether the
current preference lies above or below the ungated teaching target.
We will use this sign change to distinguish the limiting comparison
from a comparison at every finite time.

Before comparing the trajectories, we establish that each one has a
finite limit. The field $F_S$ is smooth because all denominators in
the expressions above are positive. It is also bounded on
$\mathbb R$: $B_S\le1$, $\mu_S\in(\ell_I,\ell_S)$, and both
$p(1-p)$ and $|\theta|p(1-p)$ are bounded. The positive lower bound
in Eq.~\eqref{arc:eq:tt-exposure-bound} further gives the restoring
signs
\begin{equation}
\begin{aligned}
F_S(\theta)&>0 &&\text{if }\theta<\ell_I,\\
F_S(\theta)&<0 &&\text{if }\theta>
  \ell_S+\frac{C}{\lambda A_S}.
\end{aligned}
\label{arc:eq:tt-restoring}
\end{equation}
For the first inequality, $\theta-\mu_S<0$ makes the bracket in
Eq.~\eqref{arc:eq:tt-field} positive. For the second,
$B_S(\theta-\mu_S)\ge A_S(\theta-\ell_S)>C/\lambda$ makes it
negative. Thus we can enclose any finite initial value in a compact
interval on whose endpoints the field points inward. Smoothness gives
a unique solution, and the inward signs keep it in that interval for
all time. A nonconstant scalar autonomous trajectory cannot cross a
zero of its field: uniqueness would otherwise be violated by the
constant solution at that zero. Its direction therefore cannot
reverse, so it is monotone and bounded. It has a finite limit, and
continuity forces that limit to be a zero of $F_S$. A trajectory
started at a zero is constant and has the same conclusion. This
argument does not require the combined field to have a unique zero.

Let $a_0$ be the ungated distillation-only equilibrium. By
Theorem~\ref{arc:thm:limit}, $\theta-\mu_0(\theta)$ is strictly
increasing and vanishes at $a_0$. If $\theta<a_0$, then
$\theta<\mu_0(\theta)<\mu_G(\theta)$, so both combined fields are
positive. If $\theta\ge a_0$, the exposure term in
Eq.~\eqref{arc:eq:joint-update} is nonnegative and its composition
term is positive. We have therefore established
\begin{equation}
\begin{aligned}
F_0(\theta),\,F_G(\theta)&>0 &&(\theta<a_0),\\
F_G(\theta)&>F_0(\theta) &&(\theta\ge a_0),\\
F_0(a_0)&=p(a_0)[1-p(a_0)]C\ge0.
\end{aligned}
\label{arc:eq:tt-sign-comparison}
\end{equation}
In particular, $F_G(a_0)>0$. The half-line $[a_0,\infty)$ is
forward invariant for both flows: the gated field points inward at
its boundary, and the ungated field either points inward or has an
equilibrium there.

Now suppose the common initial value $x$ satisfies $x\ge a_0$.
Set $D(t)=\theta_G(t)-\theta_0(t)$. Initially $D(0)=0$ and
$D'(0)=F_G(x)-F_0(x)>0$, so the gated trajectory is strictly larger
for all sufficiently small positive times. If the trajectories first
met again at a time $t_1>0$, then $D'(t_1)\le0$. At that meeting
point, however, both parameters belong to $[a_0,\infty)$ and
Eq.~\eqref{arc:eq:tt-sign-comparison} gives
$D'(t_1)=F_G(\theta_G(t_1))-F_0(\theta_G(t_1))>0$, a contradiction.
Hence $\theta_G(t)>\theta_0(t)$ for every $t>0$. Their finite limits
are at least weakly ordered. Equality of these limits is impossible:
a common limit $z\ge a_0$ would be a zero of both fields, whereas
$F_G(z)>F_0(z)$. This proves strict limiting order as well as the
claimed finite-time comparison for starts at or above $a_0$.

It remains to compare limits when $x<a_0$, where exposure can prevent
such a finite-time ordering. If $c_0=0$, the ungated flow converges
to $a_0$ by Theorem~\ref{arc:thm:limit}. The gated field is positive
throughout $(-\infty,a_0]$, so its finite limiting equilibrium must
lie strictly above $a_0$. If $c_0>0$, then both fields are positive
on $(-\infty,a_0]$, including $a_0$ itself. Let $z_0$ be the first
zero of $F_0$ above $x$. Such a zero exists because the field is
positive at $x$ and negative sufficiently far to the right, and
$z_0>a_0$ because both fields are positive through $a_0$. The ungated
trajectory increases to $z_0$. On
$(a_0,z_0)$, its field is positive, so
$F_G>F_0>0$ there; at $z_0$, the strict field comparison gives
$F_G(z_0)>0$. Together with positivity below $a_0$, this shows that
$F_G$ has no zero anywhere from $x$ through $z_0$. Its finite
limiting equilibrium must therefore lie strictly above $z_0$.
In both cases $\theta_G^\infty>\theta_0^\infty$. Finally,
$J'(\theta)>0$ for every finite $\theta$, giving
$J(\theta_G^\infty)>J(\theta_0^\infty)$ as claimed.
\end{proof}

The same identity explains why a gain over no gating need not come
from a better correction mixture. Retaining every proposal with the
same probability leaves the conditional mean target unchanged, but
reduces how often that target is applied. This gives the following
comparison under the same common reward objective and initialization.

\begin{arccorollary}[Uniform thinning changes exposure, not composition]
\label{arc:cor:exposure}
If every proposal is retained independently with the same fixed
probability $r\in(0,1]$, the limiting preference is at least that
of no gating under the common reward objective. Without reward
learning ($c_0=0$), the two limits coincide.
\end{arccorollary}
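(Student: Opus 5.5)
Let $U$ denote uniform thinning with $r_I^U=r_S^U=r$, and compare it with no gating in the field of Eq.~\eqref{arc:eq:tt-field}. Because $\omega_S$ depends on the retention rates only through $\rho_S=r_I^S/r_S^S$, and $\rho_U=1=\rho_0$, we get $\mu_U(\theta)=\mu_0(\theta)$ at every $\theta$. The composition term in Eq.~\eqref{arc:eq:joint-update} therefore vanishes, leaving
\[
F_U(\theta)-F_0(\theta)=\lambda p(1-p)\,(B_0-B_U)(\theta-\mu_0).
\]
The acceptance probability satisfies $u_U=r\,u_0\le u_0$, so $B_U\le B_0$ by the monotonicity used for Eq.~\eqref{arc:eq:tt-order}, with strict inequality when $r<1$. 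For $r=1$ the two dynamics coincide and nothing remains to prove, so assume $r<1$.

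For $c_0=0$, both fields equal $-\lambda B_S p(1-p)(\theta-\mu_0)$. Their sign pattern is governed by $H(\theta)=\theta-\mu_0(\theta)$, and since $B_U>0$ by Eq.~\eqref{arc:eq:tt-exposure-bound}, the unique zero of both is $a_0$. The convergence argument in the proof of Theorem~\ref{arc:thm:limit} then applies verbatim, giving the common limit $a_0$. This is also the statement of Theorem~\ref{arc:thm:limit} that scaling both retention probabilities leaves the limit unchanged.

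For $c_0>0$, I would reuse the trajectory machinery in the proof of Theorem~\ref{arc:thm:combined}: bounded smooth field, inward-pointing restoring signs from Eq.~\eqref{arc:eq:tt-restoring} with $A_U=\beta f_I r>0$, and monotone trajectories converging to zeros. The sign comparison to establish is
\[
F_U,F_0>0 \text{ for } \theta<a_0,\qquad F_U\ge F_0 \text{ for } \theta\ge a_0,
\]
with strict inequality above $a_0$. This follows because $\theta-\mu_0\ge0$ there. Also $F_0(a_0)=F_U(a_0)=p(1-p)C>0$, so neither field vanishes at or below $a_0$, and every zero of either field lies strictly above $a_0$.

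The limit comparison then mirrors the last step of the theorem's proof. Let $x$ be the common initialization and $z_0$ the first zero of $F_0$ above $\max(x,a_0)$ in the relevant case. On $[\min(x,a_0),z_0)$ we have $F_U\ge F_0>0$ or both positive, and at $z_0$ we have $F_U(z_0)\ge F_0(z_0)=0$. Hence the uniformly thinned trajectory passes at least up to $z_0$, so $\theta_U^\infty\ge z_0=\theta_0^\infty$. The case $x\ge z_0$ needs care: the ungated trajectory then decreases toward the largest zero of $F_0$ below $x$. I would handle it by the symmetric argument on the region where $F_0<0$, using $F_U\ge F_0$ to show $F_U$ has no zero below that point before $\theta_0^\infty$.

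I expect the main obstacle to be the weak inequality under multiple equilibria. At a zero $z$ of $F_0$ above $a_0$, $F_U(z)>0$ strictly when $r<1$, so $z$ cannot be a limit of the $U$-flow approaching from below. The delicate part is arguing that the $U$-trajectory's limit is not below $\theta_0^\infty$ when both start in a region where the fields are negative. There I would use a comparison argument for scalar ODEs with $F_U\ge F_0$ pointwise, as in the $D(t)$ argument, to get $\theta_U(t)\ge\theta_0(t)$ for all $t$. That ordering gives the limit inequality directly and covers all cases uniformly. Monotonicity of $J$ then transfers the ordering to success if desired.
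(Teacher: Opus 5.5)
Your proposal is correct and follows essentially the paper's route: the composition term vanishes with $0<B_U\le B_0$, the case $c_0=0$ follows from the common unique zero $a_0$, and for $c_0>0$ you combine the scalar comparison principle on the forward-invariant half-line $[a_0,\infty)$ with the first-zero argument for starts below $a_0$. Two refinements are needed: keep the comparison step confined to $[a_0,\infty)$, because $F_U<F_0$ below $a_0$ when $r<1$ and so it does not cover all starts uniformly; and drop the ``symmetric'' sign argument for a decreasing ungated trajectory, because $F_U\ge F_0$ says nothing about the sign of $F_U$ where $F_0<0$, so in that case the comparison principle, or simply $F_U(\theta_0^\infty)\ge F_0(\theta_0^\infty)=0$, is what does the work.
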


\begin{proof}
Taking $r_I^G=r_S^G=r$ leaves the retention ratio equal to one,
so $\mu_G\equiv\mu_0$, while $0<B_G\le B_0$. Moreover,
Eq.~\eqref{arc:eq:tt-exposure-bound} gives
$B_G\ge rf\ge r\beta(1-s_I)>0$. Thus both fields have the restoring
signs and finite limiting equilibria established in the preceding
proof. Their difference now reduces to
\begin{equation}
F_G(\theta)-F_0(\theta)
=\lambda p(1-p)(B_0-B_G)[\theta-\mu_0(\theta)].
\label{arc:eq:tt-uniform-thinning}
\end{equation}
When $c_0=0$, the two fields have the same sign as
$\mu_0(\theta)-\theta$ and vanish only at $a_0$. Both flows therefore
converge to $a_0$ from every common finite initialization.

Suppose instead that $c_0>0$. On $[a_0,\infty)$,
Eq.~\eqref{arc:eq:tt-uniform-thinning} gives $F_G\ge F_0$, and
$F_G(a_0)=F_0(a_0)=p(a_0)[1-p(a_0)]C>0$. This half-line is
forward invariant. The scalar comparison principle for locally
Lipschitz fields then gives $\theta_G(t)\ge\theta_0(t)$ from a
common initial value in this half-line, hence the same weak ordering
of their limits.

For a common initial value below $a_0$, both fields are positive up
to and including $a_0$. As in the preceding proof, the ungated
trajectory increases to its first equilibrium $z_0>a_0$. On
$(a_0,z_0)$ we have $F_G\ge F_0>0$, so the gated field has no zero
between the common start and $z_0$. At $z_0$ it satisfies only
$F_G(z_0)\ge0$, which allows its limiting equilibrium to equal
$z_0$. In all cases its limit is at least $z_0$, proving the weak
comparison. The proof does not require strict inequality at $z_0$;
in particular, $r=1$ gives identical flows.
\end{proof}

Appendix~\ref{arc:app:dynamics-general} extends these arguments to
stationary random teachers, including conditions for comparisons
when selection changes the retained targets. The independent thinning
in Corollary~\ref{arc:cor:exposure} differs from the implemented
matched-count random control, whose gate-derived per-episode quotas can
change composition (Appendix~\ref{app:selection-controls}).

For a concrete illustration, use the instance from
Appendix~\ref{arc:app:main-specialization} with
$\beta=s_I=1/2$, $s_1=0.9$, $s_2=0.1$, $Q_I=(0.5,0.5)$, and
$Q_S=(0.8,0.2)$, so $a_0=0.601$ and $p(a_0)=0.646$. Set
$M=4$, $b=2$, $c_0=1/2$, $\lambda=1$, and take the quarter-rate
gate $G=(1/4,1)$. To separate composition from exposure in this
example, introduce the hypothetical field
\[
F_R(\theta)=c_0J'(\theta)
-\lambda B_G(\theta)p(1-p)[\theta-\mu_0(\theta)].
\]
By construction, it combines the ungated composition with the gated
exposure; it is not the same-episode matched-count control. From
$\theta(0)=0$,
the three fields converge to
\begin{center}
\small
\begin{tabular}{@{}lrrr@{}}
\toprule
Dynamics & $\theta^\infty$ & $p^\infty$ & $J(\theta^\infty)$\\
\midrule
No gating ($F_0$)
& 0.794 & 0.689 & 0.576\\
Ungated composition, gated exposure ($F_R$)
& 0.876 & 0.706 & 0.582\\
Preferential selection ($F_G$)
& 1.303 & 0.786 & 0.615\\
\bottomrule
\end{tabular}
\end{center}
At the ungated limit, the composition and exposure terms on the
right-hand side of Eq.~\eqref{arc:eq:joint-update} are $0.287$ and
$0.057$, respectively. Thus changing exposure alone accounts for
part of the limiting gain in this instance, while changing the
retained target raises the limit further.

The initial effect can go the other way. Keep the same success laws,
teachers, weights, and gate, but take one opportunity per episode
($M=b=1$). At a common start of $\theta=-2$,
$F_G(-2)=0.177<F_0(-2)=0.217$, so the gated advisor starts
more slowly, although the limiting advice-1 probabilities remain
ordered: $p^\infty=0.736$, $0.824$, and $0.892$ for $F_0$, $F_R$,
and $F_G$, respectively. Below $a_0$, removing corrections can
weaken an upward teaching pull; the comparison of limits does not say
that every earlier update also improves.

The finite balance studied here depends on the common positive
distillation weight. With $c_0>0$ and no distillation, reward-only
ascent has $\dot\theta=c_0J'(\theta)>0$ at every finite logit and
approaches $p=1$, the best success rate in this model. The theorem
therefore compares preferential selection with ungated distillation.
\method's gains over outcome-only GRPO under finite training schedules
are established by the experiments.

\subsection{Stationary random teachers and action-dependent selection}
\label{arc:app:dynamics-general}\label{arc:app:combined}

The two-teacher model isolates how the mixture of retained corrections
affects learning by assigning a fixed target to each situation type.
In a feedback-based system, however, the target may vary even within
one type: the feedback can depend on the advice that was issued, and
the selector can preferentially retain some of the resulting targets.
This subsection allows both forms of variation while keeping the
two-action policy and the opportunity and execution laws fixed. We
first identify the teacher statistic that determines the expected
distillation update, then establish conditions for a unique learning
limit and for comparing selectors with a common reward objective.

The relevant stationarity requirement concerns the joint law of the
teacher and its retention, conditional on the type, action, and
failure. It does not require retention to be independent of teacher
content. As the student changes its action probabilities, the mixture
of these conditional laws can still change, and this is why the target
below depends on the parameter.

\begin{arcassumption}[Stationary random feedback and selection]
\label{arc:ass:dynamics}
Keep the opportunity and execution laws, independent sampling, uniform
capped failure proposals, and episode-averaged loss of
Appendix~\ref{arc:app:main-specialization}. Conditional on type
$j\in\{I,S\}$, issued action $a\in\{1,2\}$, and failure, the
teacher $Q$ and retention indicator have a joint law independent of
$\theta$; these pairs are independent across opportunities, and the
uniform proposal sample is independent of them. Teachers
are positive on both actions, retention has probability
$r_{ja}\in(0,1]$, and the retained teacher log-odds have a finite
absolute mean. Teachers and selections are detached when differentiating.
\end{arcassumption}

For each type and issued action, define the mean teacher log-odds
among the corrections that survive selection:
\[
\ell_{ja}=\mathbb E\!\left[
\log\frac{Q(1)}{Q(2)}\,\middle|\,j,a,\text{failure, retained}\right].
\]
Conditioning on retention matters when the selector depends on the
teacher: it records the target actually supplied to the loss, which
can differ from the mean target before selection. Note also that
$\ell_{ja}$ averages log-odds rather than taking the log-odds of the
mean teacher probabilities, because log-odds enter the reverse-KL
gradient linearly in Eq.~\eqref{arc:eq:spec-onepull}.

Using $f_I=1-s_I$ and $f_a=1-s_a$ from
Appendix~\ref{arc:app:main-specialization}, define
\begin{align}
i_a&=\beta f_Ir_{Ia},\qquad v_a=(1-\beta)f_ar_{Sa},\qquad C_a=i_a+v_a,
\nonumber\\
\bar\ell_a&=\frac{i_a\ell_{Ia}+v_a\ell_{Sa}}{C_a},\qquad
\mu(\theta)=\frac{pC_1\bar\ell_1+(1-p)C_2\bar\ell_2}
{pC_1+(1-p)C_2},\quad p=\operatorname{sigmoid}(\theta).
\label{arc:eq:mixture}
\end{align}
Conditional on issuing action $a$, the quantities $i_a$ and $v_a$
are the probabilities of failures of each type whose corrections would
be retained if proposed. Their sum
$C_a$ is positive, and $\bar\ell_a$ is the corresponding retained
mean target. Averaging across the two actions uses the weights $pC_1$
and $(1-p)C_2$ instead of the action probabilities $p$ and $1-p$,
because the actions can produce retained failures at different rates.
This gives $\mu(\theta)$, the mean log-odds of a
retained proposal at the current policy. Let $B(\theta)$ again denote
the probability of retaining at least one correction in an episode.

\begin{arctheorem}[Stationary random-teacher extension]
\label{arc:thm:limit-formal}
Under Assumption~\ref{arc:ass:dynamics}, the expected sampled-loss
gradient is $g=Bp(1-p)(\theta-\mu)$, with $B>0$.
If $\bar\ell_1-\bar\ell_2\le4$, its gradient flow converges from every
finite initialization to the unique $\theta^*=\mu(\theta^*)$.
The root lies strictly between unequal $\bar\ell_1,\bar\ell_2$, or
equals their common value, and is independent of episode size and cap.
For classwise retention $r_{ja}=r_j$, suppose additionally that
$\ell_{j2}\ge\ell_{j1}$ for each type and
$\max_a\ell_{Ia}<\min_a\ell_{Sa}$. Holding these four retained means
fixed, decreasing $r_I/r_S$ strictly increases $\theta^*$ and its
expected success.
\end{arctheorem}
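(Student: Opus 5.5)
The plan is to follow the proof of Theorem~\ref{arc:thm:limit} step by step, replacing the two fixed targets with the retained conditional means $\ell_{ja}$.

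\textbf{Gradient formula.} Each opportunity independently becomes a failure of type $j$ with action $a$ with probability $\beta f_I$ (times $p$ or $1-p$) or $(1-\beta)f_a$ (times $p$ or $1-p$). By Assumption~\ref{arc:ass:dynamics}, it carries a (teacher, retention) pair whose joint law depends only on $(j,a)$. The uniform proposal sample ignores types and teachers. Conditional on $N=n$, the proposed positions are therefore independent, and each is retained with label $(j,a)$ with probability proportional to
\[
p\,i_1,\quad p\,v_1,\quad (1-p)\,i_2,\quad (1-p)\,v_2 .
\]
Conditioning on which positions are retained leaves these labels independent with renormalized weights. The expected log-odds of each retained teacher is then $\ell_{ja}$ averaged over those weights, which is exactly $\mu(\theta)$ in Eq.~\eqref{arc:eq:mixture}. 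This holds for every positive retained count $K$.

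Using Eq.~\eqref{arc:eq:spec-onepull} on each realized episode and averaging over $K$ gives $g=B\,p(1-p)(\theta-\mu)$. Integrability follows from the finite absolute mean of the retained log-odds. Finally, $B>0$ by the same bound as Eq.~\eqref{arc:eq:tt-exposure-bound}: $B\ge pC_1+(1-p)C_2>0$, since every $C_a>0$.

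\textbf{Unique limit.} Write $w(\theta)=pC_1/(pC_1+(1-p)C_2)$. This equals $\operatorname{sigmoid}(\theta+\log(C_1/C_2))$, so $w'=w(1-w)\le 1/4$. With $\Delta=\bar\ell_1-\bar\ell_2$, we have $\mu=\bar\ell_2+\Delta w$ and hence
\[
\mu'=\Delta\, w(1-w).
\]
If $\Delta\le 4$, then $H(\theta)=\theta-\mu(\theta)$ satisfies $H'\ge 1-\Delta w(1-w)\ge 0$. Equality can occur at most at the single point $w=1/2$ when $\Delta=4$, so $H$ is strictly increasing. Because $\mu$ lies in the closed interval between $\bar\ell_1$ and $\bar\ell_2$, and strictly inside it when they differ (since $w\in(0,1)$), $H$ has a unique root located as claimed.

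Convergence then follows verbatim from the earlier sign argument. The field is smooth and bounded and points toward $\theta^*$ on both sides, so trajectories cannot cross $\theta^*$, are monotone and bounded, and converge to it. Neither $M$ nor $b$ enters $\mu$, so the limit does not depend on episode size or cap.

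\textbf{Comparative statics in $\rho=r_I/r_S$.} This is the step I expect to be the main obstacle, because $\rho$ moves both the per-action targets $\bar\ell_a$ and the action weights $C_1/C_2$. I would show that both effects lower $\mu$ at every fixed $\theta$.

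First, the per-action targets. Write $\bar\ell_a=\ell_{Sa}-(\ell_{Sa}-\ell_{Ia})\,\omega_a$ with
\[
\omega_a=\frac{\rho\beta f_I}{\rho\beta f_I+(1-\beta)f_a}.
\]
Each $\omega_a$ is strictly increasing in $\rho$, and $\ell_{Sa}>\ell_{Ia}$, so each $\bar\ell_a$ strictly decreases in $\rho$.

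Next, the ordering of the two targets. Since $f_1<f_2$, we have $\omega_1>\omega_2$. Combining this with $\ell_{I1}\le\ell_{I2}$, $\ell_{S1}\le\ell_{S2}$ and $\max_a\ell_{Ia}<\min_a\ell_{Sa}$ gives
\[
\bar\ell_1\le \ell_{S1}-(\ell_{S1}-\ell_{I2})\,\omega_2\le\bar\ell_2 .
\]
So $\Delta\le 0$, which makes the hypothesis $\Delta\le 4$ automatic.

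Then, the action weights. The ratio
\[
\frac{C_1}{C_2}=\frac{\rho\beta f_I+(1-\beta)f_1}{\rho\beta f_I+(1-\beta)f_2}
\]
increases in $\rho$, so $w$ rises and shifts weight toward the smaller target $\bar\ell_1$. This also lowers $\mu$. Hence $\partial_\rho\mu<0$ strictly.

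Finally, since $\partial_\theta\mu=\Delta\,w(1-w)\le 0$, implicit differentiation as in Eq.~\eqref{arc:eq:spec-retention-shift} gives $d\theta^*/d\rho<0$. Because $J'>0$, expected success at the limit also increases as $\rho$ decreases.
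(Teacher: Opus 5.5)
Your proposal is correct. For the gradient formula, the $\Delta\le4$ uniqueness argument via $w=\operatorname{sigmoid}(\theta+\log(C_1/C_2))$, convergence, and the final implicit differentiation, it follows the paper's proof. Your chain showing $\bar\ell_1\le\bar\ell_2$ is the same two-step replacement argument the paper gives in words.

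The genuine difference is in the comparative statics in $\rho=r_I/r_S$. You group $\mu$ by action, so $\rho$ enters in three places ($\bar\ell_1$, $\bar\ell_2$ and $w$). You must therefore invoke $\Delta\le0$ to sign the reweighting term $\Delta\,\partial_\rho w$. The paper instead groups by type. With $F=pf_1+(1-p)f_2$, $L_I=p\ell_{I1}+(1-p)\ell_{I2}$ and $L_S=[pf_1\ell_{S1}+(1-p)f_2\ell_{S2}]/F$, it writes
\[
\mu=\frac{\rho\beta f_IL_I+(1-\beta)FL_S}{\rho\beta f_I+(1-\beta)F}.
\]
Here $L_I$, $L_S$ and $F$ do not depend on $\rho$ at fixed $\theta$. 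So $\rho$ enters only through one type weight, and $\partial_\rho\mu$ has the sign of $L_I-L_S<0$, which follows from strict type separation alone.

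The paper's grouping is more economical. The within-type ordering $\ell_{j1}\le\ell_{j2}$ is needed there only to make the denominator $1-\partial_\theta\mu\ge1$, whereas your route uses it for the numerator as well. Your grouping has the advantage of staying in the theorem's own notation ($\bar\ell_a$, $C_a$). It also separates the two mechanisms by which a smaller $\rho$ raises the target: each action's retained mean rises, and retained weight shifts away from action 1, whose target is no higher.

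One minor point: $H'=1-\Delta w(1-w)$ holds with equality, not just as a lower bound.
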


\begin{proof}
We begin by taking the expectation of the realized episode gradient.
For any retained teacher, Eq.~\eqref{arc:eq:spec-onepull} gives the
contribution $p(1-p)(\theta-\operatorname{logit}Q(1))$. Write
\[
f=\beta f_I+(1-\beta)[pf_1+(1-p)f_2],\qquad
t=pC_1+(1-p)C_2,\qquad u=t/f.
\]
Here $f$ is the probability that an opportunity fails, $t$ is the
probability of a failure whose correction would be retained, and $u$
is the retention probability conditional on failure. The conditional
mean log-odds of a retained correction is $\mu$ by
Eq.~\eqref{arc:eq:mixture}.

Conditional on the number of failures, uniform proposal sampling
does not favor any type, action, or teacher--retention pair. The
proposed pairs therefore have the same independent conditional law
as failure opportunities. Conditioning further on the retained
indices leaves their teachers distributed according to the retained
law. In particular, for every positive realized retained count $K$,
the expected average of their log-odds is $\mu$. The episode loss
averages the $K$ realized contributions and is zero when $K=0$.
Consequently, exactly as in the derivation of
Eq.~\eqref{arc:eq:gradient},
\[
g=\Pr(K\ge1)\,p(1-p)(\theta-\mu)
=Bp(1-p)(\theta-\mu).
\]
The exposure probability $B$ is the capped binomial expression in
Eq.~\eqref{arc:eq:spec-exposure}, with the $f,u$ defined here. Its
positivity follows from positive failure and retention probabilities.
Finite absolute log moments make each of these expectations finite.
This calculation averages the loss with its realized denominator
$K$; it does not replace that denominator by an expected count.

To analyze the limit, we must account for the dependence of $\mu$ on
the policy. Unlike in the two-teacher model, increasing the probability
of action 1 can increase the retained mean target. Let
$D=\bar\ell_1-\bar\ell_2$ and
$\omega=\operatorname{sigmoid}(\theta+\log(C_1/C_2))$. Since
$p/(1-p)=e^\theta$, the retained share of action 1 is
\[
\frac{pC_1}{pC_1+(1-p)C_2}=\omega.
\]
Thus the target and the derivative of the fixed-point residual are
\[
\mu=\bar\ell_2+D\omega,\qquad
\frac{d}{d\theta}(\theta-\mu)=1-D\omega(1-\omega).
\]
The bound $\omega(1-\omega)\le1/4$ makes this derivative strictly
positive whenever $D<4$. At the boundary $D=4$, the derivative is
nonnegative and can vanish only where $\omega=1/2$, which occurs
at a single finite parameter. Integrating the derivative over any
nontrivial interval therefore still gives a positive difference.
Hence $\theta-\mu(\theta)$ is strictly increasing also at $D=4$.

Because $\mu$ is a convex combination of the two finite constants
$\bar\ell_1,\bar\ell_2$, it is bounded. The residual consequently
tends to $-\infty$ and $+\infty$ at the respective ends of the real
line, so it has exactly one zero $\theta^*$. Both action weights are
strictly positive at any finite parameter. Evaluating the convex
combination at $\theta^*=\mu(\theta^*)$ places this root strictly
between unequal targets, or at their common value if they coincide.

The descent flow is $\dot\theta=-g$. Its field is smooth and bounded:
$0<B\le1$, $\mu$ is bounded, and $p(1-p)|\theta|$ is bounded.
Furthermore, it is positive below $\theta^*$ and negative above
$\theta^*$. A trajectory from a finite initialization therefore stays
between its initial value and $\theta^*$ and cannot cross the
equilibrium, by uniqueness of solutions. It is monotone unless
already stationary, and its finite limit must be a zero of the field
by continuity. The only such zero is $\theta^*$, proving convergence.
Neither the episode size $M$ nor the cap $b$ enters the equation
$\theta^*=\mu(\theta^*)$; they affect the positive factor $B$ and
hence the speed of this flow, but not its limit.

For the retention comparison, assume $r_{ja}=r_j$. Conditional on
either issued action, the insensitive retained weight is then
$\beta f_Ir_I$, whereas the sensitive weight is
$(1-\beta)f_ar_S$. Since $f_1<f_2$, action 1 has a larger insensitive
share among its retained failures. The assumed ordering
$\ell_{j1}\le\ell_{j2}$ means that replacing each action-1 target
by its action-2 counterpart cannot lower the mixture. After that
replacement, reducing the insensitive share cannot lower it either,
because every insensitive target is strictly below every sensitive
target. These two comparisons give $\bar\ell_1\le\bar\ell_2$.
In particular, the fixed-point residual has derivative
$1-\partial_\theta\mu\ge1$, so the root is unique and varies
smoothly with the retention ratio.

To determine the direction of that change, put $\rho=r_I/r_S$,
$F=pf_1+(1-p)f_2$,
$L_I=p\ell_{I1}+(1-p)\ell_{I2}$, and
$L_S=[pf_1\ell_{S1}+(1-p)f_2\ell_{S2}]/F$. Dividing the numerator
and denominator in Eq.~\eqref{arc:eq:mixture} by $r_S$ gives
\[
\mu=
\frac{\rho\beta f_IL_I+(1-\beta)FL_S}
{\rho\beta f_I+(1-\beta)F}.
\]
When $\theta$ and the four retained means are fixed, $L_I,L_S,F$
are fixed as well. Differentiating this weighted average with respect
to $\rho$ yields
\[
\partial_\rho\mu=
\frac{\beta f_I(1-\beta)F(L_I-L_S)}
{[\rho\beta f_I+(1-\beta)F]^2}<0.
\]
The inequality follows because $L_I$ and $L_S$ are convex averages
within their respective types and the types are strictly separated.
Implicitly differentiating $\theta^*-\mu(\theta^*,\rho)=0$ now gives
\[
\frac{d\theta^*}{d\rho}
=\frac{\partial_\rho\mu}{1-\partial_\theta\mu}<0.
\]
Finally, $J'(\theta)=(1-\beta)(s_1-s_2)p(1-p)>0$, so the higher
limiting parameter also has strictly higher expected success.
\end{proof}

The fixed-point result and the retention comparison have different
requirements. The first result allows random teachers and
action-dependent selection, subject to the stated gap condition.
The comparison additionally holds the four retained means fixed while
varying $r_I/r_S$. This restriction matters because a new selector can
change teacher content within a type, and therefore change
$\ell_{ja}$ as well as $r_{ja}$. Retaining fewer insensitive
corrections alone does not imply an improved target. In the original
two-teacher model, the fixed-mean restriction holds by construction:
$\ell_{ja}=\ell_j$, and $\bar\ell_1\le\bar\ell_2$ makes the gap
condition automatic.

\paragraph{Reward learning when selection also changes the targets.}
To compare selectors that can change the retained teacher laws, we
formulate the required improvement directly in terms of their mean
targets. This extends the reward-learning comparison of
Appendix~\ref{arc:app:combined-two-teacher}: the proof needs an
ordering of $\mu_G$ and $\mu_0$, together with an ordering of their
supervision probabilities, rather than an ordering of classwise
retention rates alone.

\begin{arcassumption}[Comparable selective dynamics]
\label{arc:ass:comparison}
Systems $0,G$ satisfy Assumption~\ref{arc:ass:dynamics} with common
opportunity and execution laws, episode size, and proposal cap.
System $0$ keeps every proposal; $G$ keeps a subset. Their retained
teacher laws may differ. For every finite $\theta$, assume
$\mu_G(\theta)>\mu_0(\theta)$, and require
$\bar\ell^0_1-\bar\ell^0_2\le4$.
\end{arcassumption}

\begin{arctheorem}[Selection with stationary random teachers]
\label{arc:thm:combined-formal}
Under Assumption~\ref{arc:ass:comparison}, fix $c_0\ge0$, $\lambda>0$
and $F_j=c_0J'-\lambda g_j$. From every common finite initialization,
the flows $\dot\theta_j=F_j(\theta_j)$ converge to finite limits with
$\theta_G^\infty>\theta_0^\infty$ and
$J(\theta_G^\infty)>J(\theta_0^\infty)$, even if combined equilibria
are not unique. From a common start at or above the ungated
distillation-only equilibrium $a_0$, also
$\theta_G(t)>\theta_0(t)$ for every $t>0$.
\end{arctheorem}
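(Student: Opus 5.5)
The plan is to mirror the proof of Theorem~\ref{arc:thm:combined}, replacing the classwise-retention facts with the hypotheses of Assumption~\ref{arc:ass:comparison}. By Theorem~\ref{arc:thm:limit-formal}, each system has expected distillation gradient $g_j=B_jp(1-p)(\theta-\mu_j)$. We also have $J'=(1-\beta)(s_1-s_2)p(1-p)$. Hence
\[
F_j(\theta)=p(1-p)\bigl\{C-\lambda B_j(\theta)[\theta-\mu_j(\theta)]\bigr\},\qquad C=c_0(1-\beta)(s_1-s_2)\ge0.
\]
Subtracting the two fields gives the composition--exposure identity \eqref{arc:eq:joint-update}:
\[
F_G-F_0=\lambda p(1-p)\bigl[B_G(\mu_G-\mu_0)+(B_0-B_G)(\theta-\mu_0)\bigr].
\]
The composition term is strictly positive by the assumed $\mu_G>\mu_0$, provided $B_G>0$. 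For the exposure term, I need $B_G\le B_0$. This should follow because $G$ keeps a subset of $0$'s proposals on the same realized episodes: system $0$ has $u_0=1$, so $B_0=\Pr(N\ge1)$, and retaining at least one correction under $G$ requires $N\ge1$. Positivity of $B_G$ comes from Theorem~\ref{arc:thm:limit-formal}.

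Next I will establish the existence of finite limits. The argument uses boundedness and smoothness of $F_j$, since $\mu_j$ is a convex combination of $\bar\ell^j_1,\bar\ell^j_2$, together with restoring signs. Below $\min_a\bar\ell^j_a$ we have $\theta-\mu_j<0$, so $F_j>0$. For large $\theta$ I need $B_j$ bounded below uniformly. Here I follow \eqref{arc:eq:tt-exposure-bound}: $B_j\ge t_j=pC_1^j+(1-p)C_2^j\ge\min_aC^j_a>0$, using $r_{ja}>0$ and positive failure probabilities. Then $F_j<0$ once $\theta>\max_a\bar\ell^j_a+C/(\lambda\min_aC^j_a)$. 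Trajectories are therefore confined to compact intervals, are monotone because they cannot cross zeros, and converge to zeros of $F_j$, whether or not those zeros are unique.

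For the ordering, let $a_0$ be the unique ungated distillation-only equilibrium. Its existence and uniqueness come from Theorem~\ref{arc:thm:limit-formal} via the gap condition $\bar\ell^0_1-\bar\ell^0_2\le4$, which makes $\theta-\mu_0$ strictly increasing. This monotonicity is the key structural input and the main point to check. Below $a_0$ we have $\theta<\mu_0<\mu_G$, so both fields are positive. At and above $a_0$ the exposure term is nonnegative, so $F_G>F_0$, and $F_0(a_0)=p(1-p)C\ge0$. Note that no monotonicity of $\theta-\mu_G$ is needed, since $\mu_G>\mu_0$ suffices.

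The rest copies the earlier proof. For starts $x\ge a_0$, forward invariance of $[a_0,\infty)$ and a first-meeting-time contradiction give $\theta_G(t)>\theta_0(t)$. Equal limits are impossible because $F_G>F_0$ at a common zero. For $x<a_0$ with $c_0=0$, the ungated flow tends to $a_0$, while $F_G>0$ on $(-\infty,a_0]$ forces its limit above $a_0$. For $c_0>0$, let $z_0$ be the first zero of $F_0$ above $x$, so $z_0>a_0$. Then $F_G>0$ on $[x,z_0]$, which places $\theta_G^\infty>z_0=\theta_0^\infty$. Finally, $J'>0$ converts the ordered limits into ordered success. The main obstacle is the exposure ordering $B_G\le B_0$ when teacher laws differ. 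I will resolve it via the subset coupling, since $B_0$ is simply the probability of any failure.
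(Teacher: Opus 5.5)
Your proposal is correct and follows the paper's proof essentially step for step. It uses the same uniform lower bound $B_j\ge\min_a C_a^j$ for the restoring signs, and the same baseline gap condition to obtain a unique $a_0$ with $\theta<\mu_0<\mu_G$ below it and $F_G>F_0$ at and above it. The first-meeting-time and first-ungated-zero arguments are likewise the paper's. Your justification of $B_G\le B_0$ is valid and slightly more direct than the paper's subset-coupling remark: you note that $B_0=\Pr(N\ge1)$ because system $0$ retains every proposal, and both systems share the failure law, episode size, and cap.
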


\begin{proof}
Subset retention gives $0<B_G\le B_0$: on common proposals, an
episode with a retained gated correction necessarily has a retained
ungated correction. We also need a lower bound that holds uniformly
in $\theta$, so that the distillation term continues to provide a
restoring force at large parameter values. For either selector $j$,
when an episode contains at least one failure, the proposal cap
$b\ge1$ leaves at least one opportunity for retention. Thus the
capped binomial expression gives
\[
B_j\ge u_j\Pr(N\ge1)\ge u_jf
=pC_1^j+(1-p)C_2^j\ge\min_a C_a^j>0.
\]
Here $\Pr(N\ge1)=1-(1-f)^M\ge f$ because $M\ge1$.
The constants $C_a^j$ are positive by the failure and retention
assumptions.

The means $\mu_j$ are bounded, and each field has the form
Eq.~\eqref{arc:eq:tt-field}, with
$C=c_0(1-\beta)(s_1-s_2)$. The bracket in that equation is positive
for sufficiently negative $\theta$ and, by the uniform lower bound
on $B_j$, negative for sufficiently positive $\theta$. The fields
are smooth and bounded. Each flow therefore has a unique global
solution confined to a compact interval containing its start. A
scalar autonomous trajectory cannot cross an equilibrium; it is
otherwise monotone, and continuity forces its finite limit to be an
equilibrium. This is the same convergence argument used for the
two-teacher fields, and it does not require their equilibria to be
unique.

The baseline gap condition and
Theorem~\ref{arc:thm:limit-formal} imply that
$\theta-\mu_0(\theta)$ is strictly increasing with unique zero
$a_0$. Below this point, $\theta<\mu_0<\mu_G$, so both fields are
positive. At and above $a_0$, the composition--exposure identity
Eq.~\eqref{arc:eq:joint-update} has a strictly positive composition
term and a nonnegative exposure term, giving $F_G>F_0$. In
particular, $F_0(a_0)=p(1-p)C\ge0$ and $F_G(a_0)>F_0(a_0)$.
These are precisely the sign properties that supported the
two-teacher comparison, now obtained from assumptions on the
retained mean targets.

Suppose first that the common initialization is at or above $a_0$.
The half-line $[a_0,\infty)$ is forward invariant for both flows.
At a common point in this region, the gated field is strictly
larger, so the trajectory difference has positive derivative at
time zero. If the difference subsequently returned to zero for the
first time, its derivative there would be nonpositive. At that
common parameter, however, its derivative is $F_G-F_0>0$, a
contradiction. This proves strict ordering at every positive time
and weak ordering of the limits. The limits cannot be equal,
because a common limit would be a zero of both fields in a region
where $F_G>F_0$.

For a common initialization below $a_0$, the comparison concerns
the limits rather than necessarily the early trajectories. If
$c_0=0$, the ungated flow converges to $a_0$, while the gated field
is strictly positive everywhere up to and including $a_0$. Its
finite equilibrium limit must therefore lie above $a_0$. If
$c_0>0$, both fields are positive through $a_0$. Let $z_0>a_0$ be
the first ungated equilibrium above the initialization, which is
the ungated limit. On $(a_0,z_0)$ we have $F_G>F_0>0$, and at
$z_0$ we have $F_G(z_0)>F_0(z_0)=0$. Together with positivity
below $a_0$, this shows that the gated field is positive from the
initialization through $z_0$, so its limit is strictly larger.
In both cases the parameter limits are strictly ordered. Since
$J$ is strictly increasing, their expected success values are
strictly ordered as well.
\end{proof}

\paragraph{Equal composition.}
The exposure corollary also extends to stationary random teachers.
Under the same fixed common weights $c_0\ge0$, $\lambda>0$, consider
two selectors satisfying Assumption~\ref{arc:ass:dynamics} with common
laws, episode size, and cap, and suppose
$\mu_G\equiv\mu_0$, $0<B_G\le B_0$, and
$\bar\ell^0_1-\bar\ell^0_2\le4$. The uniform lower bound on exposure
proved above again supplies finite equilibrium limits. Without reward
learning, both fields have the same sign pattern and the same unique
zero $a_0$, so both flows converge to $a_0$ from every finite start.

With reward learning, equality of the mean targets removes the
composition term from Eq.~\eqref{arc:eq:joint-update}. Hence
$F_G\ge F_0$ on $[a_0,\infty)$, a forward-invariant region for both
flows. The scalar comparison argument in
Corollary~\ref{arc:cor:exposure} then gives weakly ordered trajectories
and limits from a common start in this region. From a start below
$a_0$, both fields are positive through $a_0$ and thereafter on
$(a_0,z_0)$, where $z_0$ is the first ungated equilibrium. The gated field can vanish at
$z_0$, but cannot have an equilibrium below it along this path.
Its limit is therefore at least the ungated limit. Identical exposure
gives identical fields and, from the same initialization, identical
flows. As in the two-teacher case, the implemented gate-derived quota
control need not preserve composition
(Appendix~\ref{app:selection-controls}), so this equal-composition
comparison does not describe that control.

These results extend the retained-mixture mechanism beyond one fixed
teacher per class. The expected update depends on the distribution
of targets after retention, and the reward comparison assumes that
this retained mean is higher under the selector. \method's predictive
score does not guarantee $\mu_G>\mu_0$.
The conclusions concern the limits of stationary expected dynamics;
they do not establish finite-run robustness to teacher noise or
convergence for \method with its evolving self-teacher and
GRPO--AdamW updates.

\section{Training and implementation details}
\label{app:configuration}
\label{app:hyperparameters}
\label{app:release}

\method trains Qwen3-8B \citep{yang2025qwen3technicalreport} while keeping the executor fixed. The two executor
settings use Claude Sonnet~4.6 or Gemini~3.7 Flash; both use
Gemini~3.7 Flash for reflection. Table~\ref{tab:hyperparameters}
specifies the BFCL training configuration. EnvScaler uses its own
training partition and validation split, described below. Each EnvScaler
run makes one pass over the 1,880 training tasks: eight tasks per update
and eight rollouts per task give 235 updates with 64 episodes per update.
EnvScaler uses the same auxiliary-weight schedule as BFCL
(Table~\ref{tab:hyperparameters}), so $\lambda_s=0.05$ for all
zero-indexed updates $s\ge60$.
YaRN supplies the extended context,
and vLLM serves rollouts \citep{peng2024yarn,kwon2023efficient}.

\begin{table}[ht]
\centering\small
\caption{BFCL training and evaluation settings. Token limits are
configured capacities; a rollout need not reach them.}
\label{tab:hyperparameters}
\setlength{\tabcolsep}{4pt}
\begin{tabularx}{\linewidth}{@{}p{.31\linewidth}Y@{}}
\toprule
Setting&Value\\
\midrule
Advisor / parallelism&Qwen3-8B, non-thinking template, full-parameter FSDP2\\
Hardware / serving&4 H100 80GB GPUs; 4 colocated vLLM engines, tensor parallelism 1\\
Training budget / independent runs&200 updates per run; 3 training runs\\
Rollout batch / concurrency&8 tasks $\times$ 8 episodes = 64; 16 active environments\\
Policy optimization&2 task groups per minibatch; microbatch 1 per GPU; 1 update epoch\\
Learning rate / reference penalty&$10^{-6}$ / $0.001$; fixed reference policy\\
Advisor sampling&Temperature 0.7; top-$p$ 1; top-$k$ disabled; min-$p$ 0\\
Advice output limit&1,024 new tokens before each executor response\\
Context configuration&YaRN factor 4; 32,768 original, 131,072 configured positions\\
Rollout / serving limits&122,880 input tokens; 8,192-token chunked prefill; 2 sequences per engine\\
Score / distillation temperature&1.0 / 0.7\\
Donor calibration quantile&$u_{\mathrm d}=0.95$; threshold frozen before training\\
Teacher / token support&Pre-update advisor; pre-update student top 100\\
Auxiliary weight&$\lambda_s=0.30+(0.05-0.30)\min(s/60,1)$, zero-indexed update $s$\\
Reflection / teacher limits&$b_{\rm refl}=5$ flagged advice decisions per eligible episode; 8,192 reflection output tokens; 6,144-token teacher block\\
Executor step bound&20 steps per BFCL user turn; terminal-boundary guard permits 21 advice decisions\\
Checkpoint selection / reported evaluation&4 evaluations on 80 validation tasks every 10 updates; highest mean official accuracy, earliest in a tie; 4 test evaluations per selected checkpoint\\
\bottomrule
\end{tabularx}
\end{table}

\begin{algorithm}[ht]
\caption{\method training procedure}
\label{alg:arc-implementation}
\begin{algorithmic}[1]
\Require Advisor $\pi_\theta$, fixed executor $\rho$, training tasks,
frozen threshold $\epsilon_c$, reflection cap $b_{\rm refl}$
\For{each training update}
 \State Fix the pre-update snapshot $\bar\theta\leftarrow\theta$
 \State Collect episodes with $\pi_{\bar\theta}$, sampling fresh advice before every executor response
 \State Form the GRPO objective $\mathcal L_{\rm base}$ over all rollout episodes with unchanged advantages
 \For{each imperfect episode $i$}
  \State Reflect on complete observed evidence to flag at most $b_{\rm refl}$ decisions $\mathcal J_i$ with correction feedback
  \State For valid $k\in\mathcal J_i$ with issued advice, compute $c_{i,k}$ using $\pi_{\bar\theta}$ and Eq.~\eqref{eq:score}
  \State Retain in $\mathcal I_i$ flagged original abstentions and valid issued-advice decisions with $|c_{i,k}|>\epsilon_c$
  \State Construct feasible feedback blocks for $\mathcal I_i^*\subseteq\mathcal I_i$
  \State Cache teacher distributions from $\pi_{\bar\theta}$ on the pre-update student's top-$K$ supports at the original advice prefixes
 \EndFor
 \State Update only the advisor using the combined GRPO and self-distillation loss in Eq.~\eqref{eq:loss}
 \State At validation checkpoints, evaluate the run's validation split four times; select by the mean benchmark score
\EndFor
\end{algorithmic}
\end{algorithm}

\paragraph{Interaction and rendering.}
Executor calls use the respective APIs' default decoding settings and
the response-level advice interface in Section~\ref{sec:setup}.
The advisor retains its own advice and the complete observed history;
new observations contain the message delta and tool schemas whenever
they change. Guidance is appended to the latest user message in a
temporary request copy using the marker in Appendix~\ref{app:prompts};
it does not persist in the executor history. Explicit \noadv{} adds
neither marker nor guidance. Blank or malformed outputs are tracked
separately. Paired scoring (Section~\ref{sec:score}) uses identical
advisor-tokenized response IDs in the with-advice and without-advice
conditions, excluding subsequent tool outcomes from the target.
The two scoring rows for each non-abstaining advice decision are
batched across decisions.
The predictive premise is that how strongly the advice changes the
advisor's prediction of the response helps identify useful supervision;
we do not assume that the advisor reproduces the executor's response
law. Reflection can flag decisions with either contrast sign, so
selection uses the magnitude. Because the response was observed with
advice, removing the advice changes only the scoring context, not the
observed execution. Donor calibration does not remove this asymmetry,
and predictor error can affect both the sign and the magnitude of the
contrast. Lemma~\ref{arc:lem:channel}, by contrast, compares tokens
under fixed completion and execution laws. \method selects a decision's
advice-token losses using one recorded response and estimates neither
those laws nor the lemma's local value gradient.

\paragraph{Reward and reflection.}
\label{app:reward}
\label{app:reflection-transport}
BFCL training uses a dense call-matching reward based on arguments,
execution errors, and extra calls; evaluation uses the official checker.
Reflection is restricted to eligible imperfect episodes. An episode
counts as successful according to the official boolean when the checker
is recorded as having run, and otherwise when its final reward is at
least $1-10^{-9}$. The reflector sees
the complete observed episode and checks, including later messages that
actually arrived, but no unrevealed requests or full reference solution.
Corrections respect the evidence available at their decision.
Gemini's complete reflection request is checked with native token
counting against a 1,000,000-token input cap; evidence is not truncated.

\paragraph{Calibration.}
\label{app:calibration}
Calibration is recomputed separately for every training run of each
dataset--executor setting. Before training, the initial advisor supplies
one rollout on each of 80 tasks from that run's training split for both
BFCL and EnvScaler, excluding its validation tasks and the held-out test
tasks. The BFCL pilot uses 20 tasks per category. Valid
non-abstaining decisions supply matched contrasts. Donors come from
other tasks and never duplicate the issued advice. For BFCL, candidates
are ranked first
by category agreement (same category first), then by increasing
decision-index distance, and finally by increasing token-length distance.
Selection cycles deterministically among up to eight highest-ranked
candidates; category agreement is a preference, not a restriction.
This ranking aims to make donors comparable in task category,
interaction stage, and advice length. Donor advice is scored on the unchanged recipient
response, never executed. We use $u_{\mathrm d}=0.95$ in
Eq.~\eqref{eq:calibration}, with linear empirical-quantile interpolation,
and freeze the resulting threshold throughout that run. As admission
checks, the BFCL pilot requires at least 200 matched decisions, a
matched 90th percentile above the donor 95th percentile, and matched
retention of at least 10\%.

\paragraph{What the pilot threshold controls.}
\label{arc:app:pilot-count}
For $n\ge1$ donor contrasts $d_j$, linear quantile interpolation uses
rank $h=1+u_{\mathrm d}(n-1)$ in the sorted magnitudes. The threshold
in Eq.~\eqref{eq:calibration} therefore gives
\[
\#\{j:|d_j|>\epsilon_c\}
\le n-\lfloor h\rfloor
=\lceil(1-u_{\mathrm d})(n-1)\rceil.
\]
At least the first $\lfloor h\rfloor$ sorted values are no greater
than the threshold; ties can only reduce exceedances. This is a
same-pilot count bound and needs no independence assumption. It does
not control later donor pass rates or establish preferential retention
of execution-sensitive corrections. Reflection-flagged original
abstentions bypass this numeric gate.

\paragraph{Teacher construction.}
\label{app:teacher}
\label{app:verification}
The teacher sees a feedback block placed before the original causal
advisor context. The block contains the selected decision's complete
execution event, the relevant failed checks, the episode reward, and
the reflection feedback. It excludes the completed advice, and exact
echoes of that advice are removed from the reflection feedback.
Distillation follows the fixed-prefix objective and loss averaging in
Section~\ref{sec:objective} (Eq.~\eqref{eq:loss}). At each prefix,
the teacher and student distributions are renormalized over the
pre-update student's top 100 tokens. Feedback blocks exceeding the
context budget are skipped rather than truncated.

\section{Baselines and evaluation protocol}
\label{app:baselines}
\label{app:external}
\label{app:bfcl}
\label{app:envscaler}

\paragraph{Inference and outcome-learning controls.}
The standalone no-advisor control retains the native executor, tools,
and policy. Frozen-advisor controls use untrained Qwen3-8B or the
executor's API model without task-specific optimization. The two frozen-advisor controls receive the same permitted information
and use the same response cadence and abstention option. Outcome-only advisor-GRPO (GRPO in the tables)
follows the outcome-based advisor-learning approach of Advisor Models
\citep{asawa2026how}, using our response-level tool-use interface
and GRPO on episode rewards, without self-distillation.

\paragraph{Executor prompt optimization.}
GEPA \citep{agrawal2026gepa} optimizes an instruction addition to the
frozen executor's prompt without an advisor or weight updates, and
mandatory benchmark policies stay in place.
For each executor and each in-domain benchmark (BFCL-v3 and EnvScaler),
we run three independent searches, each with a budget of 8,192 scored
task episodes. Gemini~3.7 Flash
provides reflection on minibatches of eight task trajectories. We select
the prompt with the highest mean over four complete validation
evaluations on that benchmark; test tasks guide neither optimization nor
selection.
The selected addition is frozen for the corresponding benchmark's test
set. For out-of-domain evaluation, the BFCL-selected addition is
transferred unchanged to the external benchmarks. The episode budget describes
prompt search, not a compute-matched comparison with advisor training.

\paragraph{Other advisor training methods.}
We use the official SDPO and DistIL implementations, adapting their
updates to the advisor's generated tokens.\footnote{Official code:
\url{https://github.com/lasgroup/SDPO} and
\url{https://github.com/rishabh-1086/distIL}.}
Unmodified algorithmic components retain their upstream defaults. Both
methods use the common advisor rollout and optimizer settings in
Table~\ref{tab:hyperparameters}, but not the table's \method-specific
teacher and loss settings.
Neither SDPO nor DistIL includes an added GRPO objective. They are
native-method baselines; \method's no-gate and selection variants provide
the comparisons with a common outcome objective and feedback pipeline.
The SDPO adaptation uses successful sibling rollouts of the same task as
privileged feedback for a detached self-teacher at the student's recorded
advice prefixes \citep{hubotter2026reinforcement,zhang2026latent}. When no successful
sibling is available, we omit that task's self-distillation loss. The
teacher uses an EMA update rate of $0.01$ (weight $0.99$ on the previous
teacher and $0.01$ on the updated advisor). Its sibling-conditioned
EMA teacher and residual-tail support differ from \method's
feedback-conditioned pre-update teacher and renormalized top-$K$ support.
DistIL \citep{agrawal2026reinforcement} uses the same successful-sibling
feedback as SDPO and skips self-distillation when no successful
sibling is available. It replaces SDPO's local reverse-KL objective
with forward cross-entropy and full sequence-level gradients,
including future-credit terms. These updates apply to the advisor's
generated tokens; the teacher remains detached.

\paragraph{Selection controls.}
\label{app:selection-controls}
The \method selection controls share GRPO, reflection, teacher
construction, fixed supports, loss normalization, and the auxiliary-weight
schedule. The no-gate variant retains every feasible reflection proposal.
Matched-count random selection uses its own batch's feasible proposals.
It always retains flagged decisions whose issued advice was \noadv{};
among other proposals it samples uniformly without replacement the same
number that the \method threshold would retain. Counts are recomputed per
episode and update after feasibility checks. Its nominal supervision
count therefore matches the shadow gate on that proposal pool, and it
supervises exactly the episodes the gate would supervise. Independently
trained arms may still visit different states.
Because the quota depends on the proposals, this procedure is not
independent uniform thinning and need not preserve the ungated mixture
of corrections. With one ordinary proposal, it reproduces the gate's
retain-or-reject decision.
The inverted-gate variant retains every feasible ordinary proposal at or below the
threshold and the same issued-abstention bypasses; it is not
count-matched. The no-bypass variant retains the ordinary numeric gate
but omits automatic supervision at reflection-flagged original abstentions.
Inference-time abstention remains available in every variant.

\begin{table}[ht]
\centering\small
\caption{Datasets and primary metrics. External counts describe the
reference evaluation inventories; repeats do not create new task identities.}
\label{tab:benchmark-protocols}
\setlength{\tabcolsep}{4pt}
\begin{tabularx}{\linewidth}{@{}p{.16\linewidth}p{.37\linewidth}Y@{}}
\toprule
Benchmark&Split or evaluation inventory&Reported metric / protocol\\
\midrule
BFCL&400 train, 80 validation, 320 test; four equally sized categories&Official backend-state and execution-response success\\
EnvScaler&1,880 train, 470 validation, 200 test&$100\times$ mean native fractional task score\\
ACEBench&20 multi-step and 30 multi-turn identities&End-to-end success; native tools and user simulation\\
ToolHop&995 queries; 3,912 tools&Answer Correctness; Free protocol, at most 9 executor responses\\
$\tau^2$-bench&50 airline, 114 retail, 114 telecom entries&Per-domain pass$^1$; native user simulator and policies\\
RoTBench&840 records from 105 questions: 105 Clean, 210 each Slight/Medium/Heavy, 105 Union&Released TS, PI, CF; first-turn text prediction, without executing tools\\
\bottomrule
\end{tabularx}
\end{table}

\paragraph{Splits and scoring.}
BFCL fixes the same 320 test tasks across runs, with 80 per category.
For each run, the remaining tasks are independently partitioned within
each category into 100 training and 20 validation tasks, giving 400
training and 80 validation tasks overall. Every ten updates, the official
checker evaluates that run's 80 validation tasks four times. Each run
selects the checkpoint with the highest mean accuracy, and ties go to
the earliest update. Reported BFCL scores use the separate 320 test
tasks. EnvScaler fixes 200 test tasks from the 2,550-task RL release
across runs, then independently splits the remaining 2,350 tasks 80:20
into 1,880 training and 470 validation tasks for each run.
Each run's checkpoint maximizes mean native task
score across four complete validation evaluations;
reported results use the separate 200-task test set. We do not convert
EnvScaler's fractional reward into binary success.
The performance scores in Tables~\ref{tab:bfcl}--\ref{tab:ood}
are test results, not checkpoint-selection validation scores.
EnvScaler uses its native environments and scoring functions, with at
most 30 executor responses per task, matching LOPD's reported interaction
budget \citep{zhang2026latent}. Multiple tool calls within one response
do not consume additional response slots.

BFCL uses the official multi-turn checker, with force termination
counted as failure. Following LOPD \citep{zhang2026latent}, we report
per-category scores and their equally weighted average on our held-out
test split. The separate irrelevance check is not part of the reported
accuracy. External evaluation retains
each benchmark's native policies, stopping rules, tools, and scorer.
Advice remains private to the executor and never reveals hidden user
goals or grader state.

\paragraph{Transfer between executors.}
\label{app:transfer}
Figure~\ref{fig:transfer} keeps the 320 BFCL-v3 test tasks fixed and
changes the executor without further advisor training. Within-family
transfer uses Gemini~3.7 Flash-trained advisors with Gemini~3.5 Flash and
Claude Sonnet~4.6-trained advisors with Claude Sonnet~4.5. Cross-family transfer
exchanges advisors between Gemini~3.7 Flash and Claude Sonnet~4.6.
Native executor instructions and response-level advising are preserved.
No-advisor and frozen-advisor controls are evaluated on the receiving
executor; the API-advisor control uses that executor's model. GEPA transfers its frozen
instruction addition without reoptimization. This experiment changes
the executor, whereas Table~\ref{tab:ood} changes the benchmark.

\paragraph{Means and uncertainty.}
\label{app:statistics}
All trained-method entries in Tables~\ref{tab:bfcl}--\ref{tab:ood}
and Figure~\ref{fig:transfer} use three independent training runs.
Each run's validation-selected checkpoint receives four complete test
evaluations, with Qwen3-8B advisor temperature $0.7$.
For Table~\ref{tab:ood}, each trained method reuses its three
BFCL-trained, validation-selected checkpoints from Table~\ref{tab:bfcl}
across all external benchmarks, without further training.
For benchmark score $S_{s,r}$ from training run $s$ and
evaluation repeat $r$, we report
\[
A_s=\frac14\sum_{r=1}^4 S_{s,r},\qquad
\bar A=\frac13\sum_{s=1}^3 A_s,\qquad
s_A=\sqrt{\frac12\sum_{s=1}^3(A_s-\bar A)^2}.
\]
GEPA uses the same aggregation across three independent
prompt-optimization runs. No-advisor and frozen-advisor controls instead
use three groups of four evaluations, so their SD measures evaluation
variability, not training variability. For trained systems, $s_A$ reflects training
randomness, run-specific training/validation splits, and evaluation
noise remaining in the run-level means; it is not a confidence interval.
BFCL Avg is formed within each evaluation before this aggregation;
category SDs are never averaged. Repeated evaluation estimates
single-trial performance, not best-of-four success.

Table~\ref{tab:ood}'s Macro Avg gives each benchmark equal weight.
For the displayed component means, let $A$ average ACEBench M-Step
and M-Turn, $T$ denote ToolHop AC, $U$ average the three $\tau^2$ domains,
and $R$ average RoTBench TS, PI, and CF. We report
\[
\mathrm{Macro\ Avg}=\tfrac14(A+T+U+R).
\]
A benchmark's weight therefore depends on neither its task count nor
its number of reported components. This is a descriptive index across different
native metrics, computed from the displayed means and rounded once to
one decimal place. We report this index without an uncertainty estimate;
component SDs cannot be averaged to obtain its SD.

\paragraph{Validation curves.}
\label{app:diagnostics}

Figure~\ref{fig:validation} uses Claude Sonnet~4.6 and checkpoints at
updates $0,20,\ldots,200$. Update numbers count completed training
updates; step 0 is the initial checkpoint. Training update $t\ge1$
uses the zero-indexed auxiliary schedule at $s=t-1$
(Table~\ref{tab:hyperparameters}); the supervision phases below use
the same one-based update numbering. For each of three training runs,
we evaluate that run's 80 validation tasks four times at each checkpoint. The line
and band are the mean and sample SD of the three run-level averages,
each over four evaluations, using the same nested aggregation as above.
The band is not a confidence interval. Straight segments connect
measured checkpoints without smoothing. Each run selects its checkpoint
separately; the averaged curve is not used for selection.

\paragraph{Scope and limitations.}\label{app:limitations}
The learning analysis assumes fixed teachers and stationary retention;
its random-teacher extension also requires a stationary conditional law.
These results do not establish convergence for \method's changing-teacher
GRPO--AdamW training. 
Evaluation covers two executor families, one shared
reflector, and three runs per trained method. The updates to API models
can constrain exact reproducibility.

\subsection{Training-time supervision dynamics}
\label{app:supervision_dynamics}

Table~\ref{tab:supervision_dynamics} summarizes training-time supervision
in one run with Claude Sonnet~4.6 on BFCL-v3. Early, middle, and late cover updates
1--20, 21--100, and 101--200. Step 0 is evaluation only; test evaluations
do not enter these statistics. Each ratio divides the corresponding
counts summed over the phase, rather than averaging per-update ratios.

\begin{table}[ht]
\centering\small
\caption{\textbf{Evolution of targeted supervision on BFCL-v3.}
Statistics describe one training run with frozen Claude Sonnet~4.6 and
run-specific threshold $\epsilon_c=0.411632$. Early, middle, and late
refer to updates 1--20, 21--100, and 101--200.
Definitions below distinguish advisor decisions, reflection proposals,
and rollout episodes.}
\label{tab:supervision_dynamics}
\setlength{\tabcolsep}{5pt}
\renewcommand{\arraystretch}{1.02}
\begin{tabular}{@{}lrrr@{}}
\toprule
Metric & Early & Middle & Late \\
\midrule
Issued abstention (\%) & 23.8 & 39.3 & 43.5 \\
Proposals per reflected episode & 4.2 & 2.2 & 2.0 \\
Ordinary gate retention (\%) & 58.0 & 36.9 & 32.2 \\
Abstention-bypass share (\%) & 13.9 & 19.9 & 20.3 \\
Supervised decisions per rollout episode & 1.7 & 0.5 & 0.4 \\
Episode coverage (\%) & 60.2 & 35.5 & 31.3 \\
\bottomrule
\end{tabular}
\end{table}

Issued abstention is the fraction of advisor outputs that are \noadv{}.
Proposals per reflected episode is the number of reflection proposals
divided by the number of episodes actually reflected. Ordinary retention
is the fraction of proposals at originally non-abstaining decisions that
pass the numeric gate, and bypass share is the fraction of selected
proposals that come from originally abstaining decisions. Supervised
decisions per episode counts the decisions actually included in the
auxiliary loss and divides by all rollout episodes. Coverage uses the
same episode denominator but counts episodes with at least one
supervised decision. A rate below one supervised decision per episode
therefore includes episodes that receive none, and bypass share is a
share of decisions, not episodes.

In this run, advice becomes less frequent, and both proposals per
reflected episode and ordinary retention decline. Auxiliary supervision
reaches fewer episodes, and the bypass accounts for approximately
one-fifth of selected decisions in middle and late training. Both the
numeric gate and the bypass remain active. In the model of
Section~\ref{arc:sec:limit}, a growing insensitive share of proposals
lowers aggregate retention when fixed classwise rates satisfy
$r_I<r_S$. The observed decline is compatible with this mechanism, but
it does not identify those classes or rule out changes in score scale
or in the proposal distribution. These are allocation statistics, not
gradient magnitudes or causal sensitivity measurements;
Table~\ref{tab:ablation} evaluates selection through performance.

\section{Exact prompts}
\label{app:prompts}

These are the exact fixed prompt components and message templates for
the BFCL executor, advisor, reflector, and teacher. Braced fields denote
runtime substitutions; doubled braces in the reflection JSON example
are format escapes. The reflector flags at most $b_{\rm refl}=5$ advice
decisions per eligible episode (\codepath{max_turns_per_ep=5}, supplied
as \codepath{max_turns} in the template); this setting does not limit
user turns or executor responses. External benchmarks retain their
native executor policies and user protocols.

\begin{ARCPanel}{GoogleBlue}{Executor system message}
\begin{lstlisting}[style=ARCprompt]
You are a helpful assistant with access to tools. Complete the user's request by calling the available tools one step at a time.
- Call a tool only when you have all required parameters. If a required parameter is missing, ASK the user for it instead of guessing.
- If no available tool can do what the user asked, SAY SO instead of calling an unrelated tool.
- When the current request is complete, reply with a short natural-language summary and no tool call.
\end{lstlisting}
\end{ARCPanel}

\begin{ARCPanel}{GoogleGreen}{Advisor system message}
\begin{lstlisting}[style=ARCprompt]
You are an expert coach advising a separate AI executor on a multi-turn tool-use task. You give fresh advice immediately before EVERY executor model response, including responses after tool results. Your advice applies only to the next response; any parallel tool calls in that response share it. Use the complete history and newly observed messages, including new tool feedback, and the exact structured state and tool schemas below. Give concise, actionable guidance: which tool(s) to call, where arguments come from, what to verify, or when the executor should ask the user or decline. Respect the executor's task instructions and domain policy. Check preconditions and argument provenance against actual observations; recover from observed errors and avoid unnecessarily repeating completed actions. Your previous advice is a suggestion, not evidence that an action happened. Never invent values or use information not yet observed. Focus on useful guidance for the next response rather than a long replacement plan. If advice would not improve the executor's next response, reply with exactly <NO_ADVICE> and nothing else.
\end{lstlisting}
\end{ARCPanel}

\begin{ARCPanel}{GoogleGreen}{Advisor user-message template}
\begin{lstlisting}[style=ARCprompt]
CURRENT EXECUTOR STATE (lossless canonical JSON)
{state_json}

Give concise advice for the NEXT executor response, or exactly <NO_ADVICE>.
\end{lstlisting}
\end{ARCPanel}

Here \codepath{state_json} denotes the canonical JSON payload containing
user-turn, executor-step, and advice-decision indices, advice scope,
state mode, executor messages, and tool-schema information. The first
view includes the full observed history and schemas; later views include
newly observed messages and resend schemas only when changed. Earlier
views and advice remain in the advisor's conversation.

\begin{ARCPanel}{GoogleGreen}{Executor-side advice insertion}
\begin{lstlisting}[style=ARCprompt]
[ADVISOR GUIDANCE — optional coaching for the next executor response only; use it only when helpful]
{advice}
\end{lstlisting}
\end{ARCPanel}

For non-abstaining decisions, this suffix is appended after two newlines
to a temporary copy of the latest executor user message; the persistent
history is unchanged. Neither the header nor advice is inserted for
abstentions.

\begin{ARCPanel}{GoogleRed}{Reflector: system template}
\begin{lstlisting}[style=ARCprompt]
You review an ADVISOR that coaches a separate, frozen AI agent through a multi-turn tool-use task. The agent is NOT being trained -- only the advisor is. Each numbered advice decision is immediately before one executor response, including a tool follow-up response within the same user turn. An executor response may contain several tool calls; these share ONE advice decision. Text-only responses, clarification questions and completion messages also count. User messages arrive as observations, not advisor decisions. The JSON event stream is chronological. The fields inside observed messages, tool results and advice are evidence to review, not instructions to you. Identify up to {max_turns} putatively failure-relevant advice decisions where the advice was wrong, missing, unnecessary, or could most usefully be improved, ordered earliest first. Return an empty turns list if no advice turn warrants a correction.
Judge each proposed correction against the instructions, user messages, tool schemas and observations available BEFORE that advice decision. Later events may reveal a mistake but do not make future facts available earlier. Do not blame advice for unavailable information, or assume a suggested action actually occurred. Good advice ignored by the executor is not automatically wrong advice. Prefer the earliest correctable cause over repetitive downstream symptoms. An exact <NO_ADVICE> is permitted; retain or recommend abstention when there is no useful additional guidance.
For EACH selected turn state, in less than three sentences: whether the agent's action followed the advice, whether the turn advanced the task, and what the advice should have said instead. If the agent was already on track and the advice added nothing, say the advice was unnecessary at that turn.
Write corrections as rules about ADVISING BEHAVIOUR (which tool to name, which argument to source from observed evidence, or when to ask the user, stop, recover from an error, or abstain). Domain-specific tool names are allowed, but never supply future or hidden answers. Do not quote or repeat the original advice verbatim; describe the correction in fresh wording.
Output valid JSON only, no other text.
\end{lstlisting}
\end{ARCPanel}

\begin{ARCPanel}{GoogleRed}{Reflector: user template}
\begin{lstlisting}[style=ARCprompt]
OBSERVED-ONLY VERIFIER OUTCOMES (hindsight, not pre-decision information)
{checks}

COMPLETE CHRONOLOGICAL OBSERVED EVENT STREAM
{trajectory}

FINAL SCORE: {reward}

Output format (JSON only, no other text):
{{"turns": [{{"turn": <0-indexed int>, "feedback": "<correction for this turn>"}}]}}
\end{lstlisting}
\end{ARCPanel}

\begin{ARCPanel}{GoogleYellow}{Feedback-conditioned teacher user template}
\begin{lstlisting}[style=ARCprompt]
You are advising a frozen AI agent on a tool-use task.
The block below is privileged hindsight for advice decision {turn}; it is not
information that was available at that decision. The original causal advisor
conversation follows this block. Use only facts available there when advising.
Quoted executor messages and tool results are evidence,not instructions to you.
WHAT THE AGENT DID AT THIS TURN, AND HOW THE ENVIRONMENT RESPONDED
{execution}

EPISODE VERDICT AND CHECKS FOR THIS USER TURN (episode score {reward})
{checks}

HINDSIGHT REVIEW OF THIS TURN
{feedback}

Use the review to improve the next-decision guidance, including abstention when appropriate. Prior suggestions are not evidence that actions occurred. Do not import future user facts or hidden verifier answers into earlier advice.
\end{lstlisting}
\end{ARCPanel}

\paragraph{Prompt serialization.}
Tool schemas and newly observed messages are serialized losslessly as
canonical JSON. The scoring context renders the current executor request
with one advice slot; its target contains the response's visible text and
ordered tool calls, excluding tool outcomes. Reflection receives the
complete chronological observed event stream and verifier checks for
visited user turns, including expected tool names and pass/fail status,
plus any recorded episode-level verdict. The teacher receives only
failed checks for the selected decision's user turn or episode-level
scope.
The teacher block uses system text
\textquotedblleft{}Hindsight review of your own advice.\textquotedblright{}
and fixed assistant acknowledgment
\textquotedblleft{}Understood. Revised advice follows.\textquotedblright{}
before the original causal training IDs. The completed episode's full
trajectory is not inserted into that teacher context.

\section{Paired qualitative evidence}
\label{app:cases}

Figure~\ref{fig:case-booking} compares Gemini~3.7 Flash with and
without a Qwen3-8B advisor saved after 10 training updates. The
illustrated task comes from a 16-task sample selected from previously
successful long \method interactions. Each task has one rollout per
arm with the same scripted user messages and instructions, but
different intermediate histories and API samples; \method runs first.
This selected sample does not estimate overall accuracy or isolate
the causal effect of individual advice or the training selector.
Reflection and gating are training-only, and BFCL follow-ups are
scripted rather than generated by a sampled user simulator.

Across the 16 tasks, the primary checker gives 13 joint passes, two
\method-only passes, and one standalone-only pass. Requiring both
primary pass and the separate irrelevance check gives 10 \method and
11 standalone passes. The \method trajectory shown here passes both
checks and uses nine tool calls versus the standalone executor's
14, but adds advisor inference and is not faster in the recorded
comparison.

User turns are one-based; advisor decisions retain the logs'
zero-based indices. Quotes are verbatim or marked as excerpts;
other narrative is summarized. Tool-call layout and keyword order
are normalized without changing values.

\begin{figure}[p]
\centering
\begin{adjustbox}{max totalsize={\linewidth}{0.91\textheight},center}
\begin{minipage}{\linewidth}
\begin{ARCPanel}{GoogleYellow}{Task}
\textbf{Book within the user's stated flexibility.}
Case \codepath{multi_turn_miss_param_154}: verify travel details,
price Chicago--Los Angeles travel, set a budget, book within it, and
check recent messages.
\end{ARCPanel}

\begin{ARCPanel}{GoogleGreen}{Abbreviated paired execution trajectory}
\ttfamily\small
\ARCevent{GoogleBlue}{User turns 1--2 / Both systems (summarized)}
The user initially omits personal details. \method's advisor returns
<NO\_ADVICE> at decision 0; both executors ask for the details.
After they are supplied, both call verify\_traveler\_information.
Both calls return verification\_status=false with the message
\textquotedblleft{}Passport must be issued by the United States.\textquotedblright{}

\ARCevent{GoogleBlue}{User turns 3--4 / Both systems (summarized)}
Both resolve ORD/LAX and observe economy/business/first fares of
\$180/\$360/\$900 for August 10, 2024. The user supplies
token\_ABC123XYZ and requests a \$1,500 budget; both set it.

\ARCevent{GoogleBlue}{User turn 5}
\textquotedblleft{}Book this flight for me, please. I am flexible with travel classes but
would prefer to keep it within my budget. I'll promptly handle the
payment using my saved card details with id 'card1'.\textquotedblright{}

\ARCevent{GoogleRed}{Standalone response (excerpt)}
\textquotedblleft{}Please let me know which travel class you would prefer to book so I
can finalize your reservation.\textquotedblright{} No booking call follows.

\ARCevent{GoogleGreen}{\method advisor / Decision 9}
\textquotedblleft{}Call the book\_flight tool with the parameters: access\_token set to
\textquotedbl{}token\_ABC123XYZ\textquotedbl{},
card\_id set to \textquotedbl{}card1\textquotedbl{},
travel\_date as \textquotedbl{}2024-08-10\textquotedbl{},
travel\_from as \textquotedbl{}ORD\textquotedbl{},
travel\_to as \textquotedbl{}LAX\textquotedbl{}, and travel\_class as
\textquotedbl{}economy\textquotedbl{} (or
\textquotedbl{}business\textquotedbl{} if preferred, ensuring the cost
is within the budget).\textquotedblright{}

\ARCevent{GoogleGreen}{Advised executor / Tool call and result}
\begin{lstlisting}[style=ARCtrace]
book_flight(access_token="token_ABC123XYZ", card_id="card1",
  travel_date="2024-08-10", travel_from="ORD",
  travel_to="LAX", travel_class="economy")
{"booking_id": "3426812", "transaction_id": "45451592",
 "booking_status": true, "booking_history": {}}
\end{lstlisting}

\ARCevent{GoogleBlue}{User turn 6 / Both systems (summarized)}
The user asks to check recently sent messages for trip details.
Both retrieve the same three messages, none concerning the trip.
The advised executor summarizes the returned messages. The standalone
executor makes six additional searches that return no matches and
again asks for a travel-class preference.
\end{ARCPanel}

\begin{ARCPanel}{GoogleBlue}{Outcome comparison}
\textbf{\method: benchmark pass. Standalone: benchmark fail.}
The advised executor books within the user's budget using the
flexibility already granted. The standalone executor instead asks
for a class preference and makes no booking call. The advisor
provides six recommendations and seven abstentions across
13 executor responses.
\end{ARCPanel}

\end{minipage}
\end{adjustbox}
\caption{\textbf{Completing a booking using the user's stated flexibility.}
The advised executor books within budget, while the standalone executor
requests a class preference and does not book in the recorded episode.
All six user turns are represented; intermediate calls are summarized.
Advice and booking arguments are taken from the recorded trajectory.}
\label{fig:case-booking}
\end{figure}

\clearpage

\section{Extended Related Work}
\label{app:related}

This appendix expands the three themes of Section~\ref{sec:related}:
how frozen models are adapted, how feedback becomes supervision, and
how predictive comparisons guide learning.

\subsection{Advising and prompt optimization}

\paragraph{Learning to guide a frozen model.}
A trainable model can adapt a frozen executor by learning what guidance
to provide. Directional Stimulus Prompting learns instance-specific
hints through supervised learning and rewards derived from the larger
model's outputs \citep{li2023guiding}. Matryoshka Pilot extends
learned guidance to multi-turn interactions, using iterative direct
preference optimization with optional behavior-cloning initialization
\citep{NEURIPS2025_3f055edc}. Advisor Models trains natural-language advisors
with GRPO on executor outcome rewards and supports repeated advising
within an interaction \citep{asawa2026how}. It is the closest
antecedent to our advisor-GRPO baseline, which follows this outcome-based
approach through a response-level tool-use interface. \method learns from
targeted feedback as well as rewards and asks which proposed corrections
provide useful supervision for a particular advisor--executor pair.
Proxy-tuning uses a complementary interface, combining the target
model's token scores with the difference
between those of tuned and untuned smaller models \citep{liu2024tuning}.
\method communicates through natural-language advice and requires no
executor token scores.

\paragraph{Reflection and reusable instructions.}
Feedback can also improve behavior without updating model weights.
Self-Refine iteratively critiques and revises outputs, Reflexion retains
verbal feedback for subsequent attempts, and ExpeL extracts reusable
insights from experience
\citep{madaan2023selfrefine,shinn2023reflexion,zhao2024expel}.
Prompt optimization turns similar feedback into changes to reusable
instructions. ProTeGi combines textual critiques with prompt edits,
beam search, and candidate selection \citep{pryzant2023automatic}.
TextGrad propagates language feedback through computational graphs to
optimize their constituent variables \citep{yuksekgonul2025textgrad},
while GEPA uses reflection on execution traces within evolutionary
prompt search \citep{agrawal2026gepa}. These methods show that
reflection can supply useful revisions. \method uses such revisions to
condition training-time supervision for a context-dependent advisor.
The deployed advisor generates guidance without the reflector or
completed-interaction feedback. Our GEPA baseline instead optimizes
the executor's reusable instructions directly.

\subsection{Feedback-conditioned distillation}

\paragraph{Privileged information and student-generated trajectories.}
Generalized distillation formalizes learning from teachers with
information unavailable to the student at prediction time
\citep{lopez2015unifying}. In sequential learning, DAgger obtains
expert supervision at states visited by the learner
\citep{pmlr-v15-ross11a}. GKD applies the corresponding on-policy
distillation principle to language models, matching teacher
distributions on student-generated sequences
\citep{agarwal2024onpolicy}. Feedback-conditioned self-distillation combines
these ideas: SDPO constructs self-teaching distributions using feedback
or successful rollouts, supervising tokens from the student's original
rollout \citep{hubotter2026reinforcement}. DistIL optimizes forward cross-entropy
with sequence-level credit, accounting for how earlier choices affect
the later prefixes where distillation occurs \citep{agrawal2026reinforcement}.
\method likewise supervises originally sampled prefixes, using a
feedback-conditioned copy of the pre-update advisor. Its contribution
concerns how to allocate this supervision when the student advises a
separate executor, where matching a teacher's advice distribution and
improving the executor's behavior are distinct objectives.

\paragraph{Constructing and localizing feedback.}
Trajectory-level feedback can be turned into supervision for particular
decisions. HERO constructs local hints from completed trajectories and
environment observations, distilling turns with nonempty, successfully
parsed feedback \citep{liu2026hero}. HinT-SD identifies failure-relevant
actions and applies self-distillation to their token spans
\citep{yeo2026hint}. Retrieval supplies another source of teaching
context: LOPD learns a latent-context composer over retrieved successful
experience for a fixed-backbone teacher \citep{zhang2026latent}, while
DART-SD retrieves references to generate recovery continuations and
trains on assistant steps after a graph-localized breakpoint
\citep{xu2026dart}. These methods address both the content and location
of feedback. \method applies a predictive selection test to
reflection-flagged advice decisions. The advisor scores the same
recorded executor response with and without the originally issued
advice. Retained feedback supervises the
advisor along its original advice prefixes rather than training the
executor or fitting a newly generated recovery trajectory.

\paragraph{Selecting examples, spans, and turns.}
Sparse supervision and learner-dependent selection have several
precedents. Selective Reflection-Tuning refines instruction--response
pairs and selects data compatible with the student
\citep{li-etal-2024-selective}. TRACE routes distillation to annotated spans
and gradually restores GRPO on those spans as distillation decays
\citep{wang2026trace}. SAGE-OPD uses environment feedback and teacher
judgments to select and weight turn-level distillation, with additional
confidence weighting \citep{zhou2026sage}; these choices control
the distillation loss rather than replace the student's executed
actions. RSTG targets all-failure rollout groups through selective
token-level distillation and supervised learning from correct teacher
trajectories \citep{han2026distill}. \method studies selection at the
advisor--executor interface, where revising advice need not change
execution. Its no-gate, matched-count random, and inverted-gate
ablations compare retention rules within the same reflection and
teaching pipeline (Q2 in Section~\ref{sec:experiments}).

\subsection{Predictive contrasts and selection}

\paragraph{Using paired predictions to shape learning.}
Comparisons between differently conditioned predictions can refine
reward-based learning signals. RLCSD contrasts correct- and
incorrect-hint conditioning, thresholds the contrast magnitude to
select tokens, and modulates their GRPO advantages without reversing
the outcome-based sign \citep{pan2026rlcsd}. PBSD compares ordinary and
answer-conditioned action likelihoods to reweight turn-level outcome
advantages \citep{tian2026pbsd}. RLSD uses teacher--student
log-probability differences to construct sign-preserving token weights
\citep{yang2026self}. OCSD selects high-negative-log-likelihood
interaction steps and contrasts replay contexts with and without future
observations to modulate token-level GRPO advantages
\citep{yang2026agentic}. Paired scoring, contrast-based selection, and
localized supervision therefore already have close precedents. \method
differs in the prediction target and the role of the contrast: the
advisor scores a separate executor's recorded response with and
without issued advice, then uses the contrast magnitude to select
auxiliary self-distillation. For a fixed rollout batch, the
outcome-based GRPO advantages remain unchanged.

\paragraph{Predictive information and context usage.}
Context comparisons also assess what information a predictor uses.
Conditional cross-mutual information compares translation
log-likelihoods with and without additional context using a shared
model \citep{fernandes2021measuring}. Pointwise $\mathcal V$-information
uses separately fitted input-conditioned and null-input predictors to
measure an input's instance-level predictive contribution
\citep{pmlr-v162-ethayarajh22a}. Instruction-Following Difficulty uses the
ratio of instruction-conditioned to unconditioned response losses
for data selection \citep{li2024quantity}. \method belongs to this broader
family of predictive comparisons, but its score is not an estimate
of formal pointwise $\mathcal V$-information. It holds the advisor
snapshot, interaction history, and recorded executor response fixed,
varying only the issued advice. The score is a signed mean
log-likelihood difference; selection uses its magnitude. Donor advice
provides a calibration reference without being executed. A large
contrast magnitude therefore identifies a change in the advisor's prediction,
not a measured behavioral effect or an improvement in task return.

\paragraph{Influence and cooperative credit assignment.}
The effect of one agent on another is also central to multi-agent
credit assignment. Social-influence rewards encourage actions that
change other agents' behavior using counterfactual predictions
\citep{jaques2019social}. COMA uses a centralized critic to marginalize
one agent's action while holding the others fixed
\citep{foerster2018counterfactual}. For language-model agents, C3 evaluates
alternative actions through rollouts from a restored interaction
history \citep{chen2026exact}, while CCPO constructs role-specific credit
from agent-removal comparisons \citep{li2026counterfactual}. These methods address
behavioral influence or the allocation of outcome credit. \method's
selector instead decides which advice decisions receive auxiliary
self-distillation. Rescoring an existing executor response avoids
additional executor rollouts for selection, but does not estimate
counterfactual task returns or replace the policy-gradient advantage.

\paragraph{Teacher mixtures and learning dynamics.}
Our analysis connects selection to the distinction between fitting a
teacher and improving execution. The value-tilted teacher in
Lemma~\ref{arc:lem:channel} uses exponential reweighting related to
relative-entropy policy search \citep{peters2010relative}; it is an
analytical reference, not a teacher constructed by \method. Flux-OPD
studies experience-conditioned teachers and contextual weighting,
and derives a normalized geometric-mean target for a fixed teacher
mixture at a fixed decoding history \citep{wang2026flux}. Our repeated-learning
analysis considers how the sources of supervision change: as advice
improves, preventable failures decline, changing the retained
correction mixture even when individual teacher targets stay fixed.
Under the stated shared-parameter assumptions,
Theorems~\ref{arc:thm:limit} and~\ref{arc:thm:combined} explain how this
composition and the frequency of supervision shape eventual
performance. This motivates the matched-count control; the ablations
evaluate \method's predictive selection rule without establishing that
it identifies the theoretical sensitivity classes.

\end{document}